\documentclass[11pt]{amsart}

\usepackage{hyperref}

\usepackage{graphicx, enumerate, url}
\usepackage{amssymb,mathtools}
\usepackage{algorithm}
\usepackage{algpseudocode}
\usepackage{color}
\usepackage{tikz}
\usetikzlibrary{3d,calc}

\numberwithin{equation}{section}
\numberwithin{algorithm}{section}

\theoremstyle{plain}
\newtheorem{theorem}{Theorem}[section]

\newtheorem{lemma}[theorem]{Lemma}
\newtheorem{corollary}[theorem]{Corollary}

\theoremstyle{definition}

\theoremstyle{remark}

\DeclareMathOperator{\tr}{Tr}
\DeclareMathOperator{\diag}{diag}

\DeclareMathOperator{\V}{V}

\DeclareMathOperator{\argmax}{arg\,max}
\DeclareMathOperator{\argmin}{arg\,min}


\newcommand\defeq{\stackrel{\mathclap{\normalfont\mbox{def}}}{=}}
\newcommand{\N}{\mathbb{N}}

\newcommand{\Q}{\mathbb{Q}}
\newcommand{\R}{\mathbb{R}}
\newcommand{\Z}{\mathbb{Z}}
\newcommand{\s}{\mathbf{s}}
\newcommand{\g}{\mathbf{g}}
\newcommand{\x}{\mathbf{x}}

\newcommand{\y}{\mathbf{y}}
\newcommand{\z}{\mathbf{z}}
\newcommand{\U}{\mathbf{U}}
\renewcommand{\V}{\mathbf{V}}

\newcommand{\cA}{\mathcal{A}}
\newcommand{\cB}{\mathcal{B}}
\newcommand{\cC}{\mathcal{C}}
\newcommand{\cD}{\mathcal{D}}
\newcommand{\cF}{\mathcal{F}}
\newcommand{\cG}{\mathcal{G}}

\newcommand{\cJ}{\mathcal{J}}
\newcommand{\cK}{\mathcal{K}}
\newcommand{\cQ}{\mathcal{Q}}

\newcommand{\cU}{\mathcal{U}}
\newcommand{\cV}{\mathcal{V}}
\newcommand{\cW}{\mathcal{W}}
\newcommand{\cX}{\mathcal{X}}
\newcommand{\cY}{\mathcal{Y}}
\newcommand{\ba}{\mathbf{a}}
\newcommand{\bb}{\mathbf{b}}
\newcommand{\bc}{\mathbf{c}}

\newcommand{\bp}{\mathbf{p}}
\newcommand{\bq}{\mathbf{q}}
\newcommand{\br}{\mathbf{r}}
\newcommand{\bt}{\mathbf{t}}
\newcommand{\bu}{\mathbf{u}}
\newcommand{\bv}{\mathbf{v}}
\newcommand{\bw}{\mathbf{w}}
\newcommand{\bx}{\mathbf{x}}
\newcommand{\by}{\mathbf{y}}

\newcommand{\rB}{\mathrm{B}}
\newcommand{\rC}{\mathrm{C}}

\newcommand{\rL}{\mathrm{L}}
\newcommand{\rS}{\mathrm{S}}
\newcommand{\rU}{\mathrm{U}}

\newcommand{\0}{\mathbf{0}}
\newcommand{\1}{\mathbf{1}}
\newcommand{\trans}{^\top}


\begin{document}
\title[TOT, distance between measures and tensor scaling]{Tensor optimal transport, distance between\\ sets of measures and
tensor scaling}
\author[Shmuel Friedland]{Shmuel~Friedland}
\address{Department of Mathematics and Computer Science, University of Illinois at Chicago, Chicago, Illinois, 60607-7045, USA }
\email{friedlan@uic.edu}
\date{July 23, 2021}
\begin{abstract}  
We study the optimal transport problem for $d>2$ discrete  measures. This is a linear programming problem on $d$-tensors.   It gives a way to compute a ``distance'' between two sets of discrete measures. 
We introduce an entropic regularization term, which gives rise to a scaling of tensors.  We give a variation of the celebrated Sinkhorn scaling algorithm.  We show that this algorithm can be viewed as a partial minimization algorithm of a strictly convex function.  Under appropriate conditions the rate of convergence is geometric, and we estimate the rate. Our results are generalizations of known results for the classical case of two discrete measures.
\end{abstract}

\keywords{Tensor optimal transport, entropic relaxation, distance between two sets of discrete  measures,  scaling of nonnegative tensors, partial minimization of strict convex functions, Sinkhorn algorithm}

\subjclass[2010]{15A39, 15A69, 52A41, 62H17, 65D19, 65F35, 65K05, 90C05, 90C25}
\maketitle
\section{Introduction} \label{sec:intro}
For $i\in\{1,2\}$ let $X_i$ be a random variables on $\Omega_i$ which has a finite number of values: $X_i:\Omega_i\to [n_i]$, where $[n]\defeq\{1,\ldots,n\}\subset \mathbb{N}$.  
 Assume that $\bp_i=(p_{1,1},\ldots,p_{n_i,i})$ is the column probability vector that gives the distribution of $X_i$: $\mathbb{P}(X_i=j)=p_{j,i}$.
Then the discrete Kantorovich optimal transport problem (OT) can be states as follows \cite{Kan}.  (See \cite{Vil03,Vil09} for modern account of OT.)  Let $Z$ be a random variable $Z: \Omega_1\times \Omega_2\to [n_1]\times[n_2]$ with contingency matrix (table)  $U\in\R_+^{n_1\times n_2}$ that gives the distribution of $Z$:
$\mathbb{P}(Z=(j_1,j_2))=u_{j_1,j_2}$.  
(Here $\R_+=[0,\infty), \R_{++}=(0,\infty)$.)
Let $P=(\bp_1,\bp_2)$ and  $\rU(P)$ be the convex set of all probability matrices with marginals $\bp_1,\bp_2$:
\begin{eqnarray*}
\rU(P)=\{U=[u_{j_1,j_2}]\in\R_+^{n_1\times n_2}, \sum_{j_2=1}^{n_2} u_{j_1,j_2}=p_{j_1,1}, \quad \sum_{j_1=1}^{n_1} u_{j_1,j_2}=p_{j_2,2}\}.
\end{eqnarray*}
Let $C=[c_{i_1,i_2}]\in \R^{n_1\times n_2}$ be the cost matrix of transporting a unit $j_1\in[n_1]$ to $j_2\in[n_2]$.  Then the optimal transport problem is the linear programming problem (LP): 
\begin{eqnarray*}
\tau(C,P)=\min\{\langle C,U\rangle, U\in \rU(P)\}. 
\end{eqnarray*}
(Here $\langle C,U\rangle=\tr C^\top U$.) 
Thus  $\tau(C,P)$ measures a ``distance'' between probability vectors $\bp_1$ and $\bp_2$, which can be viewed as two histograms, with respect to the cost matrix $C$. 
It turns out that  $\tau(C,P)$ has many recent applications in machine learning \cite{AWR17,ACB17,LG15,MJ15,SL11}, statistics  \cite{BGKL17,FCCR18,PZ16,SR04} and computer vision \cite{BPPH11,RTG00,SGPCBNDG}.

Assume that $n_1=n_2=n$.  Then the complexity of finding $\tau(C,P)$ is $O(n^3 \log n)$, as this problem can be stated in terms of flows \cite{PW09}.  In applications, when $n$ exceeds a few hundreds,
the cost is prohibitive.  One way to improve the computation of $\tau(C,\bp_1,\bp_2)$ is to replace the linear programming with problem of OT with convex optimization by introducing an entropic regularization term as in \cite{Cut13}.  This regularization terms gives an $\varepsilon$-approximation to $\tau(C,P)$, where $\varepsilon>0$ is given.  The regularization term gives almost linear time approximation using a variation of the celebrated Sinkhorn algorithm for matrix diagonal scaling \cite{AWR17}.  

The aim of this paper is to lay down the framework for measuring the ``distance'' between 
two sets of probability measures $\bp_1,\ldots,\bp_l$ and $\bp_{l+1},\ldots,\bp_{d}$ two positive integers $l,m=d-l$.  In the most general case $\bp_i\in\R_+^{n_i}$, where $i\in[d]$.  To simplify the notation we are going to assume in this paper, unless stated otherwise, that $n_1=\cdots=n_d=n$.  Denote by $P$ the matrix $(\bp_1,\ldots,\bp_d)\in\R_{+}^{n\times d}$.
For the case $d=2$ the set $\rU(P)$ was the set of probability matrices $U\in \R_+^{n\times n}$ satisfying the marginal conditions $U \1=\bp_1, U^\top \1=\bp_2$.  Here $\1$ is the vector whose coordinates are all $1$.
For $d\ge 3$ we introduce $d$-mode tensors $\otimes ^d \R^n$.  We denote by $\cU\in  \otimes ^d \R^n$ a tensor whose entries are $u_{i_1,\ldots,i_d}$, i.e., $\cU=[u_{i_1,\ldots,i_d}]$.  Assume that $\cC,\cU \in  \otimes ^d \R^n, \cX\in \otimes^{d-1}\R^{n}$.   Denote by $\langle \cC,\cU\rangle$ the Hilbert-Schmidt inner product $\sum_{i_1,\ldots,i_d\in[n]} c_{i_1,\ldots,i_d}u_{i_1,\ldots,i_d}$.  For $k\in [d]$ denote by  $\y=\cU\times_k \cX\in \R^n$ the contraction on all but the index $k$:
$y_{i_k}=\sum_{i_j\in[n], j\ne k}u_{i_1,\ldots,i_d} x_{i_1,\ldots,i_{k-1},i_{k+1},\ldots,i_d}$.
Let $\cJ_{d-1}\in\otimes^{d-1}\R^n$ be the tensor whose all coordinates are $1$.
Define
\begin{eqnarray*}
\rU(P)=\{\cU\in \otimes^d\R_+^n, \cU\times_k \cJ_{d-1}=\bp_k, k\in[d]\}, \quad P=(\bp_1,\ldots,\bp_d).
\end{eqnarray*}
Then the tensor optimal transport problem (TOT) is
\begin{eqnarray}\label{TOT}
\tau(\cC,P)\defeq \min\{\langle \cC, \cU\rangle, \cU\in \rU(P)\}.
\end{eqnarray}
TOT problem is a LP problem with $n^d$ nonnegative variables and $d(n-1)+1$ constraints. 

The TOT was considered in \cite{Pi68,Po94} in the context of multidimensional assignment problem, where the entires of the tensor $\cU$ are either $0$ or $1$.
There is a vast literature on continuous multidimensional optimal transport problem.
See for example \cite{FV18} and the references therein. 
The TOT problem can be viewed as a discretization the  continuous multidimensional optimal transport problem.

The main result of this paper is to give an approximate solution $\cB\in \rU(P)$ to TOT that satisfies the inequality
\begin{eqnarray*}
\langle C,\cB\rangle\le  \tau(\cC,P)+\delta,
\end{eqnarray*}
for a given $\delta>0$.  The number of operations to find $\cB$ is $O(\frac{\omega^3d^4 n^{d+1}\log n}{\delta^3})$.  Here $\omega$ is the difference between the maximum and the minimum of the entries of $\cC$.  Our results are generalizations of the corresponding results in \cite{Cut13,AWR17} for the matrix case $d=2$.

We expect that our results will find various applications in the problems that need to compare two sets of discrete measures.
\subsection{Summary of the paper}\label{subsec:sump}   In Subsection \ref{subsec:Was} 
we introduce the notion of the Wasserstein distance between two ordered sets of probability vectors, each containing $d/2$ discrete measure, with $d\ge 4$.  As in the case of $d=2$ it can be stated in terms of TOT on $d$-mode tensors. 
We discuss briefly the problem of giving  the Wasserstein distance between two unordered sets of probability vectors in terms of TOT.  In Subsection \ref{subsec:entrel} we discuss the entropic relaxation of TOT:
\begin{eqnarray*}
\min\{\langle \cC,\cU\rangle-(1/\lambda) H(\cU), U\in\rU(P)\}.
\end{eqnarray*}
Here $\lambda>0$ and $H(U)$ is the entropy of a probability tensor.  We prove a generalization of the result in \cite{Cut13} for matrices.  Namely minimum of the entropic relaxation is achieved at the unique tensor $\cU$ which is a scaling of the positive tensor $\exp(-\lambda\cA)$, where $\exp(\cB)=[e^{b_{i_1,\ldots,i_d}}]$ is the tensor entry-wise exponent.  It is well known that there is a unique scaling of $\cU\in\rU(P)$ of given positive tensor $\cA\in\otimes^d\R^n$, provided that $\bp_1,\ldots,\bp_d$ are positive probability vectors \cite{Bap82,Rag84}.  This result can be extended to nonnegative tensors under the condition that there is $\cU\in\rU(P)$ with the same zero pattern as $\cA$ \cite{BR89, FL89, NR99,Fri11}.

The main results of this paper are given in Section \ref{sec:tenscal}.  In this section we 
give a generalized version of the celebrated Sinkhorn algorithm to tensor scaling, abbreviated here as SA.  Using our results on partial minimization of strictly convex function developed in Appendix \ref{appendix:pm} we show that SA converges geometrically to the unique scaling of $\cA$.
We analyze the convergence of this algorithm as in \cite{AWR17}.  Namely we can approximate the value of TOT
within $\delta$ error and find an approximate solution as explained above.
\section{Entropic relaxation}\label{sec:entrelax}
\subsection{The Wasserstein distance between sets of probabity vectors}\label{subsec:Was}
Recall that a set $\mathbf{X}$ is a metric space with respect to $\delta: \mathbf{X}\times \mathbf{X}\to [0,\infty)$ if the following three conditions hold
\begin{itemize}
\item $\delta(x,x)=0$, (zero selfdistance);
\item $\delta(x,y)=\delta(y,x)>0$ for  $x\ne y$ (symmetricity and positive distance);
\item $\delta(x,y)\le \delta(x,z) +\delta(z,y)$ (triangle inequality).
\end{itemize}
Assume that $\mathbf{X}$ is a finite set of points. We can identify $\mathbf{X}$ with $[N]$, where the cardinality of $\mathbf{X}$ is $N$.  The distance $\delta$ induces the distance matrix $\Delta=[\delta(i,j)]\in\R_+^{N\times N}$.  
Note that $\Delta$ is a nonnegative symmetric matrix with zero diagonal and positive off-diagonal entries.  The matrix $\Delta$ is called  uniform if $\delta(i,j)=t>0$ for all $i\ne j\in[N]$, and  almost uniform if $\delta(i,j)\in [t/2, t]$ for all $i\ne j\in[N]$.  We call $\Delta$ a normalized uniform or almost normalized uniform if $t=1$.

Let $\mathbf{X}$ be the simplex of probability vectors $\Pi^n\subset \R^n$.  Denote by  $\Pi^n_o$ the open set of positive probability measures. Recall that OT gives rise to
the Wasserstein, also known as Kantorovich-Rubenstein, metric on $\Pi^n$ \cite{Vil09}. 
In this subsection we discuss a generalization of the Wasserstein metric to $(\Pi^n)^l$ for $l>1$.

We first start with a trivial observation that $\rU(P)$ is nonempty as it contains the tensor product $\otimes_{i=1}^d \bp_i$.   A tensor $\cU\in \rU(P)$ is called a contingency tensor.  (For simplicity of exposition we assume that $d\ge 1$, where $\rU(\bp_1)=\{\bp_1\}$.)
Let $\cC\in\otimes^d\R_+^n$. 
Note that in the definition of  $\tau(\cC,P)$ one does not see the partition of the probability vectors to two sets $\{\bp_1,\ldots,\bp_l\}$ and $\{\bp_{l+1},\ldots,\bp_d\}$.
Assume that $d=2l$.  Define $P_1=(\bp_1,\ldots,\bp_{d/2}), P_2=(\bp_{d/2+1},\ldots,\bp_d)\in\R^{n\times (d/2)}$.  Thus $P=(P_1,P_2)$.
We want to give simple conditions on $\cC$, such that $\tau(\cC,P)$ is a distance between $P_1$ and $P_2$, denoted as dist$(P_1, P_2)$.  For $d=2$ the marix $\cC=[c_{i_1,i_2}]$ is a distance matrix $\Delta$ \cite[Sec. 6.1]{Vil09}.  

We now generalize the notion of a distance matrix $\Delta$ to a distance tensor $\cC$ as follows.  Let $D(\cC)\in \R_+^{n^{d/2}\times n^{d/2}}$ be the following matrix induced by $\cC$.  Index the rows and the columns of $D(\cC)$ by the sets $[n]^{d/2}$. 
Thus the entries of $D(\cC)$ are $d_{(i_1,\ldots,i_{d/2}),(i_{d/2+1},\ldots,i_d)}$.  Then $d_{(i_1,\ldots,i_{d/2}),(i_{d/2+1},\ldots,i_d)}\defeq c_{i_1,\ldots,i_d}$.
\begin{lemma}\label{disttensor}
Let $d/2\in\N$ and assume that $\cC\in\otimes^d\R_+^n$.  Let $D(\cC)\in\R_+^{n^{d/2}\times n^{d/2}}$ be the matrix defined as above.  If $D(\cC)$ is a distance matrix then $\tau(\cC,\cdot):(\Pi^n)^{d/2}\times (\Pi^n)^{d/2}\to [0,\infty)$ is a distance on the space $(\Pi^n)^{d/2}$.
\end{lemma}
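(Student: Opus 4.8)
The plan is to reduce everything to the matrix case by \emph{flattening} the middle splitting $d=2\cdot(d/2)$. Write a multi-index $(i_1,\dots,i_d)$ as $(\alpha,\beta)$ with $\alpha=(i_1,\dots,i_{d/2})$, $\beta=(i_{d/2+1},\dots,i_d)\in[n]^{d/2}$, and let $M(\cU)\in\R_+^{n^{d/2}\times n^{d/2}}$ be the matrix with entries $M(\cU)_{\alpha,\beta}=u_{i_1,\dots,i_d}$; applying the same flattening to $\cC$ gives back the matrix $D(\cC)$ of the statement, and $\langle\cC,\cU\rangle=\langle D(\cC),M(\cU)\rangle$. Moreover $\cU\in\rU((P_1,P_2))$ exactly when $M(\cU)\ge 0$ and, for every $k\in[d/2]$, the $k$-th one-dimensional marginal of the row-marginal $M(\cU)\1$ equals $\bp_k$ while that of the column-marginal $M(\cU)^\top\1$ equals $\bp_{d/2+k}$. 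Hence $\tau(\cC,(P_1,P_2))=\min\{W_1^{D(\cC)}(\mu,\nu):\mu\in S(P_1),\ \nu\in S(P_2)\}$, where $W_1^{D(\cC)}$ is the Wasserstein-$1$ distance for the metric $D(\cC)$ on $[n]^{d/2}$ and $S(P)\subset\Pi^{n^{d/2}}$ is the transportation polytope of joint distributions with prescribed one-dimensional marginals. So the whole question is really a statement about the distance between the two polytopes $S(P_1)$ and $S(P_2)$.

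The three ``easy'' axioms. Since the entries of $D(\cC)$ are nonnegative, $\cC\ge 0$ and $\langle\cC,\cU\rangle\ge 0$ on $\rU(P)$; the block-diagonal tensor defined by $u_{\alpha,\beta}=\prod_{k=1}^{d/2}p_{\alpha_k,k}$ if $\alpha=\beta$ and $u_{\alpha,\beta}=0$ otherwise lies in $\rU((P_1,P_1))$ and $\langle\cC,\cdot\rangle$ vanishes on it because $D(\cC)$ vanishes on the diagonal, so $\tau(\cC,(P_1,P_1))=0$. If $\tau(\cC,(P_1,P_2))=0$ is attained at $\cU$, then $D(\cC)_{\alpha,\beta}>0$ for $\alpha\ne\beta$ together with $\cU\ge 0$ forces $M(\cU)_{\alpha,\beta}=0$ whenever $\alpha\ne\beta$; so $M(\cU)$ is block-diagonal, its row- and column-marginals are the same vector, and the marginal constraints give $\bp_k=\bp_{d/2+k}$ for all $k$, i.e.\ $P_1=P_2$ --- this is positivity. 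Finally, the block transposition $(\sigma\cU)_{i_1,\dots,i_d}=u_{i_{d/2+1},\dots,i_d,i_1,\dots,i_{d/2}}$ is a cost-preserving bijection $\rU((P_1,P_2))\to\rU((P_2,P_1))$ (cost preservation uses that $D(\cC)$ is symmetric), so $\tau(\cC,\cdot)$ is symmetric.

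The triangle inequality is the main obstacle. For $d=2$ it is the classical gluing lemma: optimal plans $\cU_{12},\cU_{23}$ share the \emph{same} middle marginal $\bp_2$ (a single vector), the glued plan $u^{13}_{i_1,i_3}=\sum_{i_2}u^{12}_{i_1,i_2}u^{23}_{i_2,i_3}/p_{i_2,2}$ is feasible, and the triangle inequality for $D(\cC)$ gives $\langle\cC,\cU_{13}\rangle\le\tau_{12}+\tau_{23}$. For $d\ge 4$ the obstruction is that the ``middle'' object to be matched is the full joint distribution on $[n]^{d/2}$ of the $P_2$-indices, which is \emph{not} pinned down by $\bp_{d/2+1},\dots,\bp_d$; the middle joint marginals of the two optimal plans are merely two (possibly different) points of the polytope $S(P_2)$, and transporting between them has positive $D(\cC)$-cost, so the naive gluing does not close. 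The way I would close this is via LP duality: the dual of the flattened program reads
$\tau(\cC,(P_1,P_2))=\max\{\langle\phi,\mu_1^\circ\rangle+\langle\psi,\mu_2^\circ\rangle:\phi,\psi\ \text{separable},\ \phi(\alpha)+\psi(\beta)\le D(\cC)_{\alpha,\beta}\ \forall\alpha,\beta\}$,
where $\mu_i^\circ=\bigotimes_{k}\bp^{(i)}_k$ is the product of the probability vectors forming $P_i$ and ``separable'' means $\phi(\alpha)=\sum_{k}\phi_k(\alpha_k)$ (the potentials dual to one-dimensional marginal constraints are automatically of this form). If one can show that this optimum is always attained with $\psi=-\phi$ and $\phi$ being $1$-Lipschitz for $D(\cC)$ --- equivalently that $\tau(\cC,(P_1,P_2))=\max\{\sum_{k=1}^{d/2}\langle g_k,\bp^{(1)}_k-\bp^{(2)}_k\rangle:\bigoplus_k g_k\ \text{is $1$-Lipschitz for}\ D(\cC)\}$ --- then $\tau(\cC,\cdot)$ is the support function, evaluated at $P_1-P_2$, of a symmetric convex set of separable potentials, hence a seminorm in $P_1-P_2$; the triangle inequality (and symmetry, and vanishing on the diagonal) then come for free, and together with the positivity proved above this makes $\tau(\cC,\cdot)$ a genuine metric. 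The technical heart I expect to have to fight through is precisely this reduction --- that the LP dual optimum can be realized by a $1$-Lipschitz \emph{separable} potential; it is trivial for $d=2$, where every function on $[n]$ is separable. (An alternative is to argue combinatorially that the optimal plans for $\tau(\cC,(P_1,P_2))$ and $\tau(\cC,(P_2,P_3))$ can be chosen with one common middle joint marginal in $S(P_2)$, after which the glued plan finishes the proof; but I expect the duality route to be cleaner.)
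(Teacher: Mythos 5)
Your flattening reduction and your proofs of zero self-distance, positivity and symmetry are correct and essentially identical to the paper's argument (the paper uses the same block-diagonal tensor for $\tau(\cC,(P_1,P_1))=0$ and the same support argument for positivity; it omits symmetry, which your block-transposition argument supplies). The genuine gap is the triangle inequality, which you do not prove: you reduce it to the assertion that the dual optimum of the flattened LP is attained at a \emph{separable} potential that is $1$-Lipschitz for $D(\cC)$, and you explicitly leave that assertion open. You have, however, diagnosed the obstruction exactly: the middle objects produced by the optimal plans for $(P_1,P_2)$ and for $(P_2,P_3)$ are two possibly different joint distributions in the fiber $S(P_2)$, so the gluing lemma does not apply. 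The paper's own proof stumbles at precisely this point --- it glues the plans $Z$ and $W$ as if they shared the same $[n]^{d/2}$-valued marginal $X_2$, which is guaranteed only when $d=2$.

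No route can close this gap, because the triangle inequality is in fact false under the stated hypotheses (so in particular the separable Kantorovich--Rubinstein duality you hope for cannot hold). Take $n=2$, $d=4$, write $A=(1,1)$, $B=(1,2)$, $C=(2,1)$, $E=(2,2)$ for the elements of $[2]^2$, and let $\delta(A,E)=\delta(B,C)=\epsilon$ and $\delta=1$ on the four remaining unordered pairs, with $0<\epsilon<1$; this $\delta$ is a metric, so $c_{i_1,i_2,i_3,i_4}=\delta\bigl((i_1,i_2),(i_3,i_4)\bigr)$ makes $D(\cC)$ a distance matrix. Put $P_1=((1,0),(1,0))$, $P_2=((\tfrac12,\tfrac12),(\tfrac12,\tfrac12))$, $P_3=((0,1),(1,0))$. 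Then $\rU((P_1,P_3))$ consists of the single tensor supported at $(1,1,2,1)$, so $\tau(\cC,(P_1,P_3))=\delta(A,C)=1$; whereas the tensor with $u_{1,1,1,1}=u_{1,1,2,2}=\tfrac12$ lies in $\rU((P_1,P_2))$ and costs $\tfrac12\delta(A,E)=\tfrac{\epsilon}{2}$, and the tensor with $u_{1,2,2,1}=u_{2,1,2,1}=\tfrac12$ lies in $\rU((P_2,P_3))$ and costs $\tfrac12\delta(B,C)=\tfrac{\epsilon}{2}$. Hence $\tau(\cC,(P_1,P_3))=1>\epsilon\ge\tau(\cC,(P_1,P_2))+\tau(\cC,(P_2,P_3))$, and a small perturbation of the $\bp_j$ into $\Pi^2_o$ preserves the strict inequality by continuity of the LP value. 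So the lemma as stated fails for $d\ge 4$: one needs either additional hypotheses on $\cC$, or a redefinition of $\rU$ prescribing the full joint marginal of each block of $d/2$ indices (in which case one is back to the classical Wasserstein metric on $\Pi^{n^{d/2}}$).
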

\begin{proof} Clearly $\tau(\cC,(P_1,P_2))\ge 0$.  Let $P_3=(\bp_{d+1},\ldots,\bp_{3d/2})\in (\Pi^n)^{d/2}$.
We claim that 
$\tau(\cC,(P_1,P_1))=0$.  Let $\cU=[u_{i_1,\ldots,i_d}]\in \otimes^d\R_+^n$ be of the following form: 
\begin{eqnarray*}
u_{i_1,\ldots,i_d}=
\begin{cases} 
0 &\textrm{ if }(i_1,\ldots,i_{d/2})\ne (i_{d/2+1},\ldots,i_{d}),\\
\prod_{j=1}^{d/2} p_{i_j,j}&\textrm{ if }(i_1,\ldots,i_{d/2})= (i_{d/2+1},\ldots,i_{d}).
\end{cases}
\end{eqnarray*}
Then $\cU\in\rU((P_1,P_1))$.  As the diagonal enties of $\Delta(\cC)$ are zero we deduce that $\langle \cC,\cU\rangle=0$.  Hence $\tau(\cC, (P_1,P_1))=0$.
Assume now that $P_1\ne P_2$.  We claim that
$\tau(\cC,(P_1,P_2))>0$.  Suppose that $\cU\in\rU((P_1,P_2))$ is optimal: 
$\tau(\cC,(P_1,P_2))=\langle \cC,\cU\rangle$.  Assume to the contrary that $\langle C,\cU\rangle=0$.  Then $u_{i_1,\ldots,i_d}=0$ if $(i_1,\ldots,i_{d/2})\ne  (i_{d/2+1},\ldots,i_{d})$.  As  $\cU\in\rU((P_1,P_2))$ we deduce that
\begin{eqnarray*}
p_{i_k,k}=\sum_{i_j\in[n],j\ne k} u_{i_1,\ldots,i_d}=\sum_{i_j\in[n], j\in[d/2]\setminus\{k\}} u_{i_1,\ldots,i_{d/2},i_1,\ldots,i_{d/2}} \textrm{ for } k\in[d/2].
\end{eqnarray*}
Assume now that $k=q+d/2$ for $q\in[d/2]$.  The above arguments show that $p_{i_k,k}=p_{i_k,q}$.  Hence $\bp_k=\bp_q$ for all $q\in[d/2]$.  This contradicts the assumption that $P_1\ne P_2$.

It is left to show the triangle inequality.
We show how to reduce our case to the case $d=2$.   Assume that $\cU\in \rU((P_1,P_2))$.  Then $\cU$ is the distribution of a random variable $Z:\Omega^d\to [n]^d$.  Hence $Z$ is the joint distribution $(X_1,X_2)$, where  $X_i:\Omega^{d/2}\to [n]^{d/2}$.  The distribution of $X_i$ is given by $\cX_i=[x_{j_1,\ldots,j_{d/2},i}]$.
Thus 
\begin{eqnarray*}
x_{j_1,\ldots,j_{d/2},1}=\sum_{j_{d/2+1},\ldots,j_d\in[n]} u_{j_1,\ldots,j_d}, \quad 
x_{j_{d/2+1},\ldots,j_d,2}=\sum_{j_{1},\ldots,j_{d/2}\in[n]} u_{j_1,\ldots,j_d}.
\end{eqnarray*}
Note that $\cX_1\in\rU(P_1), \cX_2\in\rU(P_2)$.
We next recall the Gluing Lemma \cite{Vil09}.  Assume that $P_3=(\bp_{d+1},\ldots,\bp_{3d/2})\in (\Pi^n)^{d/2}$.  Let 
$\cV\in \rU((P_2,P_3))$.  Assume that $W$ be the joint distribution of $(X_2,X_3)$ given by $\cV$.  Then there exists a joint distribution of $(X'_1,X'_2,X'_3)$ such the joint distribution of $(X_1',X'_2)$ and $(X_2',X_3')$ are the distribution of $Z$ and $W$ respectively.  To show that it is enough to assume that $d=2$.  Then the joint distribution of $(X'_1,X'_2,X'_3)$ is given by $p_{i_2,2}^{-1} u_{i_1,i_2}v_{i_2,i_3}$. 
(We assume here that $0/0=0$.) 
We now revert to the assumption that $d/2\in\N$.
Observe that the joint distribution of $(X_1',X_3')$ is given by $\cQ\in\rU((P_1,P_3))$.  Hence $\tau(\cC,(P_1,P_3))\le \langle \cC,\cQ\rangle$.
Now choose the optimal distribution $\cU$ and $\cV$ for $\tau(\cC,(P_1,P_2))$ and $\tau(\cC,(P_2,P_3))$ respectively.
Let $(X_1',X_2',X_3')$ be the join distribution as above. given by $\cW=[w_{j_1,\ldots,j_{3d/2}}]$.  Consider the inequality 
\begin{eqnarray*}
&&c_{j_1,\ldots,j_{d/2},j_{d+1},\ldots,j_{3d/2} } w_{j_1,\ldots,j_{3d/2}}\le \\
&&(c_{j_1,\ldots,j_{d}}+c_{j_{d/2+1},\ldots,j_{3d/2} }) w_{j_1,\ldots,j_{3d/2}}
\end{eqnarray*}
Sum on all indices.  Then we get the triangle inequality 
\begin{eqnarray*}
\tau(\cC,(P_1,P_3))\le\langle \cC,\cQ\rangle\le \langle \cC,\cU\rangle +\langle \cC,\cW\rangle=
\tau(\cC,(P_1,P_2))+\tau(\cC,(P_2,P_3)).
\end{eqnarray*}
\end{proof}
 
Assume that  $D(\cC)$ is a distance matrix.  In many cases it would be convenient to have that the quantity $ \tau(\cC,(P_1,P_2))$ to be the distance between the multisets $\{\bp_1,\ldots,\bp_{d/2}\}$ and $\{\bp_{d/2+1},\ldots,\bp_{d}\}$.  (We do not assume that $\bp_1,\ldots,\bp_{d/2}$ are pairwise distinct.) 
This seems to be not an easy task for the following reason. 

Let us find $\cC\in\otimes^d\R$ for which $\tau(\cC,p_1,\ldots,p_d)$ is invariant under  the permutations on the sets $\{\bp_1,\ldots,\bp_{d/2}\}$ and $\{\bp_{d/2+1},\ldots,\bp_{d}\}$, and symmetric in $\{\bp_1,\ldots,\bp_{d/2}\}$ and $\{\bp_{d/2+1},\ldots,\bp_{d}\}$.
To find such cost tensors we define the class of bisymmetric tensors $\rB^d\R^n\subset\otimes^d\R^n$.  Let $\Sigma_k:[k]\to[k]$ be the group of bijections (permutations).  Then $\cC\in \rB^d\R^n$ if the following conditions hold:
\begin{eqnarray*}
&&c_{i_1,\ldots,i_d}=c_{i_{\alpha(1)},\ldots,i_{\alpha(d/2)}, i_{ d/2+\beta(1)},\ldots i_{ d/2+\beta(d/2)}}\; \forall \alpha,\beta\in \Sigma_{d/2},\\
&&c_{i_1,\ldots,i_d}=c_{i_{d/2+1},\ldots,i_d,i_1,\ldots,i_{d/2}}.
\end{eqnarray*}
For $\alpha\in \Sigma_{d/2}$ denote $\alpha(P_1)=(\bp_{\alpha(1)},\ldots,\bp_{\alpha(d/2)})$.
It is straightforward to see that for $\cC\in\rB^d\R^n$ we have the following equalities
\begin{eqnarray*}
\tau(\cC,(P_1,P_2))=\tau(\cC,(P_2, P_1)),\,\tau(\cC,(P_1,P_2))=\tau(\cC,(\alpha(P_1),\beta(P_2)),
\end{eqnarray*}
for all $\alpha,\beta\in \Sigma_{d/2}$.

Assume that $\cC\in \rB^d\R^n_+$.  Then $D(\cC)$ is called a bisymmetric distance matrix if 
\begin{eqnarray*}
&&c_{i_1,\ldots,i_d}=
\begin{cases}
0&\textrm{ if } \{i_1,\ldots,i_{d/2}\}=\{i_{d/2+1},\ldots,i_d\},\\
>0&\textrm{ if } \{i_1,\ldots,i_{d/2}\}\ne\{i_{d/2+1},\ldots,i_d\},
\end{cases}
\\
&&c_{i_1,\ldots,i_d}\le c_{i_1,\ldots,i_{d/2},i_{d+1},\ldots,i_{3d/2}}+c_{i_{d+1},\ldots,i_{3d/2},i_{d/2+1},\ldots,i_{d}}.
\end{eqnarray*}
Here by $\{i_1,\ldots,i_{d/2}\}$ we mean a multiset of indices in $[n]$.
It is easy to show that that if $D(\cC)$ is a bisymmetric distance matrix then $\tau(\cC,(P_1,P_2))$ is a semimetric.  That is the second condition of metric, named positive distance, may not hold.  One can trivially amend this situation if we don't insist that the metric on $(\Pi^n)^{d/2}$ is continuous as in \cite{Cut13}.
Assume that  $\cC\in \rB^d\R^n_+$ has positive entries and satisfies the last of the above conditions.   Then $\tau(\cC,(P_1,P_2))$ satisfies the last two condition of the metric, but not  the condition $\delta(x,x)=0$.  Let $1_{P_1,P_2}: (\Pi^n)^{d/2}\times (\Pi^n)^{d/2}\to \{0,1\}$ be the function such that $1_{P_1,P_2}=0$ if and only if $\{\bp_1,\ldots,\bp_{d/2}\}=\{\bp_{d/2+1},\ldots,\bp_{d}\}$.   (That is $P_1=\beta(P_2)$ for some $\beta\in \Sigma_{d/2}$.) Then $1_{P_1,P_2}\tau(\cC,(P_1,P_2))$ is a metric discontinuous on the diagonal.

It is simple to find $\cC$ such that $ \tau(\cC,(P_1,P_2))$ is invariant distance under the identical  permutations on the sets $\{\bp_1,\ldots,\bp_{d/2}\}$ and $\{\bp_{d/2+1},\ldots,\bp_{d}\}$.  Denote by $\rB^d_w\R^n$ the set of weak bipartite symmetric tensor, where we assume in the definition of bipartite tensors that $\alpha=\beta$.  Assume that $\cC\in\rB^d_w\R_+^n$ and the matrix $D(\cC)$ is a distance matrix.  Then $ \tau(\cC,(P_1,P_2))=\tau(\cC,(\alpha(P_1),\alpha(P_2)))$ for all $\alpha\in \Sigma_{d/2}$.

It is easy to get a distance on $(\Pi^n)^{d/2}$ which is invariant under permutation on the sets $\{\bp_1,\ldots,\bp_{d/2}\}$ and $\{\bp_{d/2+1},\ldots,\bp_{d}\}$, which is induced by TOT function.
Take $\cC\in \rB^d_w\R^n$ and assume that $D(\cC)$ is a distance matrix.  Then
\begin{eqnarray*}
\textrm{dist}(\{\bp_1,\ldots,\bp_{d/2}\},\{\bp_{d/2+1},\ldots,\bp_d\})\defeq
\min\{\tau(\cC,(\alpha(P_1),\alpha(P_2))), \alpha\in\Sigma_{d/2}\}
\end{eqnarray*}
is a distance between the sets $\{\bp_1,\ldots,\bp_{d/2}\}$ and $\{\bp_{d/2+1},\ldots,\bp_{d}\}$.
\subsection{Entropic relaxation}\label{subsec:entrel}
For $\cU\in\rU(P)$ we define the entropy 
 the  contingency tensor as follows:
\begin{eqnarray*}
H(\cU)=-\sum_{i_1,\ldots,i_d\in[n]} u_{i_1,\ldots,i_d}\log u_{i_1,\ldots,i_d}.
\end{eqnarray*}
Assume that $\cU\in\rU(P)$  is the common distribution of   $(X_1,\ldots,X_d)$.  Then $H(X_1,\ldots,X_d)=H(\cU)$.  Recall that $H(X_1,\ldots,X_d)\le \sum_{j=1}^d H(X_j)$ \cite[Thorem 2.6.6]{CT91}.  Equality holds if and only if $X_1,\ldots,X_d$ are independent random variables, Hence $H(\cU)\le \sum_{j=1}^d H(\bp_j)$. Equality holds if and only if $\cU=\otimes_{j=1}^d\bp_j$.   Hence 
\begin{eqnarray*}
0\le H(\cU)\le \sum_{j=1}^d H(\bp_j)\le d \log n.
\end{eqnarray*}
As in \cite{Cut13} for $\lambda>0$ we introduce the entropic relaxation of 
$\tau(\cC,P)$:
\begin{eqnarray}\label{entrelprob}\quad
\min\{\langle \cC,\cU\rangle - \frac{1}{\lambda} H(\cU), U\in\rU(P)\}=\langle \cC,\cU_\lambda\rangle - \frac{1}{\lambda} H(\cU_\lambda).
\end{eqnarray}
Denote by $\rU_{n,d}\subset\otimes^d\R_+^n$ the convex set of all probability tensors, i.e., all nonnegative $\cU$ such that $\langle \cJ_d,\cU\rangle=1$.
Let $f_\lambda:\rU_{n,d}\to \R$ be the function $f_\lambda(\cU)=\langle \cC,\cU\rangle -\frac{1}{\lambda}H(\cU)$.  As $-H(\cU)$ is a strict convex function on $\rU_{n,d}$ it follows that $f_\lambda$ is a strict convex function on $\rU_{n,d}$.
Hence $\cU_\lambda$ is the unique solution to the entropic relaxation problem.
Note that 
\begin{eqnarray*}
f_{\lambda}(\cU)\le \langle \cC, \cU\rangle\le f_{\lambda}(\cU)+\frac{d\log n}{\lambda},\quad \forall \cU\in\rU(\bp_1,\ldots,\bp_d).
\end{eqnarray*}
Hence
\begin{eqnarray*}
f_\lambda(\cU_\lambda)\le \tau(\cC,P)\le f_{\lambda}(\cU_\lambda)+\frac{d\log n}{\lambda}.
\end{eqnarray*}
We now state a characterization of $\cU_\lambda$ in terms of the notion of tensor scaling.  We say that two nonnegative tensors $\cU,\cV\in\otimes^d\R^n_+$ have the same zero pattern if $u_{i_1,\ldots,i_d}=0$ if and only if $v_{i_1,\ldots,i_d}=0$.  We say that $\cV\in \otimes^d\R^n_+$ is scalable to $\cU$  if there exist vectors  $\x_{j}=(x_{1,j},\ldots,x_{n,j})\in\R^n, j\in[d]$ such that $[\exp(x_{i_1,1}+\cdots +x_{i_d,d})v_{i_1,\ldots,i_d}]=[u_{i_1,\ldots,i_d}]$.  The following result is known \cite{BR89, FL89, NR99,Fri11}:
\begin{theorem}\label{diagscalt}  Let $\cU\in\rU(P)$ where $P\in\Pi^n_o$.  Assume that $\cV\in\R^n_+$ has the same zero pattern as $\cU$.  Then there exists a unique scaling $\cW$ of $\cV$ such that $\cW\in\rU(P)$.
\end{theorem}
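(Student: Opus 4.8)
The plan is to realize the desired scaling as the minimizer of a relative entropy over the appropriate face of $\rU(P)$. Let $S=\{\alpha=(i_1,\dots,i_d)\in[n]^d:v_\alpha>0\}$ be the support of $\cV$; by hypothesis $S$ is also the support of the given tensor $\cU$. Set
\[
K=\{\mathcal{Z}\in\rU(P):z_\alpha=0\ \text{whenever}\ \alpha\notin S\},
\]
a nonempty (it contains $\cU$), compact, convex polytope. On $K$ consider
\[
D(\mathcal{Z}\,\|\,\cV)=\sum_{\alpha\in S}z_\alpha\log\frac{z_\alpha}{v_\alpha},
\]
with the convention $0\log 0=0$. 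Since $t\mapsto t\log(t/v)$ extends continuously to $[0,\infty)$ and is strictly convex there, $D(\cdot\,\|\,\cV)$ is continuous and strictly convex on $K$, so it attains its minimum at a unique $\cW\in K$.

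Next I would prove that $\cW$ has exactly the zero pattern of $\cV$, i.e.\ $w_\alpha>0$ for every $\alpha\in S$. If not, pick $\alpha_0\in S$ with $w_{\alpha_0}=0$ and move along $\cW_t=(1-t)\cW+t\cU$, which lies in $K$ for $t\in[0,1]$ by convexity and the support condition. Each term of $D(\cW_t\,\|\,\cV)$ indexed by an $\alpha$ with $w_\alpha>0$ is smooth with bounded derivative near $t=0$, while each term with $w_\alpha=0$ (there is at least $\alpha_0$, and then necessarily $u_\alpha>0$) equals $t\,u_\alpha\log(t\,u_\alpha/v_\alpha)$, whose derivative tends to $-\infty$ as $t\to0^+$. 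Hence $D(\cW_t\,\|\,\cV)<D(\cW\,\|\,\cV)$ for small $t>0$, contradicting minimality. Therefore $\supp\cW=S$, so $\cW$ lies in the relative interior of $K$ and the only constraints active at $\cW$ are the affine ones: $\mathcal{Z}\times_k\cJ_{d-1}=\bp_k$ for $k\in[d]$ and $z_\alpha=0$ for $\alpha\notin S$. Because all active constraints are affine, the first-order optimality condition for the convex function $D(\cdot\,\|\,\cV)$ — that $\nabla D(\cW)$ be orthogonal to the direction space of these constraints, hence a linear combination of their gradients — holds with no constraint qualification, producing reals $\mu_{j,k}$ ($j\in[n]$, $k\in[d]$) with $\log(w_\alpha/v_\alpha)+1=\sum_{k=1}^d\mu_{i_k,k}$ for all $\alpha=(i_1,\dots,i_d)\in S$. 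Putting $x_{j,1}=\mu_{j,1}-1$ and $x_{j,k}=\mu_{j,k}$ for $k\ge2$ gives $w_\alpha=v_\alpha\exp(\sum_k x_{i_k,k})$ on $S$ and $w_\alpha=v_\alpha=0$ off $S$; thus $\cW$ is a scaling of $\cV$ lying in $\rU(P)$, which proves existence.

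For uniqueness I would show that every scaling $\cW'=[v_\alpha\exp(\sum_k x'_{i_k,k})]$ with $\cW'\in\rU(P)$ coincides with this minimizer. Since the exponential factors are positive, $\supp\cW'=S$, so $\cW'\in K$. For any $\mathcal{Z}\in K$, expanding both divergences and using $\log(w'_\alpha/v_\alpha)=\sum_k x'_{i_k,k}$ together with $(\mathcal{Z}\times_k\cJ_{d-1})_j=p_{j,k}$ gives
\[
D(\mathcal{Z}\,\|\,\cV)-D(\mathcal{Z}\,\|\,\cW')=\sum_{k=1}^d\sum_{j=1}^n x'_{j,k}\,p_{j,k},
\]
a constant independent of $\mathcal{Z}$. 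Taking $\mathcal{Z}=\cW'$ identifies this constant with $D(\cW'\,\|\,\cV)$, so $D(\mathcal{Z}\,\|\,\cV)=D(\mathcal{Z}\,\|\,\cW')+D(\cW'\,\|\,\cV)\ge D(\cW'\,\|\,\cV)$ for all $\mathcal{Z}\in K$, with equality iff $D(\mathcal{Z}\,\|\,\cW')=0$, i.e.\ iff $\mathcal{Z}=\cW'$ by Gibbs' inequality (both $\mathcal{Z}$ and $\cW'$ are probability tensors, since $\langle\cJ_d,\mathcal{Z}\rangle=\sum_j p_{j,1}=1$ as $P\in\Pi^n_o$). Hence $\cW'$ is the unique minimizer of $D(\cdot\,\|\,\cV)$ over $K$, forcing $\cW'=\cW$.

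The step I expect to be the crux is proving $\supp\cW=S$: this is exactly where the hypothesis that $\cV$ shares the zero pattern of a genuinely feasible $\cU\in\rU(P)$ is indispensable, and it requires the non-Lipschitz behaviour of $t\log t$ at $0$. Everything else is routine — compactness and strict convexity for the minimizer, the affine first-order condition for the scaling form, and the Bregman/Pythagorean identity for uniqueness.
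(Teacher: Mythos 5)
Your proof is correct. Note first that the paper does not actually prove Theorem \ref{diagscalt}: it is stated as a known result with citations to \cite{BR89,FL89,NR99,Fri11}, so there is no in-paper proof to match yours against. What you give is the primal, entropy-projection argument: minimize $D(\cdot\,\|\,\cV)$ over the face $K$ of $\rU(P)$ with support in $S$, use the $-\infty$ derivative of $t\log t$ at $0$ (together with the feasible $\cU$ of full support $S$) to push the minimizer into the relative interior, read off the scaling form from the Lagrange conditions for the affine constraints, and get uniqueness from the Pythagorean identity plus Gibbs. All four steps are sound; in particular the multiplier step needs no constraint qualification since the active constraints at $\cW$ are affine, and your uniqueness argument correctly identifies the unique object as the scaled tensor $\cW$ rather than the scaling vectors $\x_j$ (which indeed need not be unique when $\cV$ has zeros).

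It is worth noting that your route is the convex dual of the framework the paper itself builds for the scaling problem in Lemma \ref{rminlem} and Appendix \ref{appendix:pm}: there one minimizes $g_\cA(Y)=\sum_\alpha a_\alpha e^{\sum_j y_{i_j,j}}-\sum_j\langle\bp_j,\y_j\rangle$ over the scaling vectors $Y\in(\R^n)^d$, and the delicate point is coercivity/attainment of the minimum (conditions (a)--(d) of that lemma), which is exactly what the hypothesis ``$\cV$ has the zero pattern of some $\cU\in\rU(P)$'' guarantees. Your primal formulation buys existence for free from compactness of $K$ and pushes all the work into the support argument instead; the dual formulation is less immediate on existence but is the one that underlies the Sinkhorn iteration and its convergence analysis in Section \ref{sec:tenscal}. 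Either is a legitimate proof of the theorem.
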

Observe that if $\bp_1,\ldots,\bp_d\in\Pi^n_o$ then $\cU=\otimes_{j=1}^d \bp_j$ is a positive tensor in $\rU(P)$. Hence every positive tensor $\cV\in \otimes^d\R_+^n$ can be scaled to a unique $\cW\in \rU(P)$.  
Note that the problem: For a given $\cV\in\R^n_+$ does there exists $\cU\in\rU(P)$ with the same zero pattern as $\cV$? is a LP problem \cite{Fri11}.

The following result is a generalization of the result in \cite{Cut13} for matrices:
\begin{theorem}\label{Ulambform} Assume that $\cC\in\otimes^d \R^n$, $\lambda>0$ and $\bp_1,\ldots,\bp_d\in\Pi^n_o$.
Consider the the entropic relaxation problem \eqref{entrelprob}.   Then $\cU_\lambda$ is the unique scaling of $\exp(-\lambda \cC)$.
\end{theorem}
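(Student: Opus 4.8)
The plan is to follow the matrix argument of \cite{Cut13}, combining a first-order optimality condition with Theorem~\ref{diagscalt} to identify the scaling. Since $\bp_1,\ldots,\bp_d\in\Pi^n_o$, the polytope $\rU(P)$ is nonempty, compact and convex and contains the strictly positive tensor $\otimes_{j=1}^d\bp_j$; as already noted, strict convexity of $f_\lambda$ then gives existence and uniqueness of the minimizer $\cU_\lambda$. The first step is to show that $\cU_\lambda$ has all positive entries. If some entry $u_{i_1,\ldots,i_d}$ vanished, I would move along the segment from $\cU_\lambda$ toward $\otimes_j\bp_j$ (which stays in $\rU(P)$): in that coordinate the entropic part contributes a term of the form $\tfrac1\lambda (tw)\log(tw)$ with $w>0$, whose derivative at $t=0^+$ is $-\infty$, while all other terms vary smoothly, so $f_\lambda$ would strictly decrease, contradicting minimality. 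Hence $\cU_\lambda>0$ entrywise and the nonnegativity constraints are inactive; this is exactly where the hypothesis $\bp_j\in\Pi^n_o$ is used.

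The second step is the stationarity condition on the affine slice. The admissible directions at $\cU_\lambda$ are the tensors $\cD$ with $\cD\times_k\cJ_{d-1}=0$ for all $k\in[d]$, and optimality forces $\langle\nabla f_\lambda(\cU_\lambda),\cD\rangle=0$ for all such $\cD$. A direct computation gives $\partial f_\lambda/\partial u_{i_1,\ldots,i_d}=c_{i_1,\ldots,i_d}+\tfrac1\lambda\log u_{i_1,\ldots,i_d}+\tfrac1\lambda$, and the identity $\langle[\sum_{k}a_{k,i_k}],\cD\rangle=\sum_{k}\langle a_k,\cD\times_k\cJ_{d-1}\rangle$ shows that the orthogonal complement of the zero-marginal subspace is precisely the space of ``slice-additive'' tensors $[\,\sum_{k=1}^d a_{k,i_k}\,]$ (a dimension count, using that the marginal map has rank $d(n-1)+1$, gives equality). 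Consequently there exist $a_1,\ldots,a_d\in\R^n$ with
\[
c_{i_1,\ldots,i_d}+\tfrac1\lambda\log u_{i_1,\ldots,i_d}+\tfrac1\lambda=\sum_{k=1}^d a_{k,i_k}.
\]

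The third step is bookkeeping: solving for the entries and setting $x_{i_k,k}:=\lambda a_{k,i_k}-1/d$ gives
$u_{i_1,\ldots,i_d}=\exp\!\big(x_{i_1,1}+\cdots+x_{i_d,d}\big)\,[\exp(-\lambda\cC)]_{i_1,\ldots,i_d}$,
which is precisely the statement that $\cU_\lambda$ is a scaling of $\exp(-\lambda\cC)$ in the sense of the definition preceding Theorem~\ref{diagscalt}, and $\cU_\lambda\in\rU(P)$ by feasibility. Since $\exp(-\lambda\cC)$ is strictly positive, it has the same (full) zero pattern as $\otimes_j\bp_j\in\rU(P)$, so Theorem~\ref{diagscalt} guarantees that its scaling into $\rU(P)$ is unique; hence $\cU_\lambda$ equals that unique scaling, as claimed.

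I expect the only genuinely delicate point to be the interior-point step: without it the nonnegativity multipliers would enter the stationarity relation and the clean product form could fail on the boundary. Everything else is routine — the constraints are affine, so no constraint qualification is needed — and the single linear dependency among the $d$ marginal conditions is harmless, merely reflecting the known non-uniqueness of the scaling exponents $x_{i_k,k}$.
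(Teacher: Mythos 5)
Your proof is correct, but it runs the variational argument in the opposite direction from the paper. The paper starts on the scaling side: it invokes Theorem~\ref{diagscalt} to produce the unique scaling $\cV\in\rU(P)$ of the positive tensor $\exp(-\lambda\cC)$, observes that $\log\cV=-\lambda\cC+\cY$ with $\cY$ slice\textendash additive, verifies by a direct directional-derivative computation that $\cV$ is a critical point of $f_\lambda|\rU(P)$, and concludes $\cU_\lambda=\cV$ by strict convexity. You instead start from the minimizer: you prove $\cU_\lambda>0$ via the $-\infty$ slope of $u\log u$ at $0^+$ along the segment toward $\otimes_j\bp_j$, derive the stationarity condition $\nabla f_\lambda(\cU_\lambda)\perp\{\cD:\cD\times_k\cJ_{d-1}=0\ \forall k\}$, identify the orthogonal complement with the slice\textendash additive tensors via the adjoint identity, and read off the product form; Theorem~\ref{diagscalt} then enters only through the \emph{uniqueness} of the scaling, whereas the paper uses it for \emph{existence}. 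The trade-off is that the paper's direction entirely sidesteps the interior-point step (it never needs to know a priori that $\cU_\lambda>0$, since it works with a tensor that is positive by construction), while your direction requires that step but is the more systematic KKT-style derivation and makes transparent \emph{why} the minimizer must have the exponential product form rather than merely checking that a candidate works. You correctly flag the positivity of $\cU_\lambda$ as the one genuinely delicate point, and your argument for it is sound.
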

\begin{proof}  As $\exp(-\lambda \cC)$ is a positive tensor Theorem \ref{diagscalt} yields that there exists a unique $\cV\in\rU(P)$ which is a scaling of  $\exp(-\lambda \cC)$: 
\begin{eqnarray*}
&&v_{i_1,\ldots,i_d}=\exp(x_{i_1,1}+\cdots+x_{i_d,d})\exp(-\lambda c_{i_1,\ldots,i_d}),\\ &&\x_j=(x_{i_1,j}\ldots,x_{i_d,j})\in\R^n, j\in[d].
\end{eqnarray*}
Denote $\log \cV\defeq[\log v_{i_1,\ldots,i_d}]\in\otimes^d\R^n$.  Observe that
\begin{eqnarray*}
\log \cV=-\lambda \cC+\cY, \quad y_{i_1,\ldots,i_d}=\sum_{j=1}^d x_{i_j,j}.
\end{eqnarray*}
As $f_\lambda$ is strictly convex on $\rU(P)$ it is enough to show that $\cV$ is a critical point of $f_\lambda|\rU(P)$.  Let $\cW\in \otimes^d\R^n$ be a nonzero solution to the system of homogeneous linear equations
\begin{eqnarray*}
\sum_{i_j\in[n], j\ne k}w_{i_1,\ldots,i_d}=0, \textrm{ for all } i_k\in[n], k\in[d].
\end{eqnarray*}
As $\langle \cJ_d,\cW\rangle =0$, it follows that this system is a system of $d(n-1)+1$ linear equations in $n^d$ unknowns.
Since $\cV$ is a positive tensor there exists $a>0$ such that $\cV+t\cW\in \rU(P)$ for $t\in (-a,a)$.  Let us compute the derivative of $f_\lambda$ in the  direction of $\cW$ at $\cV$:
\begin{eqnarray*}
&&\frac{d\;}{dt}f_\lambda(\cV+t\cW)|_0=\langle \cC,\cW\rangle +\frac{1}{\lambda}\langle \cJ_d,\cW\rangle +\frac{1}{\lambda}\langle \cW, \log \cV\rangle=\\
&&\langle \cC,\cW\rangle -\langle \cW,\cC\rangle +\frac{1}{\lambda}\langle \cW,\cY\rangle=\frac{1}{\lambda}\sum_{k=1}^d\sum_{i_k\in[n]}x_{i_k,k}\sum_{i_j\in[n],j\ne k}w_{i_1,\ldots,i_d}=0.
\end{eqnarray*}
Thus $\cV$ is a critical point of $f_\lambda|\rU(\bp_1,\ldots,\bp_d)$.  Therefore $\cU_\lambda=\cV$.
\end{proof}
\section{Tensor scaling}\label{sec:tenscal}
For $s\in[1,\infty]$ and $\x=(x_1,\ldots,x_n)\in\R^m$ denote $\|\x\|_s=\bigl(\sum_{i=1}^n |x_i|^s\bigr)^{1/s}$. For convenience we let $\|\x\|\defeq \|\x\|_2$.
For $\x_1,\ldots,\x_n\in\R^n$ we denote $(\x_1,\ldots,\x_d)\in (\R^n)^d$ by the matrix $X\in\R^{n\times d}$.
In this section we assume that $\bp_j\in \R^n_{++},\|\bp_j\|_1=h$ for $j\in[d]$.
Let $P=(\bp_1,\ldots,\bp_d)$.
We denote 
\begin{eqnarray*}
\rU(P)=\{\cU\in \otimes^d\R^n_+, \cU\times_j\cJ_{d-1}=\bp_j,j\in[d]\}.
\end{eqnarray*}
We assume that $\cA\in\otimes^d\R^n_+$, and $\cA\times_j \cJ_{d-1}$ are positive vectors for $j\in[d]$.  Let $\eta$ and $\theta$ be the minimum and the maximum value of positive entries of $\cA$.
For complexity analysis of algorithms we assume that either $\bp_1,\ldots,\bp_d\in\Q_{++}^n$ or $\bp_1,\ldots,\bp_d\in\Z_{++}^n$, and $\cA\in \Q_+^{n\times n}$.

\subsection{Minimization of convex functions and tensor scaling}\label{subsec:mints}
Let $g_\cA$ and $\tilde g_\cA$be the following two functions on $(\R^n)^d$:
\begin{eqnarray}\label{defgC}
&&g_\cA(Y)=\sum_{i_1,\ldots,i_d\in[n]}a_{i_1,\ldots,i_d}\exp(\sum_{j\in[d]}y_{i_j,j})-\sum_{j\in[d]} \sum_{i_j\in[n]}p_{i_j,j}y_{i_j,j},\\
&& \tilde g_\cA(Y)=\sum_{i_1,\ldots,i_d\in[n]}a_{i_1,\ldots,i_d}\exp(\sum_{j\in[d]}y_{i_j,j}), \quad Y=(\y_1,\ldots,\y_d).
\label{deftildegC}
\end{eqnarray}
Clearly these  functions are convex,  We will show that for $d\ge 2$ these two functions are not strictly convex.
As $g_\cA-\tilde g_\cA$ is a linear function it follows that  $g_\cA$ is strictly convex on
an interval of positive length
\begin{eqnarray*}
[Y,Z]=\{t Y+(1-t)Z,t\in[0,1]\}, \quad Z=(\z_1,\ldots,\z_d),
\end{eqnarray*}
if and only if $\tilde g_\cA$ is stricly convex on this interval.

Recall that  $\mathbf{A}$ is an affine subspace of $ (\R^n)^d$ if for  every $r$ points $\y_1,\ldots,\y_r\in\mathbf{A}$ the point $\sum_{i=1}^r a_i\y_i$ is in $\mathbf{A}$ if $\sum_{i=1}^r a_i=1$.
It is well known that $\mathbf{A}$ is an affine subspace if and only if $\mathbf{A}=Y+ \mathbf{U}$ for some subspace $\mathbf{U}\subset (\R^n)^d$.
Consider the following $d-1$ dimensional subspace  $(\R^n)^d$:
\begin{eqnarray*}
\mathbf{U}_{n,d}=\{(\y_1,\ldots,\y_d)\in(\R^n)^d, \y_j=t_j\1, t_j\in\R, \sum_{j=1}^d t_j=0\}
\end{eqnarray*} 
Let $\mathbf{A}=Y+ \mathbf{U}_{n,d}$.
Then the function $\tilde g_\cA$ is constant on $\mathbf{A}$.  Hence $\tilde g_\cA$ is not strictly convex.

The following lemma is well known and we bring its proof for completeness:
\begin{lemma}\label{rminlem}  Let $\cA\in \otimes^d\R^n_+\setminus\{0\}$.  Then the following statements are equivalent:
\begin{enumerate}[\upshape (a)]
\item $\cA$ is scalable  to $\cU\in\rU(P)$. 
\item The function $g_\cA$ achieves minimum on $(\R^n)^d$.
\item The function $g_\cA$ achieves minimum on the affine space
\begin{eqnarray*}
\mathbf{B}(P,\bb)=\{(\y_1,\ldots,\y_d)\in((\R^n)^d, \langle \bp_j,\y_j\rangle=b_j, j\in[d]\}, \quad \bb=(b_1,\ldots,b_d).
\end{eqnarray*}
\item Let $\mathbf{B}(P)\subset \mathbf{B}(P,\0)$ be the maximal subspace on which  
$g_\cA$ is constant.  Let $\mathbf{C}$ the orthogonal complement of $\mathbf{B}(P)$ in $\mathbf{B}(P,\0)$.  The function $g_\cA$ achieves minimum on the subspace $\mathbf{C}$.
\end{enumerate}
\end{lemma}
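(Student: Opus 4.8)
The engine of the proof is the gradient of $g_\cA$: for $k\in[d]$ and $i_k\in[n]$,
\[
\frac{\partial g_\cA}{\partial y_{i_k,k}}(Y)=\Big(\,\sum_{i_j\in[n],\,j\ne k}a_{i_1,\ldots,i_d}\exp\big(\sum_{j\in[d]}y_{i_j,j}\big)\Big)-p_{i_k,k}.
\]
Hence $Y=(\y_1,\ldots,\y_d)$ is a critical point of $g_\cA$ exactly when the tensor $\cU$ with entries $u_{i_1,\ldots,i_d}=\exp(\sum_{j\in[d]}y_{i_j,j})\,a_{i_1,\ldots,i_d}$ satisfies $\cU\times_k\cJ_{d-1}=\bp_k$ for every $k$, i.e.\ exactly when $\cU\in\rU(P)$ is a scaling of $\cA$ with scaling vectors $\y_1,\ldots,\y_d$; conversely any scaling of $\cA$ in $\rU(P)$ gives such a $Y$. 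Since $g_\cA$ is smooth and convex, ``critical point'' $=$ ``global minimizer on $(\R^n)^d$''. This is (a)$\Leftrightarrow$(b), and I would prove it first.

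For the rest I would record three facts. (i) $g_\cA$ and $\tilde g_\cA$ are invariant under translation by $\mathbf U_{n,d}$: for $V=(t_1\1,\ldots,t_d\1)$ with $\sum_jt_j=0$ the exponentials are unchanged (as $\sum_j(y_{i_j,j}+t_j)=\sum_jy_{i_j,j}$) and the linear term shifts by $\sum_jt_j\|\bp_j\|_1=h\sum_jt_j=0$. (ii) Writing $\mathrm{lin}(Y):=\sum_{j\in[d]}\langle\bp_j,\y_j\rangle$, one has $g_\cA(Y+t(\1,\ldots,\1))=e^{dt}\tilde g_\cA(Y)-\mathrm{lin}(Y)-dht$, a strictly convex function of $t\in\R$ (as $\tilde g_\cA(Y)>0$, using $\cA\ne\0$, and $h>0$) that always attains its minimum over $t$, of value $h-h\log h+h\log\tilde g_\cA(Y)$. (iii) By definition $\mathbf B(P)$ is the set of directions in $\mathbf B(P,\0)$ along which $g_\cA$ is constant (a subspace, since the constancy directions of a convex function form one), $\mathrm{lin}$ vanishes on $\mathbf B(P,\0)$ so $g_\cA=\tilde g_\cA$ on $\mathbf C$, and $(\R^n)^d=\mathbf B(P)\oplus\mathbf U_{n,d}\oplus\spa\{(\1,\ldots,\1)\}\oplus\mathbf C$ with $g_\cA$ constant along $\mathbf B(P)\oplus\mathbf U_{n,d}$.

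Now (b)$\Leftrightarrow$(c): if $Y^*$ minimizes $g_\cA$ on $\mathbf B(P,\bb)$, then $\nabla g_\cA(Y^*)\perp\mathbf B(P,\0)$, i.e.\ $\nabla g_\cA(Y^*)=(\mu_1\bp_1,\ldots,\mu_d\bp_d)$; summing the $k$-th block over $i_k$ gives $\tilde g_\cA(Y^*)=(1+\mu_k)h$, so $1+\mu_k=\tilde g_\cA(Y^*)/h>0$ is independent of $k$, and shifting $Y^*$ along the diagonal by the $t$ with $e^{dt}(1+\mu_k)=1$ makes $\nabla g_\cA$ vanish — a global minimizer. Conversely, given a global minimizer $\hat Y$ ($\nabla g_\cA(\hat Y)=\0$, hence $\tilde g_\cA(\hat Y)=h$), translate it into $\mathbf B(P,\bb)$ by a suitable element of $\mathbf U_{n,d}\oplus\spa\{(\1,\ldots,\1)\}$ (possible since this is a complement of $\mathbf B(P,\0)$); the $\mathbf U_{n,d}$-part keeps $\nabla g_\cA=\0$ and the diagonal shift $t(\1,\ldots,\1)$ turns it into $((e^{dt}-1)\bp_1,\ldots,(e^{dt}-1)\bp_d)\perp\mathbf B(P,\0)$, so the resulting point minimizes $g_\cA$ on $\mathbf B(P,\bb)$. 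And (b)$\Leftrightarrow$(d): minimizing out $\mathbf B(P)\oplus\mathbf U_{n,d}$ (where $g_\cA$ is constant) and then the diagonal coordinate (always attained, by (ii)) yields $\min_{(\R^n)^d}g_\cA=h-h\log h+h\min_{c\in\mathbf C}\log g_\cA(c)$, with the two minima attained together; since $\log$ is increasing this is exactly (d).

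The point I expect to need the most care is treating uniformly the minima over the three different domains appearing — all of $(\R^n)^d$ in (b), the non-homogeneous affine space $\mathbf B(P,\bb)$ in (c), and the linear subspace $\mathbf C$ in (d). This goes through because $g_\cA$ is constant along $\mathbf B(P)\oplus\mathbf U_{n,d}$, coercive along the single diagonal direction, and strictly convex on each translate of $\mathbf C$, so the only possible obstruction to attaining a minimum lies in the $\mathbf C$-directions and is the same in all three cases. Making this precise, together with the direct-sum bookkeeping of (iii) and the verification that $\mathbf B(P)$ is indeed the full set of constancy directions of $g_\cA$ inside $\mathbf B(P,\0)$, is where the work lies.
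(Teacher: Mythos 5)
Your proposal is correct and follows essentially the same route as the paper: (a)$\Leftrightarrow$(b) via the identification of critical points of $g_\cA$ with scalings of $\cA$ lying in $\rU(P)$, the passage between (b) and (c) via translations along $(t_1\1,\ldots,t_d\1)$ (the paper packages your $\mathbf U_{n,d}$-invariance and diagonal minimization into the single identity $g_{\cA}(\phi_{\bt}(Y))= e^Tg_\cA(Y)+\sum_j((1-e^T)\langle \bp_j,\y_j\rangle+t_j h)$), and the equivalence with (d) via the splitting $\mathbf{B}(P,\0)=\mathbf{B}(P)\oplus\mathbf{C}$ together with constancy of $g_\cA$ along $\mathbf{B}(P)$. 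The only cosmetic differences are that you close the equivalences as (b)$\Leftrightarrow$(c) and (b)$\Leftrightarrow$(d) while the paper runs the cycle (b)$\Rightarrow$(c)$\Rightarrow$(a), and that the paper identifies $\mathbf{B}(P)$ explicitly as $\{Y\in\mathbf{B}(P,\0):\sum_j y_{i_j,j}=0 \text{ whenever } a_{i_1,\ldots,i_d}>0\}$, the verification you correctly flag as the remaining bookkeeping.
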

\begin{proof} We first recall a well known fact:  Let $\mathbf{A}$ an affine subspace of $(\R^n)^d$.   Then $g_\cA|\mathbf{A}$ is a convex function.  Hence a point $X^\star\in \mathbf{A}$ is critical if and only if it is a minimum point.

\noindent
(a)$\iff$(b).   Observe that  $X^\star$ is a critical point of $g_\cA$ if and only if $\cU=[\exp(\sum_{j\in[d]} x_{i_j,j}^\star) a_{i_1,\ldots,i_d}]\in \rU(P)$.

\noindent
(b)$\Rightarrow$(c)  Assume that $X^\star$ is a minimium point of $g_\cA$.  Let $\langle \bp_j,\x_j^\star\rangle=a_j$ for $j\in[d]$.  Then $X^\star$ is a minimum point on 
$\mathbf{B}(P,\ba)$.  Let $\bt=(t_1,\ldots,t_d)\in\R^d$. Consider the  translation $\phi_{\bt}:(\R^n)^d \to (\R^n)^d$ given by $\phi_{\bt}(\y_1,\ldots,\y_d)=(\y_1+t_1 \1,\ldots,\y_d+t_d \1)$. Then we have the following two equalities:
\begin{eqnarray*}
g_{\cA}(\phi_{\bt}(Y))= e^Tg_\cA(Y)+\sum_{j\in[d]}((1-e^T)\langle \bp_j,\y_j\rangle+t_j h),
\end{eqnarray*}
 for $T=t_1+\cdots +t_d$.  Furthermore $\phi_{\bt}(\mathbf{B}(P,\ba))= \mathbf{B}(P,\ba+h\bt)$.
Therefore $\phi_{(1/h)(\bb-\ba)(b_1-a_1)/h}(X^\star)$ is a minimal point of 
$g_\cA|\mathbf{B}(P,\bb)$.

\noindent
(c)$\iff$(d)   Define
\begin{eqnarray}\label{defsubsC}
\begin{aligned}
\mathbf{B}&=\{(\y_1,\ldots,\y_d)\in  \mathbf{B}(P,\0), \sum_{j=1}^d y_{i_j}=0 \textrm{ if } a_{i_1,\ldots,i_d}>0\},\\
\mathbf{C}&=\mathbf{B}^\perp\subset \mathbf{B}(P,\0).
\end{aligned}
\end{eqnarray}
Since $e^t$ is strictly convex it follows the restriction of $g_\cA$ to the interval $[Y,Z]$ of positive length
 is not strictly convex if and only if $Z-Y)\in\mathbf{B}$ \cite{Fri11}.  Hence $\mathbf{B}=\mathbf{B}(P)$ is the maximal subspace in  $\mathbf{B}(P,\0)$ on which $g_\cA$ is constant.  Therefore, for each $Y\in \mathbf{B}(P,\0)$,  $Z\in \mathbf{B}\setminus\{\0\}$ the function $g_{\cA}$ is constant on the line $Y+tZ, t\in\R$.
 Hence the assumption (c) yields that $g_\cA$ attains minimum on $\mathbf{C}$.  Vice versa, assume that $g_\cA|\mathbf{C}$ has minimum. As  
$\mathbf{B}(P,\0)=\mathbf{B}\oplus\mathbf{C}$ we deduce that $g_\cA$ has minimum on $\mathbf{B}(P,\0)$.
 
 \noindent
(c)$\Rightarrow$(a)  Assume that $Z^\star\in\mathbf{B}(P,\0)$ is a minimum point of $g_\cA$.  Let $\cV=[\exp(\sum_{j\in[d]}z_{i_j,j}^\star)a_{i_1,\ldots,i_d}]$. Use Lagrange mulitpliers to deduce that $\cV\times_k \cJ_{d-1}=t_k\bp_k$.  Hence $\langle \cV,\cJ_d\rangle=t_k$.  As $\cA\gneq 0$ we deduce that $t_1=\ldots =t_d>0$.  Thus $t^{-1}\cV\in \rU(P)$.
\end{proof}
In this lemma we estimate the eigenvalues of the Hessian of $g_{\cA}|\mathbf{C}$:
\begin{lemma}\label{eigHesest}
Let $\cA\in\otimes^d\R^n_+\setminus{0}$.  Denote by $\alpha$ and $\beta$ the minimum and the maximum of nonzero entries of $\cA$.  Let $\mathbf{C}$  be defined by \eqref{defsubsC}.  Denote  $\hat g_\cA:\defeq g_\cA|\mathbf{C}$.  Then $\hat g_\cA$ is strictly convex on $\mathbf{C}$.  Assume that $Y\in \mathbf{C}$.    Then the eigenvalues of the Hessian of $\hat g_\cA(Y)$, with respect to an orthonormal basis in $\mathbf{C}$, lie in the interval $[\alpha(Y),\beta(Y)]$ given by
\begin{eqnarray*}
\log\alpha(Y)=\min\{\log a_{i_1,\ldots,i_d} +\sum_{j=1}^d y_{i_j}, i_1,\ldots,i_d\in[n], a_{i_1,\ldots,i_d}>0\},\\
\log\beta(Y)=\max\{\log a_{i_1,\ldots,i_d} +\sum_{j=1}^d y_{i_j}, i_1,\ldots,i_d\in[n], a_{i_1,\ldots,i_d}>0\}
\end{eqnarray*}
Assume furthermore that $\cA>0$.  Then $\mathbf{C}=\mathbf{B}(P,\0)$.
\end{lemma}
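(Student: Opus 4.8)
The plan is to pass to the Hessian of $g_\cA$, exhibit it as an explicit nonnegative combination of rank-one matrices indexed by $\supp\cA$, restrict that quadratic form to $\mathbf C$, and read off both claims. First I would note that $g_\cA-\tilde g_\cA$ is linear (see \eqref{defgC}, \eqref{deftildegC}), so $\nabla^{2} g_\cA=\nabla^{2}\tilde g_\cA$, and differentiate $\tilde g_\cA$ directly. For a multi-index $I=(i_1,\dots,i_d)$ set $w_I(Y)\defeq a_{i_1,\dots,i_d}\exp\bigl(\sum_{j\in[d]}y_{i_j,j}\bigr)\ge 0$ and let $e_I\in(\R^n)^d$ be the vector whose $j$-th block is the standard basis vector $e_{i_j}\in\R^n$, so that $\langle e_I,Z\rangle=\sum_{j=1}^{d}z_{i_j,j}$ for $Z=(\z_1,\dots,\z_d)$. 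Differentiation would give
\[
Z^{\top}\nabla^{2} g_\cA(Y)\,Z=\sum_{I:\,a_I>0}w_I(Y)\,\langle e_I,Z\rangle^{2},\qquad \nabla^{2} g_\cA(Y)=\sum_{a_I>0}w_I(Y)\,e_Ie_I^{\top},
\]
a positive semidefinite form that vanishes at $Z$ exactly when $\langle e_I,Z\rangle=0$ for all $I\in\supp\cA$. Strict convexity of $\hat g_\cA$ then follows: for $Y\in\mathbf C$ and $\0\ne Z\in\mathbf C\subseteq\mathbf{B}(P,\0)$, if $Z^{\top}\nabla^{2} g_\cA(Y)Z=0$ then $\langle e_I,Z\rangle=0$ for all $a_I>0$, so $Z\in\mathbf B$ by \eqref{defsubsC}, whence $Z\in\mathbf B\cap\mathbf B^{\perp}=\{\0\}$, a contradiction; equivalently this is the fact, used in the proof of Lemma \ref{rminlem}, that $g_\cA$ is strictly convex along any positive-length segment of $\mathbf{B}(P,\0)$ whose direction avoids the flat subspace $\mathbf B$, together with $\mathbf C\cap\mathbf B=\{\0\}$.

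For the spectral bounds, let $\pi$ be orthogonal projection of $(\R^n)^d$ onto $\mathbf C$; in an orthonormal basis of $\mathbf C$ the Hessian of $\hat g_\cA$ at $Y$ represents $\hat H(Y)=\sum_{a_I>0}w_I(Y)\,\pi(e_I)\pi(e_I)^{\top}$. Since $\log\alpha(Y)$ and $\log\beta(Y)$ are by definition the minimum and maximum of $\log w_I(Y)$ over $I\in\supp\cA$, I would sandwich $\alpha(Y)\,N\preceq\hat H(Y)\preceq\beta(Y)\,N$ where $N\defeq\sum_{a_I>0}\pi(e_I)\pi(e_I)^{\top}$, reducing everything to the spectrum of $N$ on $\mathbf C$. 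By \eqref{defsubsC}, $\mathbf B$ is the orthogonal complement inside $\mathbf{B}(P,\0)$ of $\spa\{\pi(e_I):a_I>0\}$, so $\mathbf C=\spa\{\pi(e_I):a_I>0\}$ and $N$ is the frame operator of a spanning family of $\mathbf C$; computing $N$ and thereby localizing the eigenvalues of $\hat H(Y)$ in the stated interval is the step I expect to be the main obstacle, and the point where the precise choice of $\mathbf C$ and the marginals $P$ enter.

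Finally, for the last assertion, suppose $\cA>0$. Then the condition in \eqref{defsubsC} defining $\mathbf B$ reads $\langle e_I,Z\rangle=0$ for every $I\in[n]^{d}$; fixing all blocks of $I$ but one and letting the free index vary forces each $\z_l$ to be a scalar multiple $c_l\1$ of $\1$ with $\sum_l c_l=0$, and then $Z\in\mathbf{B}(P,\0)$ gives $0=\langle\bp_l,\z_l\rangle=c_l h$, so every $c_l=0$. Hence $\mathbf B=\{\0\}$ and $\mathbf C=\mathbf{B}(P,\0)$, as claimed.
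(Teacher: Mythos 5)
Your handling of the strict convexity claim and of the final assertion is correct and is essentially the paper's argument in a different parametrization: writing $\nabla^2 g_\cA(Y)=\sum_{a_I>0}w_I(Y)\,e_Ie_I^\top$ and observing that the kernel of this quadratic form meets $\mathbf{B}(P,\0)$ exactly in $\mathbf{B}$ is the rank-one version of the paper's factorization $g_\cA|\mathbf{C}=r_\cA\circ(\pi_\cA\circ\psi)|\mathbf{C}$, where $\pi_\cA\circ\psi$ is injective on $\mathbf{C}$ and $r_\cA$ has a diagonal, positive definite Hessian with entries $w_I(Y)$. Your derivation of $\mathbf{C}=\mathbf{B}(P,\0)$ for $\cA>0$ (forcing $\z_l=c_l\1$ and then $c_l=0$ from $\langle\bp_l,\z_l\rangle=0$) is also exactly the paper's.

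The genuine gap is where you flag it: the eigenvalue localization. You correctly reduce the claim to the spectrum of the frame operator $N=\sum_{a_I>0}\pi(e_I)\pi(e_I)^\top$ on $\mathbf{C}$, since $\alpha(Y)N\preceq \hat H(Y)\preceq\beta(Y)N$; but to land in the stated interval $[\alpha(Y),\beta(Y)]$ you would need $N=\Id_{\mathbf{C}}$, and you neither prove this nor observe whether it holds. The paper closes this step differently: it views $\hat g_\cA$ as the restriction of $r_\cA$ to the subspace $\mathbf{W}=\pi_\cA(\psi(\mathbf{C}))$ of $\otimes^d\R^n$ and applies Cauchy interlacing to the compression of the diagonal Hessian $\diag(w_I(Y))$ onto $\mathbf{W}$, which does lie in $[\alpha(Y),\beta(Y)]$. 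Note, however, that identifying that compression with the Hessian of $\hat g_\cA$ in an orthonormal basis of $\mathbf{C}$ requires $\pi_\cA\circ\psi|\mathbf{C}$ to be an isometry, which is precisely the normalization your operator $N$ makes visible: one has $\tr N=\sum_{a_I>0}\|\pi(e_I)\|^2$, which for $\cA>0$ is of order $n^{d-1}\dim\mathbf{C}$ rather than $\dim\mathbf{C}$ (already for $d=n=2$, $\bp_1=\bp_2=\tfrac12\1$ and constant $\cA$ one computes $N=2\,\Id_{\mathbf{C}}$). So your reduction is sound and in fact sharper bookkeeping than the paper's, but as written it does not prove the stated interval; to complete it along your lines you must either carry the factor $N$ (i.e., state the bounds as $[\alpha(Y)\lambda_{\min}(N),\beta(Y)\lambda_{\max}(N)]$ and bound the spectrum of $N$), or switch to the $\psi$-coordinates as the paper does and account for the non-isometry there.
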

\begin{proof}
Let $\mathbf{W}(\cA)\subset \otimes^d\R^n$ be the subspace of all $\cX=[x_{i_1,\ldots,i_d}]$ such that $x_{i_1,\ldots,i_d}=0$ if $a_{i_1,\ldots,i_d}=0$.  Denote by $\pi_\cA: \otimes^d\R^n\to \mathbf{W}(\cA) $ the orthogonal projection on $\mathbf{W}(\cA) $.
Let $r_\cA:\mathbf{W}(\cA)\to \R$ be given by 

\noindent
$r_\cA=\sum_{i_1,\ldots,i_d\in[n]} a_{i_1,\ldots,i_d}\exp(y_{i_1,\ldots,i_d})$.  The Hessian $He(r_\cA)(\cY)$ of $r_\cA$ at $\cY$ is a diagonal matrix with entries $a_{i_1,\ldots,i_d}\exp(y_{i_1,\ldots,i_d}), a_{i_1,\ldots,i_d}>0$.  Therefore the maximum and the minimum eigenvalue of $He(r_\cA)(\cY)$ are
\begin{eqnarray*}
&&\lambda_{\max}(\cY)=\max\{a_{i_1,\ldots,c_d}\exp(y_{i_1,\ldots,i_d}), a_{i_1,\ldots,i_d}>0\} \ge \\
&&\lambda_{\min}(\cY)=\min\{a_{i_1,\ldots,c_d}\exp(y_{i_1,\ldots,i_d}), a_{i_1,\ldots,i_d}>0\}.
\end{eqnarray*}
Hence $r_\cA$ is strictly convex on $\mathbf{W}(\cA)$.  Let $\psi:(\R^n)^d\to \otimes^d\R^n$ be given by
\begin{eqnarray*}
(\psi(\y_1,\ldots,\y_d))_{i_1,\ldots,i_d}=\sum_{j\in [d]} y_{i_j,j}, \forall i_1,\ldots,i_d\in[n].
\end{eqnarray*}
By definition of $\mathbf{C}$ the map $\pi_\cA\circ \psi: \mathbf{C}\to \mathbf{W}(\cA)$ is an injection. Let $\mathbf{W}=\pi(\psi(\mathbf{C}))$.  Observe next that $g_\cA|\mathbf{C}=r_\cA|\mathbf{W}$.  Hence $g_\cA|\mathbf{C}$ is strictly convex on  $\mathbf{C}$.  The Cauchy interlacing theorem yields that the eigenvalues of the Hessian $He(r_\cA)(\cY), \cY\in \mathbf{W}$  restricted to the subspace $\mathbf{W}$ are in the interval $[\beta(\y_1,\ldots,\y_d),\alpha(\y_1,\ldots,\y_d)]$.

Assume now that $\cA>0$.  Suppose that $\psi(\cA)(\y_1,\ldots,\y_d)=0$:
\begin{eqnarray*}
y_{i_1,1}+\cdots+y_{i_d,d}= 0 \, \forall i_1,\ldots,i_d\in[n].
\end{eqnarray*}
Fix $k\in[d]$ and sum the above equalities on all $i_j$ for $j\ne d$.  Hence $\y_k=t_k\1$, and $\sum_{k\in[d]} t_k=0$.  Clearly $\langle \bp_k,\y_k\rangle=t_k h$.  Hence $\psi:\mathbf{B}(P,\0)\to \otimes^d\R^n$ is an injection.  Thus $\mathbf{C}=\mathbf{B}(P,\0)$.
\end{proof}

In what follows we assume that $\cA$ satisfies one of the equivalent assumptions of Lemma \ref{rminlem}.  
\subsection{Matrix scaling.}\label{subsec:mat}
The classical studied case is matrix scaling ($d=2$): \cite{ALOW17F,BPS66,KK96,KLRS08,Men68,MS69,NR99,Sin64,SK67}.
Let $A_0=A,\ldots, A_k,\ldots,$ be matrices obtained by any convergent scaling algorithm.  
One of the most used algorithms in matrix scaling is Sihkhorn algorithm \cite{Sin64}, sometimes abbreviated here as SA, also known as the RAS algorithm \cite{Bac70}.
Let us describe it briefly for the square matrices.   Assume that $A=[a_{i,j}]\in\R_+^{n\times n}$ is scalable to $U\in\rU(P)$.
Perform the row and column scaling alternatively:
\begin{eqnarray*}
R(A)=[\frac{p_{i,1}}{r_i}a_{i,j}], \quad C(A)=[\frac{p_{j,2}}{c_j}a_{i,j}], \quad \br=A \1,\; \bc=A^\top \1.
\end{eqnarray*}
Then $A_{2k-1}=R(A_{2k-2}), A_{2k}=C(A_{2k-1})$ for $k\in\N$.

We now recall a well known fact that Sinkhorn algorithm is a partial minimization algorithm \cite{KLRS08}.  Consider the function $g_A$:
\begin{eqnarray*}
g_A(\x,\y)=\sum_{i,j=1}^n e^{x_i+y_j}a_{i,j} -\sum_{i=1}^n p_{i,1}x_i -\sum_{j=1}^n p_{j,2}y_j,
\end{eqnarray*}
Find $\min\{g_A(\x,0), \x\in\R^n\}$ and $\min\{g_A(0,\y), \y\in\R^n\}$.  These minima are achieved at the unique  critical points  
\begin{eqnarray*}
&&\x^{(1)}=(x^{(1)}_1,\ldots,x_n^{(1)}),\quad  x_i^{(1)}=\log \frac{p_{i,1}a_{i,j}}{r_i},\\  
&&\y^{(1)}=(y^{(1)}_1,\ldots,y_n^{(1)}),\quad y_j^{(1)}=\log \frac{p_{j,2}a_{i,j}}{c_j}.
\end{eqnarray*}
Note that $g_A(\x^{(1)},\y)=g_{R(A)}(0,\y)$ and $g_A(\x,\y^{(1)})=g_{C(A)}(\x,0)$.
Define $\x(\y)\defeq\arg\min f(\x,\y)$, where $\y$ is fixed, and $\y(\x)\defeq\arg\min f(\x,\y)$, where $\x$ is fixed.  Then Sinkhorn algorithm is the following partial minimization algorithm: 
Start with $\x^{(0)}=\y^{(0)}=0$.  Then  for $k\in\N$
\begin{eqnarray*}
\begin{cases}
\x^{(k)}=\x^{(k-1)}+\x(\y^{(k-1)}), &\y^{(k)}=\y^{(k-1)}  \textrm{ for } k \textrm{ odd},\\
\x^{(k)}=\x^{(k-1)},& \y^{(k)}=\y^{(k-1)}+\y(\x^{(k)})  \textrm{ for } k \textrm{ even}
\end{cases}
\end{eqnarray*}

The following complexity results are known for Sinkhorn algorithm.  We assume that $A\in\Q_+^{n\times n}, \bp_1,\bp_2\in \Q_{++}^n, \|\bp_1\|_1=\|\bp_2\|_1=h$. 
In some complexity estimates we will assume without loss of generality that $\bp_1,\bp_2\in\Z_{++}^n$.  Let $A_{k}=X_{k} A Y_{k}$ for $k\in\N$ where 
\begin{eqnarray*}
\begin{cases}
A_{2k-1}=X_{2k-1} A Y_{2k-1}, & X_{2k-1}=\diag(\exp(\x^{(k)}), Y_{2k-1}=\diag(\exp(\y^{(k-1)}), \\
A_{2k}=X_{2k} A Y_{2k}, & X_{2k}=\diag(\exp(\x^{(k)}), Y_{2k}=\diag(\exp(\y^{(k)})
\end{cases}
\end{eqnarray*}
Here $\diag(\y)=\diag(y_1,\ldots,y_n)$   For a given $s\in[1,\infty]$ and $\varepsilon \in(0,1/2)$ we stop our algorithm when
\begin{eqnarray}\label{matsap}
\max\bigl(\|A_{k} \1 -\bp_1\|_s,\|A_k^\top \1-\bp_2\|_s\bigr)\le \varepsilon.
\end{eqnarray}
Observe that the left hand side of this inequality is a decreasing function of $s\in[1,\infty]$.  Furthermore, at least one of the terms appearing in $\max$ is zero.

Assume first that $A$ is positive.  Then for $\bp_1,\bp_2\in \Z_{++}^n$ the complexity of Sinkhorn algorithm for $s=2$ is $O(\frac{\rho h}{\varepsilon^2}\log \frac{h\theta}{\eta})$ \cite[Theorem 4.1]{KLRS08}, where $\rho$ is the maximum entry of $\bp_1$ and $\bp_2$.
A stronger result is obtained in \cite{AWR17}.  
Let $\|A\|_1$ be the sum of all entries of $A$.   Then there exists $k\in [ \lceil\frac{4\log (\|A\|_1/\eta)}{\varepsilon^2}\rceil]$ such that \eqref{matsap} is satisfied for $s=1$.  

If $A$ is nonnegative then the complexity of Sinkhorn algorithm with respect to the number of iterations  is $O(\frac{ h^2}{\varepsilon^2}\log \frac{\rho\theta}{\eta})$ \cite[Theorem 5.2]{KLRS08} for $s=2$.  Better complexity results for different algorithms are given in \cite{ALOW17F}.
Another complexity estimate using an ellipsoid method for convex optimization methods for is
$\widetilde {O}\bigl(n^4 \log(1/\varepsilon) \log (\theta/\eta)\bigr)$, (ignoring the $\log$ factors) for $s=\infty,\bp_1=\bp_2=\1$ \cite{KK96}  and $s=1$ \cite{NR99}.

The big advantage of Sinkhorn method are: simplicity of implementation, possible implementation in parallel, simplicity in analyzing the distance given by the left hand side \eqref{matsap}.  The disadvantage to compare with other methods is the factor $\varepsilon^{-2}$ versus $\log (1/\varepsilon)$ of other methods.  
\subsection{Tensor scaling for positive tensors}\label{subsec:tenscal1}
For $\cA=[a_{i_1,\ldots,i_d}]\in\otimes^d\R^n$ denote 
\begin{eqnarray*}
&&s_l(\cA)=(s_{1,l}(\cA),\ldots,s_{n,l}(\cA)), \quad s_{i_l,l}(\cA)=\sum_{i_j\in[n], j\ne l}c_{i_1,\ldots,i_l,\ldots,i_n}, \; l\in[d],\\
&&\|\cA\|_\infty=\max\{|a_{i_1,\ldots,i_d}|, i_1,\ldots,i_d\in[n]\}.
\end{eqnarray*}
In this paper we assume that $\cA\in\otimes^d\R^n_+$, where $\s_j(\cA)>\0$ for $j\in [d]$, unless stated otherwise.
Given a positive vector $\br=(r_1,\ldots,r_n)$ and $j\in[d]$ we denote by $\cD(\cA,\br,j)$ the following (diagonal) scaling of $\cA$ in the $j$-mode:
\begin{eqnarray}\label{diagscal}
\cD(\cA,\br,j)=[\frac{r_{i_j}}{s_{i_j,j}}a_{i_1,\ldots,i_j,\ldots,i_d}].
\end{eqnarray}
Then $\s_j(\cD(\cA,\br,j))=\br$.
It is straightforward to show that
\begin{eqnarray}\label{basinSA}
\|\cD(\cA,\br,j)-\cA\|_1=\|\br-\s_j(\cA)\|_1.
\end{eqnarray}
For $\x,\y\in \R^n_+$ we define $\x\wedge \y=(\min(x_1,y_1),\ldots,\min(x_n,y_n))$.

In this subsection we show that for a positive tensor $\cA\in\otimes^d\R^n$  Sinkhorn algorithm converges linearly to the unique $\cU\in\rU(\bp_1,\ldots,\bp_d)$. 

Let $\x_l(\cA)=(x_{1,l}(\cA),\ldots,x_{n,l}(\cA))$ be given by
\begin{eqnarray}\label{xilcCform}
x_{i_l,l}(\cA)=\log p_{i_l,l}-\log s_{i_l,l}(\cA), \quad i_l\in[n], \; l\in[d].
\end{eqnarray}
Recall the Kullback-Leibler divergence between $\bp,\bq\in\Pi^n$:
\begin{eqnarray*}
\cK(\bp||\bq)=\sum_{i=1}^n p_i(\log p_i -\log q_i).
\end{eqnarray*}
Assume that $\bp_1,\ldots,\bp_d)\in\Pi^n_o$.  Then
\begin{eqnarray}\label{prodCpform}
\langle \bp_j,\x_j(\cA)\rangle=\cK\bigl(\bp_j,\|\cA\|_1^{-1}\s_j(\cA)\bigr) -\log\|\cA\|_1.
\end{eqnarray}

We define  Sinkhorn algorithm as follows.  
Start with $\cA_0=\cA$ and the initial point 
$X^{(0)}=(\x_1^{(0)},\ldots,\x_d^{(0)})=\0$.  Suppose that $X^{(k-1)}=(\x_1^{(k-1)},\ldots,\x_d^{(k-1)})$ is defined for $k\in\N$.  Denote
\begin{eqnarray}\label{defAkSA}
\cA_{k-1}=[a_{i_1,\ldots,i_d,k-1}]=[\exp(\sum_{j\in[d]}x^{(k-1)}_{i_j,j})a_{i_1,\ldots,i_d}].
\end{eqnarray}
Then  
\begin{eqnarray}\label{defellk}
&&\ell(k-1)=\arg\max\{\|\s_j(\cA_{k-1})-\frac{\langle \s_j(\cA_{k-1}),\bp_j\rangle}{\|\bp_j\|^2} \bp_j\|_1,j\in[d]\},\\  \label{defSA}
&&\x^{(k)}_j=
\begin{cases}
\x^{(k-1)}_j &\textrm{ for } j\ne l(k-1),\\
\x_j(\cA_{k-1})+\x_j^{(k-1)} &\textrm{ for } j=l(k-1)
\end{cases}
\end{eqnarray}
Note that for $d=2$  the above Sinkhorn algorithm is different from the standard Sinkhorn algorithm only for $k=1$.  Clearly for $k\ge 1$ the tensor $\cA_k$ is a probability tensor.  Furthermore, $\cA_k=\cD(\cA_{k-1},\bp_{\ell(k-1)},\ell(k-1))$.

The main result of this subsection is:
\begin{theorem}\label{linconvSA}  Let $\bp_1,\ldots,\bp_d\in \Pi^n_o$.  Assume that the tensor $\cA\in\otimes^d\R^n$ is positive.  
Then 
\begin{enumerate}[\upshape (a)]
\item Sinkhorn algorithm converges geometrically to a unique $\cU\in\rU(P)$.  
\item The sequence $\cK(\bp_j, \s_j(\cA_k)), k\in\N$ converges geometrically to $0$ for each $j\in[d]$.
\item The sequence $X^{(k)})$ converge geometrically to a minimum point $X^\star$ of $g_\cA$.
\item For $k\ge 1$ we have
\begin{eqnarray}\label{difxkxk}\begin{split}
&g_\cA(X^{(k)})-g_\cA(X^{(k+1)})=\\
&\sum_{j=1}^d\langle \bp_j,\x_j^{(k+1)}-\x_{j}^{(k)}\rangle=\langle \bp_l,\x_l^{(k+1)}-\x_{l}^{(k)}\rangle=\cK(\bp_l||\s_l(\cA_{k})), 
\end{split}
\end{eqnarray}
for  $l=\ell(k)$.
\end{enumerate}
\end{theorem}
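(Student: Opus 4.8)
The plan is to recognize Sinkhorn's algorithm as a greedy partial (block) minimization of the convex function $g_\cA$ of \eqref{defgC}, and then to feed this into the partial–minimization machinery of Appendix \ref{appendix:pm}, whose hypotheses are supplied by Lemmas \ref{rminlem} and \ref{eigHesest}. The first observation is that the stationarity equation $\partial g_\cB/\partial y_{i_l,l}=0$ reads exactly $y_{i_l,l}=\log p_{i_l,l}-\log s_{i_l,l}(\cB)=x_{i_l,l}(\cB)$ (cf.\ \eqref{xilcCform}), so the update \eqref{defSA} is precisely the exact minimization of $g_\cA$ over the block $\y_{\ell(k-1)}$; equivalently $\cA_k=\cD(\cA_{k-1},\bp_{\ell(k-1)},\ell(k-1))$, as already noted. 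Thus $(X^{(k)})$ is a greedy block–minimization sequence for $g_\cA$, and the rule \eqref{defellk} is exactly the rule ``minimize over the block with the largest constrained gradient'': indeed $\nabla_j g_\cA(X^{(k)})=\s_j(\cA_k)-\bp_j$, and projecting this block gradient onto $\bp_j^{\perp}$ (the $j$-th factor of the tangent space of $\mathbf{B}(P,\0)$) gives exactly the vector $\s_j(\cA_k)-\frac{\langle\s_j(\cA_k),\bp_j\rangle}{\|\bp_j\|^2}\bp_j$ whose $\ell_1$-norm is maximized in \eqref{defellk}.

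Next I would verify the hypotheses of the abstract theorem. Since $\cA>0$ and $P\in\Pi^n_o$, $\cA$ is scalable to an element of $\rU(P)$ (Theorem \ref{diagscalt}), so by Lemma \ref{rminlem} $g_\cA$ attains its minimum $g^\star$ on $(\R^n)^d$; moreover $g_\cA$ is constant exactly along the $(d-1)$–dimensional subspace $\mathbf{U}_{n,d}$ and is coercive on a linear complement of it, so every sublevel set is compact modulo $\mathbf{U}_{n,d}$. As each Sinkhorn step is a minimization, all iterates lie in $L=\{g_\cA\le g_\cA(X^{(0)})\}=\{g_\cA\le\|\cA\|_1\}$. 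By Lemma \ref{eigHesest} (with $\mathbf{C}=\mathbf{B}(P,\0)$, valid because $\cA>0$) the Hessian of $g_\cA$ at any $X$, restricted to a complement of $\mathbf{U}_{n,d}$, has eigenvalues pinched in $[\alpha(X),\beta(X)]$, where $\alpha,\beta$ depend on $X$ only through the entries of the scaled tensor $\cA_X$; on a set compact modulo $\mathbf{U}_{n,d}$ these are therefore bounded above (by $\le 1$, the probability tensor bound) and bounded below away from $0$ as soon as we know the scaled tensors form a compact family of strictly positive tensors. Granting this, the partial–minimization theorem of Appendix \ref{appendix:pm} yields geometric decay of $g_\cA(X^{(k)})-g^\star$ and geometric convergence of $X^{(k)}$ to a minimizer $X^\star$, which is statement (c); and since $\cA_k=[\exp(\sum_{j}x^{(k)}_{i_j,j})a_{i_1,\dots,i_d}]$ (cf.\ \eqref{defAkSA}) is a Lipschitz function of $X^{(k)}$ on compacta, $\cA_k$ converges geometrically to the unique $\cU\in\rU(P)$ scaling $\cA$ (Theorem \ref{diagscalt}), proving (a).

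The identity (d) I would establish directly. For $k\ge1$ the tensor $\cA_k$ is a probability tensor, so $\tilde g_\cA(X^{(k)})=\langle\cJ_d,\cA_k\rangle=1$ (with $\tilde g_\cA$ from \eqref{deftildegC}), whence $g_\cA(X^{(k)})=1-\sum_{j=1}^d\langle\bp_j,\x_j^{(k)}\rangle$. Subtracting the value at $X^{(k+1)}$ gives $g_\cA(X^{(k)})-g_\cA(X^{(k+1)})=\sum_j\langle\bp_j,\x_j^{(k+1)}-\x_j^{(k)}\rangle$; as only block $l=\ell(k)$ moves this collapses to $\langle\bp_l,\x_l(\cA_k)\rangle=\sum_{i}p_{i,l}\bigl(\log p_{i,l}-\log s_{i,l}(\cA_k)\bigr)=\cK(\bp_l\|\s_l(\cA_k))$, the last step using that $\s_l(\cA_k)$ is a probability vector. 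This is \eqref{difxkxk}. For (b), telescoping (d) over $k\ge1$ and using $g_\cA\ge g^\star$ shows $\sum_{k\ge1}\cK(\bp_{\ell(k)}\|\s_{\ell(k)}(\cA_k))<\infty$, so this term tends to $0$; Pinsker's inequality forces $\|\bp_{\ell(k)}-\s_{\ell(k)}(\cA_k)\|_1\to0$, hence (projections being non-expansive) the quantity maximized in \eqref{defellk} tends to $0$ for $j=\ell(k)$, and therefore by the $\argmax$ rule for every $j$; since $\s_j(\cA_k)$ and $\bp_j$ are both probability vectors this gives $\s_j(\cA_k)\to\bp_j$ in $\ell_1$ for all $j$ (a convergence proof without a rate, which also feeds back to establish the compact–family claim used above). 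Once (a) is in hand, the marginals converge to $\bp_j>0$ geometrically, and $0\le\cK(\bp_j\|\s_j(\cA_k))\le\|\bp_j-\s_j(\cA_k)\|_\infty/\min_i s_{i,j}(\cA_k)$ (using $\log x\le x-1$) upgrades this to the geometric rate asserted in (b).

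The main obstacle I anticipate is the uniform lower bound on the Hessian that powers the geometric rate: Lemma \ref{eigHesest} controls it only through the current scaled tensor, so one must first prove — without a rate — that the $\cA_k$ stay in, and converge to, a compact family of strictly positive tensors (via the telescoping/Pinsker argument above, compactness of the probability simplex, and uniqueness in Theorem \ref{diagscalt}), and must check that the whole run lies in $L$, which is compact modulo the flat directions $\mathbf{U}_{n,d}$. Only then is the strong–convexity constant bounded away from $0$ and the appeal to Appendix \ref{appendix:pm} legitimate. The accompanying bookkeeping — that the iterates do not lie in a fixed affine slice $\mathbf{B}(P,\bb)$, while $\cA_k$, the gradient $\s_j(\cA_k)-\bp_j$ and $g_\cA$ itself are all insensitive to the $\mathbf{U}_{n,d}$–ambiguity — is routine but has to be kept straight.
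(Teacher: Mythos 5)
Your proposal is correct and follows essentially the same route as the paper: identify Sinkhorn with greedy partial minimization of $g_\cA$ over the blocks $\iota_j(\rL(\bp_j))$ of $\mathbf{B}(P,\0)$, invoke Lemmas \ref{rminlem} and \ref{eigHesest} together with Theorem \ref{convalgo}, prove (d) by the probability-tensor computation, and deduce (b) from (a). The obstacle you flag is handled in the paper without a separate bootstrap: working on $\mathbf{V}=\mathbf{B}(P,\0)$, strict convexity plus existence of the minimum (Lemma \ref{existminlem}) makes the sublevel set of the starting value compact, so the Hessian bounds $0<\alpha(t_0)\le\beta(t_0)$ of Lemma \ref{eigHesest} hold uniformly along the whole run, while the discrepancy between the Sinkhorn iterates and the PM iterates in $\mathbf{V}$ is tracked simply as a scalar factor, $\cA_k=\|\widetilde{\cA}_k\|_1^{-1}\widetilde{\cA}_k$.
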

\begin{proof} (a) 
Let $\mathbf{V}=\mathbf{B}(P,\0)$.  Lemma \ref{eigHesest} yields that $g_{\cA}$ is strictly convex on $\mathbf{V}$.  
(Recall that $g_\cA|\mathbf{V}=\tilde g_\cA| \mathbf{V}$.)
Since $\cA$ is  positive it is scalable to a unique $\cU\in\rU(P)$ Lemma \eqref{rminlem} yields that $g_\cA|\mathbf{V}$ has minimum.  Theorem \ref{convalgo} yields that PM algorithm for $g_\cA$ on $\mathbf{V}$ converges geometrically to the unique minimum point $\tilde X^\star\in(\R^n)^d$.  (Here $\mathbf{V}_j=\{\x=(\0,\ldots,\0,\x_j,\0,\ldots,\0)\in \mathbf{V}\}$.)

We now compare Sinkhorn algorithm to PM algorithm.  For a nonzero vector $\bu\in\R^n$ denote by $\pi_\bu$ the orthogonal projection of $\x\in\R^n$ on the line spanned by $\bu$:
\begin{eqnarray*}
\pi_\bu(\x)=\frac{1}{\|\bu\|^2}\langle \x,\bu\rangle \bu.
\end{eqnarray*}
Assume that $\phi:(\R^n)^d\to \R$ is a continuous function with first continuous derivatives on $(\R^n)^d$.  Let  
$ \widetilde{\nabla} _{\x_j}\phi(\y_1,\ldots,\y_d)$ be the gradient of $\phi$ at $(\y_1,\ldots,\y_d)$ in the direction of the subspace $\langle \bp_j,\z_j\rangle=0$.  We claim that $ \widetilde{\nabla} _{\x_j}\phi(Y)$ is the orthogonal projection of $ {\nabla} _{\x_j}\phi(Y)$
on the orthogonal complement of the line spanned by $\bp_j$:
\begin{eqnarray*}
 \widetilde{\nabla} _{\x_j}\phi(Y)=\nabla _{\x_j}\phi(Y)-\pi_{\bp_j}(\nabla _{\x_j}\phi(Y)).
\end{eqnarray*}
This formula holds if $\phi$ is a linear function.  Hence it hold for general $\phi$.

We start PM algorithm with $\tilde X^{0)}=(\tilde\x^{(0)}_1,\ldots,\tilde\x^{(0)}_d)=\0$.  
Let $\tilde X^{(k)}=(\tilde\x^{(k)}_1,\ldots,\tilde\x^{(k)}_d)$ be the $k$-th step in PM algorithm.
Denote 
\begin{eqnarray*}
\widetilde {\cA}_k=[\tilde a_{i_1,\ldots,i_d,k-1}]=[\exp(\sum_{j\in[d]} \tilde x^{(k)}_{i_j,j})a_{i_1,\ldots,i_d}], \quad k=0,1,\ldots .
\end{eqnarray*}
Observe that $\|\widetilde {\cA}_k\|_1=\tilde g_{\cA}(\tilde\x^{(k)}_1,\ldots,\tilde\x^{(k)}_d)$.
We claim that 
\begin{eqnarray}\label{kstepPMA}
\quad\quad\tilde\x_j^{(k)}=
\begin{cases}
\tilde\x^{(k-1)}_j &\textrm{ for } j\ne l(k-1),\\
\tilde\x^{(k-1)}_j+\x_j(\widetilde{\cA}_{k-1} )+&
(-\cK\bigl(\bp_j,\|\widetilde{\cA}_{k-1}\|_1^{-1}\s_j(\widetilde{\cA}_{k-1})\bigr)+\\ \log\|\widetilde{\cA}_{k-1}\|_1)\1 &\textrm{ for } j= l(k-1).
\end{cases}
\end{eqnarray}
Here $l=\ell(k-1)$ and $\x_l(\widetilde{\cA}_{k-1} )$ is given by \eqref{xilcCform}.

It is enough to show the above formula for $k=1$.  Note that $\widetilde{\cA}_0=\cA$.
Then the choice of choosing $j\in[d]$ for which $ \|\widetilde{\nabla} _{\x_j}\tilde g_{\cA}(0)\|_s$ is maximum is $l(0)$ as in Sinkhorn algorithm.  Recall that 
\begin{eqnarray*}
\widetilde{\nabla}_{\x_j}\tilde g_{\cA}(\0,\ldots,\0,\y_j,\0,\ldots,\0)={\nabla}_{\x_j} \tilde g_{\cA}(\0,\ldots,\0,\y_j,\0,\ldots,\0)-t\bp_j
\end{eqnarray*}
for the value of $t$ such that $\langle\widetilde{\nabla}_{\x_j}\tilde g_{\cA}(\0,\ldots,\0,\y_j,\0,\ldots,\0),\bp_j\rangle=0$.  Assume that $\widetilde{\nabla}_{\x_j}\tilde g_{\cA}(\0,\ldots,\0,\y_j,\0,\ldots,\0)=\0$.
As ${\nabla}_{\x_j}\tilde g_{\cA}(\0,\ldots,\0,\y_j,\0,\ldots,\0)>\0$ it follows that $t>0$.  Hence
$\tilde\x_l^{(1)}=\x_l(\cA)+\log t\1$.  Combine the assumption that $\langle \tilde\x_l^{(1)},\bp_l\rangle=0$ with \eqref{prodCpform} to deduce \eqref{kstepPMA} for $k=1$.  Use \eqref{kstepPMA} to deduce
\begin{eqnarray}\label{tildeCkCkform}
&&\widetilde{\cA}_k=a_{k-1}
[\exp(x^{(k)}_{i_l,l}(\widetilde{\cA}_{k-1}))
\tilde {a}_{i_1,\ldots,i_d,k-1}]\\
&&a_{k-1}=\exp(-\cK\bigl(\bp_l,\|\widetilde{\cA}_{k-1}\|_1^{-1}\s_l(\widetilde{\cA}_{k-1})) +\log\|\widetilde{\cA}_{k-1}\|_1).\notag
\end{eqnarray}
Theorem \ref{convalgo} yields that $\tilde X^{(k)}$ converge geometrically to the minimum point $\tilde X^\star=(\tilde x^{\star}_1,\ldots, \tilde x^{(\star)}_d)$.  Hence the convergence of $\widetilde{\cA}_k $ to $\widetilde{\cA}_\star$ and the convergence \begin{eqnarray*}
\lim_{k\to\infty}\tilde g_\cA(\tilde X^{(k)})=\lim_{k\to\infty}\|\widetilde \cA_k\|_1=
\tilde g_\cA(\tilde X^{\star})= \|\widetilde \cA_\star\|_1
\end{eqnarray*}  
is geometrical.
As $\tilde X^\star$ is a critical point of  $\tilde g_\cA|\mathbf{V}$
it follows that 
\begin{eqnarray*}
{\nabla}_{\x_j} \tilde g_{\cA}(\tilde X^\star)=t_j\bp_j, \quad j\in [d].
\end{eqnarray*}
Hence $t_j=\|\widetilde{\cA}_\star\|_1$ and $\widetilde{\cA}_\star\in \rU(\|\widetilde{\cA}_\star\|_1P)$.

Let $\cA_\star\defeq\|\widetilde {\cA}_\star\|_1^{-1} \widetilde {\cA}_\star$.  Then $\cA_\star\in\rU(P)$ . We claim that $\cA$ is scalable to $\cA_\star$.  Furthermore the sequemce $\cA_k, k\in\N$ given by Sinkhorn algorithm converges geometrically to $\cA_\star$.

We first observe that in each step of Sinkhorn algorithm    we have equality
$\cA_k=r_k\widetilde{\cA}_k$ for $k\ge 0$.  We prove this claim by induction on $k\ge 0$.  Clearly $r_0=1$.    Suppose that $\cA_m=r_m\widetilde{\cA}_m$.  Observe that
$\tilde\g_{\cA_m}(\y_1,\ldots,\y_d)=r_m \g_{\widetilde{\cA}_m}(\y_1,\ldots,\y_d)$.
Hence
\begin{eqnarray*}
\nabla_{\x_j} \tilde g_{\cA_m}(\0,\ldots,\0,\y_j,\0,\ldots,\0)=&r_m\nabla_{\x_j} \tilde g_{\widetilde{\cA}_m}(\0,\ldots,\0,\y_j,\0,\ldots,\0),\\
\pi_{\bp_j}(\nabla_{\x_j} \tilde g_{\cA_m}(\0,\ldots,\0,\y_j,\0,\ldots,\0))=&r_m\pi_{\bp_j}(\nabla_{\x_j} \tilde g_{\widetilde{\cA}_m}(\0,\ldots,\0,\y_j,\0,\ldots,\0)).
\end{eqnarray*}
The above equalities show that the choice $l=\ell(m)$ is identical for both algorithms.  
Clearly 
\begin{eqnarray*}
\x_j(\cA_m)=\x_j(r_m\widetilde{\cA}_m)= \x_j(\widetilde{\cA}_m)-\log r_m \1,
\end{eqnarray*}
Next observe that Sinkhorn algorithm give the relation

\noindent
$\cA_{m+1}= [\exp(x_{i_l,l})a_{i_1,\ldots,i_m,m}]$. Compare this equaility with \eqref{tildeCkCkform} to deduce
$\cA_{m+1}=r_{m+1}\widetilde{\cA}_{m+1}$.  Observe next that for $m\ge 1$ that $\cA_m$ is a probability tensor.  Hence
\begin{eqnarray}\label{conCtildeC}
\cA_{m}=\frac{1}{\|\widetilde{A}_m\|_1} \widetilde{A}_m.
\end{eqnarray}
As $ \widetilde{A}_m$ converges geometrically to $\widetilde {\cA}_\star$ it follows that $\cA_m$ converges geometrically $\cA_\star$.  

\noindent (b)
As the sequence $\cA_k,k\in\N$ converges geometrically of $\cU\in \rU(P)$ it follows that the sequence $\s_j(\cA_k),k\in\N$ converges geometrically to $\bp_j$.  Hence the sequence $s_{i,j}(\cA_k),k\in\N$ converges geometrically to $p_{i,j}>0$.  Therefore the sequence $\log (s_{i,j}(\cA_k)/p_{i,j}),k\in\N$ converges geometrically to $0$.  This yields that the sequence $\cK(\bp_j,\s_j(\cA_k)), k\in\N$ converges geometrically to zero.

\noindent (c)
As $\tilde X^{(k)}$ converge geometrically to $\tilde X^\star$ we deduce that  $\tilde X^{(k+1)}-\tilde X^{(k)}$ converge geometrically to zero vector.  Assume that $k\ge 1$.  Then $\cA_{k}=\|\widetilde{\cA}_{k}\|_1^{-1} \widetilde{\cA}_{k}$.  Hence $\s_l(\|\widetilde{\cA}_{k}\|_1^{-1}\widetilde{\cA}_{k})=\s_l(\cA_{k}) $.  Furthermore $\x_{l}(\cA_k)=\x_l(\widetilde{\cA}_k)+\log \|\widetilde{\cA}_{k}\|_1\1$.  Use \eqref{kstepPMA}
to deduce
\begin{eqnarray*}
\begin{cases}
\tilde\x^{(k+1)}_j-\tilde\x^{k}_j=0 &\textrm{ for } j\ne \ell(k),\\
\tilde\x^{(k+1)}_j-\tilde\x^{k}_j=\x_j(\cA_k)-\cK(\bp_j,\s_j(\cA_k))\1 &\textrm{ for } j= \ell(k).
\end{cases}
\end{eqnarray*}
Part (b) of the theorem yields that the sequence $\cK(\bp_j,\s_j(\cA_k)), k\in\N$ converges geometrically to zero.  Hence the sequence $\x_{\ell(k)}(\cA_k)$ converges to zero vector geometrically.  From the definition of Sinkhorn algorithm it follows that the sequence $X^{(k+1)}-X^{(k)}$  converge geometrically to zero vector.  Define 
$X^\star=\sum_{k=0}^\infty \bigl(X^{(k+1)}-X^{(k)}\bigr)$.
Then $X^\star$ is a critical point of $g_\cA$. 
Furthermore the sequence $X^{(k)}$ converges geometrically to $X^\star$.

\noindent (d)
Recall the formulas \eqref{defgC} and \eqref{deftildegC}.  As $\cA_k$ is a probability vector it follows that $\tilde g_\cA(X^{(k)})=1$ for $k\ge 1$.  This yields the first equality in \eqref{difxkxk}. The definition of Sinkhorn algorithm yields the second equality of \eqref{difxkxk} for $l=\ell(k)$. The third equality in \eqref{difxkxk} follows from \eqref{xilcCform}.
\end{proof}
\subsection{Estimates of convergence of SA for positive tensors}
In this subsection we generalize the estimates for matrices in \cite{AWR17} to tensors.
We assume in this subsection that $\bp_1,\ldots,\bp_d\in\Pi^n_o$.
 We let $\cA_0=\frac{1}{\|\cA\|_1}\cA$ to be a probability tensor.
For a given $\varepsilon\in (0,1/2)$ we stop our algorithm the first time we obtain
\begin{eqnarray}\label{stoptalg}
\max\{\|\s_j(\cA_{k})-\frac{\langle \s_j(\cA_{k}),\bp_j\rangle}{\|\bp_j\|^2} \bp_j\|_1,j\in[d]\}<\varepsilon.
\end{eqnarray}
We denote by $k^{stop}\ge 0$ the first $k\ge 0$ for which the above inequality holds.
Lemma \ref{pqdistest} yields that
\begin{eqnarray}\label{distsAkp}
\|\s_j(\cA_{k^{stop}})-\bp_j\|_1<2\varepsilon \textrm{ for } j\in[d].
\end{eqnarray}
\begin{theorem}\label{stopest}  Assume $\bp_1,\ldots,\bp_d\in\Pi^n_o$, and  $\cA\in\otimes^d\R^n$ is a positive  tensor with the minimum entry equal to $\beta$. 
Let $\varepsilon\in (0,1/2)$ be given. Then the Sinkhorn algorithm  given by  \eqref{defellk}-\eqref{defSA} with $\cA_0=\frac{1}{\|\cA\|_1}\cA$ and the stopping criterion \eqref{stoptalg} satisfies $k^{stop}\le \frac{2(\sqrt{n}+1)^2}{\varepsilon^2}\log \frac{\|\cA\|_1}{\eta}$.
\end{theorem}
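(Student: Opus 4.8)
The approach is to mimic the matrix argument of \cite{AWR17}, using the partial-minimization interpretation of Sinkhorn's algorithm established in Theorem \ref{linconvSA}. The key quantity is the potential $g_\cA$ restricted to $\mathbf{V}=\mathbf{B}(P,\0)$, and the plan is to show two things: first, that each Sinkhorn step decreases $g_\cA$ by a definite amount whenever the stopping criterion \eqref{stoptalg} is violated; and second, that the total decrease of $g_\cA$ along the whole run is bounded above by $\log(\|\cA\|_1/\eta)$ (recall $\eta$ is the minimum entry of $\cA$). Dividing one bound by the other yields the stated bound on $k^{stop}$.

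First I would establish the per-step decrease. By part (d) of Theorem \ref{linconvSA}, for $k\ge 1$ the decrease at step $k+1$ equals $\cK(\bp_l\|\s_l(\cA_k))$ with $l=\ell(k)$. The plan is to lower-bound this KL divergence by the $\ell_1$ discrepancy: using Pinsker's inequality $\cK(\bp\|\bq)\ge \tfrac12\|\bp-\bq\|_1^2$ together with \eqref{distsAkp} / Lemma \ref{pqdistest}, which relate the quantity in the stopping criterion \eqref{stoptalg} to $\|\s_l(\cA_k)-\bp_l\|_1$. Since $\ell(k)$ is chosen by \eqref{defellk} to maximize the discrepancy over $j\in[d]$, while the stopping criterion fails, the chosen mode has discrepancy at least $\varepsilon$, so $\|\s_l(\cA_k)-\bp_l\|_1$ is bounded below by a constant multiple of $\varepsilon$ (the $\ell_1$-norm is at least the projection discrepancy; the $(\sqrt n+1)$ factor enters here from comparing $\ell_1$ and $\ell_2$ norms on $\R^n$, exactly as in the matrix case). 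This gives $g_\cA(X^{(k)})-g_\cA(X^{(k+1)})\ge c\,\varepsilon^2/(\sqrt n+1)^2$ with an explicit constant. I would also handle the very first step $k=0$ separately (where $\cA_0$ is merely a probability tensor, not yet scaled), noting that it only helps.

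Next I would bound the total decrease. The minimum value $g_\cA(X^\star)$ is attained (Lemma \ref{rminlem}, since $\cA$ positive is scalable), and $g_\cA(X^{(0)})-g_\cA(X^\star)\le g_\cA(\0)-g_\cA(X^\star)$. With $\cA_0=\|\cA\|_1^{-1}\cA$ one computes $g_\cA(\0)=\tilde g_\cA(\0)-0=\|\cA\|_1$ up to the normalization; the point is that, after normalizing, $g_\cA(X^{(1)})\le 0$ or is controlled, while $g_\cA(X^\star)\ge \log\eta$-type bound comes from evaluating $g_\cA$ at the scaling vectors of the optimal $\cU$ and using that every entry of $\cU$ is at most $1$ and the entries of $\cA$ are at least $\eta$, so the log-sum-exp term is bounded below by $\log\eta$. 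Assembling, $g_\cA(X^{(0)})-g_\cA(X^\star)\le \log(\|\cA\|_1/\eta)$, possibly up to a factor $2$ that matches the $2$ in the claimed bound. Then $k^{stop}\cdot \dfrac{\varepsilon^2}{2(\sqrt n+1)^2}\le \log\dfrac{\|\cA\|_1}{\eta}$, which rearranges to the claim.

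The main obstacle I expect is getting the constants exactly right: tracking how the $\ell_1$-versus-$\ell_2$ comparison produces the precise factor $2(\sqrt n+1)^2$, and being careful about the normalization of $\cA_0$ so that the telescoped potential difference is genuinely bounded by $\log(\|\cA\|_1/\eta)$ rather than by that plus lower-order terms. The structural skeleton (Pinsker + greedy mode choice + telescoping a convex potential that is bounded below) is exactly the matrix argument of \cite{AWR17}; the tensor case changes only that a step now rescales one of $d$ modes instead of one of $2$, which is already absorbed into the $\argmax$ in \eqref{defellk} and does not affect the final bound.
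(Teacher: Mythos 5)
Your proposal is correct and follows essentially the same route as the paper: the per-step decrease $g_\cA(X^{(k)})-g_\cA(X^{(k+1)})=\cK(\bp_{\ell(k)}\Vert\s_{\ell(k)}(\cA_k))\ge\varepsilon^2/(2(\sqrt n+1)^2)$ via Lemma \ref{pqdistest} (which encapsulates the Pinsker and $\ell_1$-versus-$\ell_2$ comparisons you describe), telescoping to $g_\cA(X^{(0)})-g_\cA(X^\star)=\sum_j\langle\bp_j,\x_j^\star\rangle$, and bounding this by $\log(\|\cA\|_1/\eta)$ from the fact that the limiting scaled tensor is a probability tensor with entries $\exp(\sum_l x^\star_{i_l,l})a_{i_1,\ldots,i_d}\le 1$. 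The only cosmetic difference is that the paper reduces at the outset to $\|\cA\|_1=1$ so that \eqref{difxkxk} also holds at $k=0$, rather than treating the first step separately.
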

\begin{proof} Observe that  $\eta_0=\frac{\eta}{\|\cA\|_1}$ is the minimum element of $\cA_0$.  Thus $\frac{\|\cA\|_1}{\eta}=\frac{1}{\eta_0}$.
Note that if in the Sinkhorn algorithm we replace $\cA$ by $\cA'=\cA_0$ then $\cA_k=\cA'_k$ for $k\ge 1$.  Thus to prove the theorem we can assume without loss of generality that $\cA$ is a positive probability tensor, i.e., $\cA=\cA_0$.
The assumption $\|\cA\|_1=1$ yields that \eqref{difxkxk} holds for $k=0$.  By Theorem \ref{linconvSA} $k^{stop}<\infty$.  
Note that for $k=0,\ldots,k^{stop}-1$ we have that 
\begin{eqnarray*}
&&\max\{\|\s_j(\cA_{k})-\frac{\langle \s_j(\cA_{k}),\bp_j\rangle}{\|\bp_j\|^2} \bp_j\|_1,j\in[d]\}=\\
&&\|\s_{\ell(k)}(\cA_{k})-\frac{\langle \s_{\ell(k)}(\cA_{k}),\bp_j\rangle}{\|\bp_j\|^2} \bp_j\|_1\ge\varepsilon.
\end{eqnarray*}
Combine the equality \eqref{difxkxk} with Lemma \ref{pqdistest} to deduce 
\begin{eqnarray*}
g_\cA(X^{(k)})-g_\cA(X^{(k+1)})=
\cK(\bp_{\ell(k)}||\s_{\ell(k)}(\cA_{k}))\ge \frac{\varepsilon^2}{2(\sqrt{n}+1)^2}.
\end{eqnarray*}
Hence
\begin{eqnarray*}
g_\cA(X^{(0)})-g_\cA(X^{(k^{stop}+1)})\ge \frac{k^{stop}\varepsilon^2}{2(\sqrt{n}+1)^2},
\end{eqnarray*}
for $k\le k^{stop}$.
Sum up the equalities in \eqref{difxkxk} from $k=0$ to $k=m$ to deduce
\begin{eqnarray*}
g_\cA(X^{(0)})-g_\cA(X^{(m+1)})=
\sum_{j=1}^d\langle \bp_j,\x_j^{(m+1)}\rangle.
\end{eqnarray*}
Letting $m\to\infty$ we obtain $g_\cA(X^{(0)})-g_\cA(X^\star)=
\sum_{j=1}^d\langle \bp_j,\x_j^{\star}\rangle$.
Recall that 
\begin{eqnarray*}
1=\tilde g_\cA(X^\star)=\sum_{i_j\in[d],j\in[d]}\exp(\sum_{l=1}^d x_{i_l,l}^{\star})a_{i_1,\ldots,i_d}.
\end{eqnarray*}
Hence
\begin{eqnarray*}
\exp(\sum_{l=1}^d x_{i_l,l}^{\star})a_{i_1,\ldots,i_d}\le 1\Rightarrow \sum_{l=1}^d x_{i_l,l}^{\star}\le \log (1/\eta).
\end{eqnarray*}
Multiply the last inequality by $p_{i_1,1}\cdots p_{i_d,d}$ and sum on $i_j\in[d]$ for $j$ in $[d]$ to obtain $\sum_{j=1}^d\langle \bp_j,\x_j^{\star}\rangle\le  \log (1/\eta)$.
Hence
\begin{eqnarray*}
\log (1/\eta)\ge g_\cA(X^{(0)})-g_\cA(X^\star)>
g_\cA((X^{(0)})-g_\cA(X^{(k^{stop}+1)})\ge \frac{k^{stop}\varepsilon^2}{2(\sqrt{n}+1)^2}.
\end{eqnarray*}
\end{proof}
\subsection{Approximating TOT value}
The SA returns positive tensor $\cA_{k^{stop}}$ which usually will not by in $\rU(P)$.  We need to replace $\cA_{k^{stop}}$ by a tensor $\cB\in \rU(P)$.  We start with the following simple lemma whose proof is left to the reader:
\begin{lemma}\label{adlem} Let $\bp_1,\bq_1,\ldots,\bp_d,\bq_d\in\R^n_{++}$ satisfying the conditions:
\begin{eqnarray*}
\bp_j\ge \bq_j \textrm{ for } j\in [d], \|\bp_1\|_1=\ldots=\|\bp_d\|_1=h\ge\|\bq_1\|_1=\ldots=\|\bq_d\|_1=h'.
\end{eqnarray*}
Assume that $\cG=\rU(Q), Q=(\bq_1,\ldots,\bq_d)$.  Then
\begin{eqnarray*}
&&\cB=\cG+\frac{1}{(h-h')^{d-1}}\otimes_{j=1}^d(\bp_j-\bq_j)\in \rU(\bp_1,\ldots,\bp_d),\\
&&\|\cB-\cG\|_1=\|\cB\|_1-\|\cG\|_1=h-h'.
\end{eqnarray*}
\end{lemma}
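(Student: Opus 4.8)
The plan is to verify the two asserted properties of $\cB$ directly from the definitions. First I would confirm that $\cB$ is a nonnegative tensor: since $\cG \in \rU(Q)$ is nonnegative and each factor $\bp_j - \bq_j$ is nonnegative (from $\bp_j \ge \bq_j$), the rank-one correction $\frac{1}{(h-h')^{d-1}} \otimes_{j=1}^d (\bp_j - \bq_j)$ is nonnegative entrywise, hence so is the sum. (One should note the degenerate case $h = h'$: then $\bp_j = \bq_j$ for all $j$, the correction term is vacuous, and $\cB = \cG$, so the statement holds trivially; I would dispose of this at the outset and assume $h > h'$ thereafter.)

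Next I would check the marginal conditions $\cB \times_k \cJ_{d-1} = \bp_k$ for each $k \in [d]$. Contraction is linear, so $\cB \times_k \cJ_{d-1} = \cG \times_k \cJ_{d-1} + \frac{1}{(h-h')^{d-1}} \left( \otimes_{j=1}^d (\bp_j - \bq_j) \right) \times_k \cJ_{d-1}$. The first term equals $\bq_k$ because $\cG \in \rU(Q)$. For the second term, the $k$-mode contraction of a rank-one tensor $\otimes_{j=1}^d \bv_j$ against $\cJ_{d-1}$ is $\left( \prod_{j \ne k} \langle \bv_j, \1 \rangle \right) \bv_k = \left( \prod_{j \ne k} \|\bv_j\|_1 \right) \bv_k$, and here $\|\bp_j - \bq_j\|_1 = \|\bp_j\|_1 - \|\bq_j\|_1 = h - h'$ for each $j$ (using $\bp_j \ge \bq_j$ so that the $\ell_1$ norm is additive under subtraction). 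Thus the second term is $\frac{(h-h')^{d-1}}{(h-h')^{d-1}} (\bp_k - \bq_k) = \bp_k - \bq_k$, and the total is $\bq_k + (\bp_k - \bq_k) = \bp_k$, as required. Together with nonnegativity this gives $\cB \in \rU(P)$.

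Finally I would establish the norm identities. Since both $\cG$ and the correction term are nonnegative, $\|\cB\|_1 = \|\cG\|_1 + \frac{1}{(h-h')^{d-1}} \left\| \otimes_{j=1}^d (\bp_j - \bq_j) \right\|_1$, and $\|\otimes_{j=1}^d (\bp_j - \bq_j)\|_1 = \prod_{j=1}^d \|\bp_j - \bq_j\|_1 = (h-h')^d$. Hence $\|\cB\|_1 - \|\cG\|_1 = \frac{(h-h')^d}{(h-h')^{d-1}} = h - h'$. The same decomposition shows $\cB - \cG = \frac{1}{(h-h')^{d-1}} \otimes_{j=1}^d (\bp_j - \bq_j)$ is itself nonnegative, so $\|\cB - \cG\|_1$ equals the $\ell_1$-norm of that single term, which is again $h - h'$. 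This matches $\|\cB\|_1 - \|\cG\|_1$.

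There is no real obstacle here; the only points requiring a modicum of care are the bookkeeping of the rank-one contraction formula and the observation that $\|\bp_j - \bq_j\|_1 = h - h'$ relies essentially on the componentwise inequality $\bp_j \ge \bq_j$ (so that no cancellation occurs in the $\ell_1$ norm) — this is precisely why that hypothesis appears in the statement. The degenerate case $h = h'$ is the one mild edge case to flag explicitly. As the paper says, the details are routine and can be left to the reader.
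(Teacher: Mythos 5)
Your verification is correct and is precisely the routine computation the paper has in mind (the paper explicitly leaves the proof of Lemma \ref{adlem} to the reader): linearity of the contraction, the rank-one contraction formula with $\|\bp_j-\bq_j\|_1=h-h'$ from $\bp_j\ge\bq_j$, and nonnegativity of $\cB-\cG$ for the norm identities. Your explicit handling of the degenerate case $h=h'$ is a sensible addition; nothing further is needed.
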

The following algorithm is a generalization of \cite[Algorithm 2]{AWR17} which we state as a lemma:
\begin{lemma}[Replacement Algorithm]\label{repalg}
Assume that
\begin{eqnarray*}
&&\bp_1,\ldots,\bp_d,\br_1,\ldots,\br_d\in\R^n_{++}, \\
 &&\|\bp_1\|_1=\ldots=\|\bp_d\|_1=h,\quad\|\br_1\|_1=\ldots=\|\br_d\|_1=h''.
\end{eqnarray*}
Let $\cF\in\rU(R), R=(\br_1,\ldots,\br_d)$.  Set $\cF_0=[f_{i_1,\ldots,i_d,0}]=\cF$.  Define $\cF_j=[f_{i_1,\ldots,i_d,j}]\in \otimes\R_+^n$ for $j\in[d]$, a scaling of $\cF$,  recursively as follows: 
\begin{eqnarray*}
f_{i_1,\ldots,i_d,j}=\min(\frac{p_{i_j,j}}{s_{i_j,j}(\cF_{j-1})},1)f_{i_1,\ldots,i_j,\ldots,i_d,j-1}, \quad i_1\in[n],\ldots,i_d\in[n].
\end{eqnarray*}
Let $\cG=\cF_d$ and $\bq_j=\s_j(\cA)$ for $j\in [d]$.  Then the assumptions of Lemma \ref{adlem} holds.  Let $\cB$ be the tensor defined in  Lemma \ref{adlem}.
Then $\cB$ is obtained from $\cF$ in $O(dn^d)$ operations.  Furthermore
\begin{eqnarray*}
\|\cB-\cF\|_1\le 2\sum_{j=1}^d\|\bp_j-\br_j\|_1.
\end{eqnarray*}
\end{lemma}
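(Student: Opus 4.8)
The plan is to track how each partial scaling step in the Replacement Algorithm changes the $1$-norm of the tensor, and then apply Lemma \ref{adlem} to the final tensor $\cG=\cF_d$. First I would verify that the construction is well-defined: each factor $\min(p_{i_j,j}/s_{i_j,j}(\cF_{j-1}),1)$ lies in $(0,1]$ because $\cF_{j-1}$ has positive mode-$j$ marginals (it is a scaling of $\cF\in\rU(R)$ with $R$ positive, and each step only multiplies entries by positive numbers), so $\cF_j$ is again a nonnegative tensor with the same zero pattern as $\cF$, and in particular $\cF_d=\cG$ has positive mode-$j$ marginals for every $j$. The operation count is immediate: forming each $\cF_j$ from $\cF_{j-1}$ touches all $n^d$ entries (after computing the $n$ marginal sums $s_{i_j,j}(\cF_{j-1})$), and there are $d$ such steps, plus the rank-one correction $\frac{1}{(h-h')^{d-1}}\otimes_{j=1}^d(\bp_j-\bq_j)$ of Lemma \ref{adlem}, giving $O(dn^d)$ total.

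Next I would establish the two hypotheses of Lemma \ref{adlem} for $\cG$ with $\bq_j=\s_j(\cG)$. The key monotonicity observation is that the step producing $\cF_j$ caps the mode-$j$ marginal at $\bp_j$ coordinatewise: after multiplying by $\min(p_{i_j,j}/s_{i_j,j}(\cF_{j-1}),1)$ in mode $j$, one gets $s_{i_j,j}(\cF_j)=\min(p_{i_j,j},s_{i_j,j}(\cF_{j-1}))\le p_{i_j,j}$; and since every later step $j+1,\dots,d$ only multiplies entries by numbers $\le 1$, the mode-$j$ marginal of the final $\cF_d=\cG$ satisfies $\s_j(\cG)\le\s_j(\cF_j)\le\bp_j$ coordinatewise. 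Hence $\bp_j\ge\bq_j$ for all $j$. Moreover $\|\bq_j\|_1=\|\cG\|_1$ is independent of $j$ (it is just the sum of all entries of $\cG$), and $\|\cG\|_1\le\|\cF\|_1=h''$; I would still need $h''\le h$ to invoke Lemma \ref{adlem}, which is not assumed, so instead I would note that $\|\cG\|_1\le h$ follows already from $\s_1(\cG)\le\bp_1$, i.e. $\|\cG\|_1=\|\s_1(\cG)\|_1\le\|\bp_1\|_1=h$. Thus all hypotheses of Lemma \ref{adlem} hold and $\cB\in\rU(\bp_1,\dots,\bp_d)$ is defined with $\|\cB-\cG\|_1=h-\|\cG\|_1$.

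Finally I would chain the estimates. From Lemma \ref{adlem}, $\|\cB-\cG\|_1=h-\|\cG\|_1$. To bound $h-\|\cG\|_1$ and $\|\cF-\cG\|_1$, the main device is the identity for the telescoping loss at each step: the mass removed passing from $\cF_{j-1}$ to $\cF_j$ is
\begin{eqnarray*}
\|\cF_{j-1}\|_1-\|\cF_j\|_1=\sum_{i_j\in[n]}\bigl(s_{i_j,j}(\cF_{j-1})-\min(p_{i_j,j},s_{i_j,j}(\cF_{j-1}))\bigr)=\sum_{i_j\in[n]}\bigl(s_{i_j,j}(\cF_{j-1})-p_{i_j,j}\bigr)_+,
\end{eqnarray*}
which I would bound by $\|\s_j(\cF_{j-1})-\br_j\|_1+\|\br_j-\bp_j\|_1$ after observing $s_{i_j,j}(\cF_{j-1})-p_{i_j,j}\le |s_{i_j,j}(\cF_{j-1})-r_{i_j,j}|+|r_{i_j,j}-p_{i_j,j}|$; and since $\cF_{j-1}$ is obtained from $\cF\in\rU(R)$ by scalings in modes $1,\dots,j-1$ only, its mode-$j$ marginal has drifted from $\br_j$ by at most the total mass lost so far, giving a clean inductive bound $\|\s_j(\cF_{j-1})-\br_j\|_1\le 2\sum_{i<j}\|\bp_i-\br_i\|_1$ (mirroring the matrix argument in \cite{AWR17}). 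Summing the per-step losses gives $h''-\|\cG\|_1\le\sum_{j=1}^d\|\bp_j-\br_j\|_1$ up to the factor-$2$ bookkeeping, and then $\|\cB-\cF\|_1\le\|\cB-\cG\|_1+\|\cG-\cF\|_1\le 2(h-\|\cG\|_1)\le 2\sum_{j=1}^d\|\bp_j-\br_j\|_1$, using $h-\|\cG\|_1\le (h-h'')+(h''-\|\cG\|_1)\le\sum_j\|\bp_j-\br_j\|_1$. The main obstacle I expect is the careful coordinatewise marginal bookkeeping — making precise exactly how much each marginal can have drifted by step $j$ and assembling the telescoped sum with the correct constant — rather than any conceptual difficulty; the structure is dictated by \cite[Algorithm 2]{AWR17} and Lemma \ref{adlem}.
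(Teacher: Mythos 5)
Your setup, the verification of the hypotheses of Lemma \ref{adlem}, and the operation count all match the paper, and the telescoping identity $\|\cF_{j-1}\|_1-\|\cF_j\|_1=\sum_{i}(s_{i,j}(\cF_{j-1})-p_{i,j})_+$ is exactly the right starting point. The gap is in how you bound this quantity. You propose $(s_{i,j}(\cF_{j-1})-p_{i,j})_+\le |s_{i,j}(\cF_{j-1})-r_{i,j}|+|r_{i,j}-p_{i,j}|$ together with an inductive drift estimate $\|\s_j(\cF_{j-1})-\br_j\|_1\le 2\sum_{i<j}\|\bp_i-\br_i\|_1$. Even if that drift estimate were proved, summing over $j$ yields roughly $\sum_j\|\br_j-\bp_j\|_1+2\sum_{i}(d-i)\|\bp_i-\br_i\|_1$, i.e.\ a constant that grows linearly in $d$, not the claimed $2\sum_j\|\bp_j-\br_j\|_1$; the phrase ``up to the factor-$2$ bookkeeping'' hides a factor of order $d$. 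The drift argument is also unnecessary: you already observed that each step multiplies entries by factors in $(0,1]$, so $\cF_{j-1}\le\cF_0=\cF$ entrywise, hence $\s_j(\cF_{j-1})\le\s_j(\cF)=\br_j$ coordinatewise, and therefore $(s_{i,j}(\cF_{j-1})-p_{i,j})_+\le(r_{i,j}-p_{i,j})_+$ directly. This is what the paper (and \cite{AWR17} in the matrix case) uses: it gives $\|\cF_{j-1}\|_1-\|\cF_j\|_1\le\|\br_j-\bp_j\|_1$ for every $j$ with no induction.

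Two further points of bookkeeping. First, your inequality $\|\cG-\cF\|_1\le h-\|\cG\|_1$ needs $h''\le h$, which is not assumed; the paper instead writes $\|\cB-\cF\|_1\le(h-h')+(h''-h')=(h-h'')+2\Delta$ with $\Delta=h''-h'$ and keeps the signed term $h-h''$. Second, to land exactly on the constant $2$ the paper treats the $j=1$ step with the exact identity $\sum_i(r_{i,1}-p_{i,1})_+=\tfrac12\bigl(\|\br_1-\bp_1\|_1+h''-h\bigr)$ (valid since $\s_1(\cF_0)=\br_1$), whose $h''-h$ term cancels the $h-h''$ above; the steps $j\ge 2$ then enter with a factor $2$ and are each bounded by $\|\br_j-\bp_j\|_1$. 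Your outline would need to be reorganized along these lines to reach the stated bound.
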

\begin{proof}  Let $j\in[d]$.  Clearly $\cF_j=\cD(\cF_{j-1}, \bp_j\wedge \s_j(\cF_{j-1}),j)$.  Furthermore 
\begin{eqnarray*}
\cF_j\le \cF_{j-1}, \quad \s_j(\cF_j)= \bp_j\wedge  \s_j(\cF_{j-1})\le\bp_j, \quad j\in[d].
\end{eqnarray*}
As $\cF_j\le \cF$ it follows that $\s_l(\cF_j)\le \br_l$ for $l,j\in[d]$.  Furthermore $\s_l(\cG)\le \bp_l\wedge \br_l$.  Hence $\cG$ satisfies the assumptions of Lemma \ref{adlem}.
Let us compute and estimate
\begin{eqnarray*}
\Delta=\|\cG-\cF\|_1 =\|\cF\|_1-\|\cG\|_1=h''-h'=\sum_{j=1}^d(\|\cF_{j-1}\|_1 -\|\cF_j\|_1).
\end{eqnarray*}
Recall $x_+=\max(0,x)$ for $x\in\R$.  Hence $2x_+=|x|+x$. Thus for $\x,\y\in \R^n_+$ we have the equality and the inequality 
\begin{eqnarray*}
\sum_{i=1}^n (x_i - y_i)_+=\frac{1}{2}\bigl(\|\x-\y\|_1 +\|\x\|_1-\|\y\|_1\bigr)\le \|\x-\y\|_1.
\end{eqnarray*}
As $ \s_j(\cF_j)= \bp_j\wedge  \s_j(\cF_{j-1})$ we deduce that 
\begin{eqnarray*}
\|\cF_{j-1}\|_1 -\|\cF_j\|_1=\sum_{i=1}^n (s_{i,j}(\cF_{j-1})-p_{i,j})_+.
\end{eqnarray*}
Hence
\begin{eqnarray*}
\|\cF_0\|_1-\|\cF_1\|_1=\sum_{i=1}^n (s_{i,1}(\cF_{j-1})-p_{i,1})_+=\frac{1}{2}\bigl(\|\br_1-\bp_1\|_1+h''-h\bigr).
\end{eqnarray*}
Thus
\begin{eqnarray*}
\Delta=\frac{1}{2}\bigl(\|\br_1-\bp_1\|_1+h''-h\bigr)+\sum_{j=2}^d\sum_{i=1}^n (s_{i,j}(\cF_{j-1})-p_{i,j})_+.
\end{eqnarray*}
For $j\ge 2$ we observe
\begin{eqnarray*}
\|\cF_{j-1}\|_1-\|\cF_j\|=\sum_{i=1}^n (s_{i,j}(\cF_{j-1})-p_{i,j})_+\le \sum_{i=1}^n (r_{i,j}-p_{i,j})_+\le \|\br_j -\bp_j\|_1.
\end{eqnarray*}
Thus
\begin{eqnarray*}
&&\|\cB-\cF\|_1\le\|\cB-\cG\|_1+\|\cG-\cF\|_1=h-h'+\Delta= h-h''+2\Delta=\\
&&\|\br_1-\bp_1\|_1+2\sum_{j=2}^d\sum_{i=1}^n (s_{i,j}(\cF_{j-1})-p_{i,j})_+\le 2 \sum_{j=1}^d\|\bp_j-\br_j\|_1.
\end{eqnarray*}
It is straightforward to show that we need $O(dn^d)$ operations to compute $\cB$ from $\cF$.
\end{proof}
We now give the analog to the error estimate in \cite[Proof of Theorem 1]{AWR17} to the solution of the TOT problem \eqref{TOT}:
\begin{theorem}\label{aproxsolTOT}  Let $\bp_1,\ldots,\bp_d\in \Pi^n_o$ and assume that $\cC\in\otimes^d\R^n$ be given.  Denote by $\tau(\cC,P)$  
the value of TOT problem given by \eqref{TOT}.  Let $\lambda >0$ and denote by $\cA=\exp(-\lambda \cC)$.  Fix $\varepsilon\in(0,1/2)$ and consider  the Sinkhorn algorithm given by  \eqref{defellk}-\eqref{defSA} and the stopping criterion \eqref{stoptalg}.  Let $\cB\in \rU(P)$  be obtained from $\cA_{k^{top}}$ using the Replacement Algorithm given by Lemma \ref{repalg}.
Then 
\begin{eqnarray}\label{upestTOTval}
\langle \cC,\cB\rangle< \tau(\cC,P) +\frac{d\log n}{\lambda}+8d\|\cC\|_\infty\,\varepsilon.
\end{eqnarray}
\end{theorem}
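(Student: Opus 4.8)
The plan is to interpolate between $\langle \cC,\cB\rangle$ and $\tau(\cC,P)$ through the entropic optimizer $\cU_\lambda$, using Theorem~\ref{Ulambform} to identify $\cU_\lambda$ with the exact scaling of $\cA = \exp(-\lambda\cC)$ lying in $\rU(P)$, and then controlling the gap between the Sinkhorn iterate $\cA_{k^{stop}}$ (and its repaired version $\cB$) and $\cU_\lambda$ in $\ell^1$. Concretely, I would first write, for any $\cV\in\rU(P)$, the chain
\begin{eqnarray*}
\langle \cC,\cV\rangle = \bigl(\langle \cC,\cV\rangle - \tfrac1\lambda H(\cV)\bigr) + \tfrac1\lambda H(\cV) \ge f_\lambda(\cU_\lambda) + 0,
\end{eqnarray*}
and on the other side $f_\lambda(\cU_\lambda) \le \tau(\cC,P)$ together with $\tau(\cC,P) \le f_\lambda(\cU_\lambda) + \frac{d\log n}{\lambda}$, both already recorded in Subsection~\ref{subsec:entrel}. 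So it suffices to bound $\langle\cC,\cB\rangle - \langle\cC,\cU_\lambda\rangle$ by $8d\|\cC\|_\infty\varepsilon$ minus the entropic correction, i.e. essentially to show $\langle \cC,\cB\rangle \le \langle \cC,\cU_\lambda\rangle + 8d\|\cC\|_\infty\varepsilon$ after absorbing $\frac1\lambda H(\cB)\ge 0$ appropriately. Since $|\langle \cC,\cW\rangle|\le \|\cC\|_\infty\|\cW\|_1$ for any tensor $\cW$, the whole thing reduces to the $\ell^1$ estimate $\|\cB - \cU_\lambda\|_1 \le 8\varepsilon$; here I must be a little careful to route the argument so the $d\log n/\lambda$ term comes only from the entropy comparison and the $8d\|\cC\|_\infty\varepsilon$ only from $\ell^1$ distances — matching the matrix argument of \cite[Proof of Theorem 1]{AWR17}, where the factor $d$ enters because the Replacement Algorithm (Lemma~\ref{repalg}) loses $2\sum_{j=1}^d\|\bp_j - \br_j\|_1$ rather than $2(\|\bp_1-\br_1\|_1 + \|\bp_2-\br_2\|_1)$.

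The $\ell^1$ bound itself I would assemble in two pieces. First, the stopping criterion \eqref{stoptalg} together with \eqref{distsAkp} gives $\|\s_j(\cA_{k^{stop}}) - \bp_j\|_1 < 2\varepsilon$ for every $j\in[d]$, so applying Lemma~\ref{repalg} with $\cF = \cA_{k^{stop}}$, $\br_j = \s_j(\cA_{k^{stop}})$ produces $\cB\in\rU(P)$ with
\begin{eqnarray*}
\|\cB - \cA_{k^{stop}}\|_1 \le 2\sum_{j=1}^d \|\bp_j - \s_j(\cA_{k^{stop}})\|_1 < 2d\cdot 2\varepsilon = 4d\varepsilon.
\end{eqnarray*}
Second, I need $\|\cA_{k^{stop}} - \cU_\lambda\|_1$ to be small; this is where the marginal discrepancy of $\cA_{k^{stop}}$ must be converted into a genuine distance to the true scaling. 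In the matrix case \cite{AWR17} this is done by comparing $\langle\cC,\cA_{k^{stop}}\rangle$ and $\langle\cC,\cB\rangle$ and $\langle\cC,\cU_\lambda\rangle$ via the value of the regularized objective rather than by a direct $\ell^1$ bound on $\cA_{k^{stop}} - \cU_\lambda$; I would imitate that, using that both $\cU_\lambda$ and $\cB$ are feasible and that $\cB$ is an $\ell^1$-perturbation of the Sinkhorn iterate whose marginals are $\varepsilon$-close. The cleanest route is: bound $\langle\cC,\cB\rangle - \langle\cC,\cU_\lambda\rangle = \langle\cC,\cB - \cA_{k^{stop}}\rangle + \langle\cC,\cA_{k^{stop}} - \cU_\lambda\rangle$, the first term by $\|\cC\|_\infty\cdot 4d\varepsilon$, and the second by observing that $\cA_{k^{stop}}$ is itself a scaling of $\cA$, hence $\cA_{k^{stop}} = \cD$-type scaling whose log differs from $-\lambda\cC$ by a sum of one-variable terms, so $\langle\cC,\cA_{k^{stop}} - \cU_\lambda\rangle$ can be expressed through the marginals — this is precisely the computation in \cite{AWR17} and it yields another $4d\|\cC\|_\infty\varepsilon$, for the total $8d\|\cC\|_\infty\varepsilon$.

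The main obstacle is exactly this second term: transferring the $\ell^1$ marginal-closeness of the Sinkhorn iterate $\cA_{k^{stop}}$ into control of the cost difference $\langle\cC,\cA_{k^{stop}} - \cU_\lambda\rangle$, since $\cA_{k^{stop}}$ and $\cU_\lambda$ are both scalings of the same positive tensor $\cA$ but by different diagonal parameters. I expect to handle it as in \cite{AWR17}: write $\cA_{k^{stop}} = [\exp(\sum_j x^{(k)}_{i_j,j})\,a_{i_1,\dots,i_d}]$ and $\cU_\lambda = [\exp(\sum_j x^\star_{i_j,j})\,a_{i_1,\dots,i_d}]$, so $\log\cA_{k^{stop}} - \log\cU_\lambda$ is a sum of $d$ vectors each depending on a single mode, and therefore $\langle \log\cA_{k^{stop}} - \log\cU_\lambda,\ \cA_{k^{stop}} - \cU_\lambda\rangle$ collapses to a sum over modes of inner products of those vectors with the marginal differences $\s_j(\cA_{k^{stop}}) - \bp_j$; then the convexity of $\exp$ (or a Pinsker-type bound together with the already-available marginal estimate $<2\varepsilon$) controls each piece. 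Once that term is bounded by $4d\|\cC\|_\infty\varepsilon$, combining with the Replacement-Algorithm bound, the identification $\langle\cC,\cU_\lambda\rangle \ge f_\lambda(\cU_\lambda)$, $f_\lambda(\cU_\lambda)\le\tau(\cC,P)$, and $\langle\cC,\cB\rangle \le \langle\cC,\cU_\lambda\rangle + \frac1\lambda(H(\cU_\lambda) - H(\cB)) + 8d\|\cC\|_\infty\varepsilon \le \tau(\cC,P) + \frac{d\log n}{\lambda} + 8d\|\cC\|_\infty\varepsilon$ gives \eqref{upestTOTval}. I would double-check that $H(\cU_\lambda) - H(\cB) \le d\log n$ does not already overshoot, and if necessary route the $\frac{d\log n}{\lambda}$ solely through $\tau(\cC,P)\ge f_\lambda(\cU_\lambda) = \langle\cC,\cU_\lambda\rangle - \frac1\lambda H(\cU_\lambda)$ rearranged as $\langle\cC,\cU_\lambda\rangle \le \tau(\cC,P) + \frac1\lambda H(\cU_\lambda) \le \tau(\cC,P) + \frac{d\log n}{\lambda}$, which is the bound I actually want to feed in.
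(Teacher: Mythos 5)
Your first piece (the Replacement Algorithm bound $\|\cB-\cA_{k^{stop}}\|_1\le 4d\varepsilon$) and your use of the entropy sandwich to generate the $\frac{d\log n}{\lambda}$ term are both fine and match the paper. The genuine gap is exactly the step you flag as "the main obstacle": bounding $\langle \cC,\cA_{k^{stop}}-\cU_\lambda\rangle$ by $4d\|\cC\|_\infty\varepsilon$, where $\cU_\lambda$ is the entropic optimizer in $\rU(P)$. Your sketch computes $\langle \log\cA_{k^{stop}}-\log\cU_\lambda,\ \cA_{k^{stop}}-\cU_\lambda\rangle=\sum_j\langle \x_j^{(k)}-\x_j^\star,\ \s_j(\cA_{k^{stop}})-\bp_j\rangle$, which gives an upper bound of order $\varepsilon\sum_j\|\x_j^{(k)}-\x_j^\star\|_\infty$ on a Bregman-type divergence; but the scaling vectors' sup-norms are not controlled by any $\varepsilon$-free quantity available here, and even if they were, passing from this divergence to an $\ell^1$ (or cost) bound on $\cA_{k^{stop}}-\cU_\lambda$ would need a quantitative stability estimate for tensor scaling that neither the paper nor \cite{AWR17} proves. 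So as written the proposal does not close.

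The paper (following \cite{AWR17}) avoids comparing to $\cU_\lambda$ altogether. Set $\cF=\cA_{k^{stop}}$ and $\br_j=\s_j(\cF)$; since $\cF$ is a scaling of $\exp(-\lambda\cC)$ with marginals $R=(\br_1,\ldots,\br_d)$, Theorem \ref{Ulambform} (applied with marginals $R$, not $P$) says $\cF$ is the \emph{exact} entropic optimizer over $\rU(R)$. One then runs the Replacement Algorithm a \emph{second} time, on the true TOT optimizer $\cU^\star\in\rU(P)$, to produce $\cU'\in\rU(R)$ with $\|\cU'-\cU^\star\|_1\le 4d\varepsilon$; entropic optimality of $\cF$ in $\rU(R)$ gives $\langle\cC,\cF\rangle\le\langle\cC,\cU'\rangle+\frac{d\log n}{\lambda}$, and the telescoping sum $\langle\cC,\cB-\cF\rangle+\langle\cC,\cF-\cU'\rangle+\langle\cC,\cU'-\cU^\star\rangle+\langle\cC,\cU^\star\rangle$ yields \eqref{upestTOTval}, with the two Replacement-Algorithm perturbations contributing $4d\|\cC\|_\infty\varepsilon$ each. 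If you replace your second term by this two-sided projection argument, your proof goes through; without it, the $4d\|\cC\|_\infty\varepsilon$ you need from that term is unjustified.
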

\begin{proof}
Let $\cU^\star =\arg\min\{\langle \cC, \cU\rangle, \cU\in \rU(P)\}$,
$\cF=\cA_{k^{stop}}$ and $\br_j=\s_j(\cF)$ for $j\in[d]$.  Denote by $\cU'\in\rU(R)$ and $\cB\in\rU(P)$ the positive tensors obtained from $\U^\star$ and $\cF$ respectively using the Replacement Algorithm.  Lemma \ref{repalg} and the inequality \eqref{distsAkp} yield
\begin{eqnarray*}
&&\|\cU'-\cU^\star\|_1\le 2\sum_{i=1}^d \|\br_j-\bp_j\|_1\le 4d\varepsilon,\\
&&\|\cB-\cF\|_1\le 2\sum_{i=1}^d \|\bp_j-\br_j\|_1\le 4d\varepsilon.
\end{eqnarray*}
As $\cF$ is a rescaling of $\cA$ Theorem \ref{Ulambform} yields 
\begin{eqnarray*}
\langle \cC,\cF\rangle - \frac{H(\cF)}{\lambda}\le \langle \cC,\cU'\rangle - \frac{H(\cU')}{\lambda}.
\end{eqnarray*}
Hence
\begin{eqnarray*}
\langle \cC,\cF\rangle \le   \langle \cC,\cU'\rangle - \frac{H(\cU')}{\lambda}.+\frac{H(\cF)}{\lambda}< \langle \cC,\cU'\rangle + \frac{d\log n}{\lambda}.
\end{eqnarray*}
Thus
\begin{eqnarray*}
&&\langle \cC,\cB\rangle=\langle \cC,\cB-\cF\rangle +\langle \cC,\cF-\cU'\rangle +\langle \cC,\cU'-\cU^\star\rangle +\langle \cC,\cU^\star\rangle<\\
&&\|\cC\|_\infty\, \|\cB-\cF\|_1 + \frac{d\log n}{\lambda}+\|\cC\|_\infty\, \|\cU'-\cU^\star\|_1+\tau(\cC,\bp_1,\ldots,\bp_d)=\\ 
&&\tau(\cC,P) +\frac{d\log n}{\lambda}+8d\|\cC\|_\infty\,\varepsilon.
\end{eqnarray*}
\end{proof}
\begin{corollary}\label{apprTOTsolest}  Let $\bp_1,\ldots,\bp_d\in \Pi^n_o$ and assume that $\cC\in\otimes^d\R^n$ be given.  Denote by $\omega$ the difference between the maximum and the minimum entry of $\cC$.  Then for a given $\delta>0$ one can find $\cB\in\rU(P)$ in $O(\frac{\omega^3d^4 n^{d+1}\log n}{\delta^3})$ operations such that
\begin{eqnarray*}
\langle C,\cB\rangle\le  \tau(\cC,P)+\delta.
\end{eqnarray*}
\end{corollary}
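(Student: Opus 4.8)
The plan is to obtain the Corollary by combining Theorem \ref{aproxsolTOT} (for correctness) with the iteration bound of Theorem \ref{stopest} (for the running time), after normalizing the cost tensor and choosing the parameters $\lambda>0$ and $\varepsilon\in(0,1/2)$ appropriately. First I would reduce to the case where $\cC$ has minimum entry $0$, so that $\|\cC\|_\infty=\omega$. Writing $\cC'=\cC-m\cJ_d$ with $m=\min_{i_1,\dots,i_d}c_{i_1,\dots,i_d}$, and using $\langle\cJ_d,\cU\rangle=1$ for every $\cU\in\rU(P)$ (valid since $\bp_j\in\Pi^n_o$), one gets $\langle\cC,\cU\rangle=\langle\cC',\cU\rangle+m$ for all such $\cU$, hence $\tau(\cC,P)=\tau(\cC',P)+m$ and the inequality $\langle\cC,\cB\rangle\le\tau(\cC,P)+\delta$ is equivalent to the same inequality for $\cC'$. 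Moreover $\exp(-\lambda\cC)=e^{\lambda m}\exp(-\lambda\cC')$, and since the Sinkhorn algorithm normalizes its input to a probability tensor before iterating, the iterates and the output $\cB$ are identical for $\cC$ and $\cC'$. So we may assume $0\le c_{i_1,\dots,i_d}\le\omega$, i.e., $\|\cC\|_\infty=\omega$; if $\omega=0$ the claim is trivial, so assume $\omega>0$, and (since a larger error tolerance only makes the problem easier) also assume $\delta\le\omega$.

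Next I would set $\varepsilon:=\delta/(16d\omega)\in(0,1/2)$ and $\lambda:=2d(\log n)/\delta$, and apply Theorem \ref{aproxsolTOT} to $\cA=\exp(-\lambda\cC)$ with this $\varepsilon$: it produces $\cB\in\rU(P)$ with
\[
\langle\cC,\cB\rangle<\tau(\cC,P)+\frac{d\log n}{\lambda}+8d\|\cC\|_\infty\,\varepsilon=\tau(\cC,P)+\frac{\delta}{2}+\frac{\delta}{2}=\tau(\cC,P)+\delta ,
\]
which is the required accuracy.

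For the complexity I would invoke Theorem \ref{stopest}. With $\cC$ normalized, $\cA=\exp(-\lambda\cC)$ has all entries in $[e^{-\lambda\omega},1]$, so its minimum entry is $\eta=e^{-\lambda\omega}$ and $\|\cA\|_1\le n^d$, whence $\log(\|\cA\|_1/\eta)\le d\log n+\lambda\omega=O\big(d\omega(\log n)/\delta\big)$ in the regime $\delta\le\omega$. Since $1/\varepsilon^2=256\,d^2\omega^2/\delta^2$ and $(\sqrt n+1)^2=O(n)$, Theorem \ref{stopest} gives
\[
k^{stop}\le\frac{2(\sqrt n+1)^2}{\varepsilon^2}\,\log\frac{\|\cA\|_1}{\eta}=O\!\left(\frac{n\,d^2\omega^2}{\delta^2}\cdot\frac{d\omega\log n}{\delta}\right)=O\!\left(\frac{n\,d^3\omega^3\log n}{\delta^3}\right).
\]
Each Sinkhorn step costs $O(dn^d)$ arithmetic operations — computing the $d$ marginals $\s_j(\cA_{k-1})$ is $O(dn^d)$, selecting $\ell(k-1)$ via \eqref{defellk} is $O(dn)$, and forming the one-mode rescaling $\cA_k=\cD(\cA_{k-1},\bp_{\ell(k-1)},\ell(k-1))$ is $O(n^d)$ — so the Sinkhorn phase costs $O\big(\omega^3 d^4 n^{d+1}(\log n)/\delta^3\big)$. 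Converting $\cA_{k^{stop}}$ into a genuine $\cB\in\rU(P)$ by the Replacement Algorithm (Lemma \ref{repalg}) adds only $O(dn^d)$, which is dominated, giving the claimed bound.

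The argument has no essential obstacle: Theorems \ref{aproxsolTOT} and \ref{stopest} do the real work. The two points requiring care are the initial normalization of $\cC$ — without it the final bound would carry $\|\cC\|_\infty$ rather than $\omega$ — and tracking how the choice $\lambda\sim 1/\delta$ (needed to push the entropic bias $d(\log n)/\lambda$ below $\delta/2$) inflates $\log(\|\cA\|_1/\eta)$ to order $1/\delta$; this extra factor is precisely why the exponent of $\delta^{-1}$ is $3$ here, one more than in the matrix stopping estimate of \cite{AWR17}.
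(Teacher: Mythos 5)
Your proposal is correct and follows essentially the same route as the paper: normalize $\cC$ so its minimum entry is $0$ (hence $\|\cC\|_\infty=\omega$), choose $\lambda=2d(\log n)/\delta$ and $\varepsilon\asymp\delta/(16d\omega)$, invoke Theorem \ref{aproxsolTOT} for the accuracy and Theorem \ref{stopest} with $\log(\|\cA\|_1/\eta)=O(d\omega(\log n)/\delta)$ for the iteration count, and multiply by the $O(dn^d)$ per-iteration cost. The only cosmetic difference is that the paper takes $\varepsilon=\min(1/4,\delta/(16d\omega))$ where you instead reduce to $\delta\le\omega$; your added remarks (invariance of the iterates under the shift $\cC\mapsto\cC-\mu\cJ_d$, the per-step cost breakdown) are correct and if anything slightly more explicit than the paper's.
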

\begin{proof}  Observe that $\langle \cJ_d,\cU\rangle=1$ for a probability tensor $\cU$.  Let $\mu$ be minimum entry of $\cC$.  Then $\langle \cC,\cU\rangle=\langle (\cU-\mu \cJ_d),\cU\rangle +\mu$ for a probabiity vector $\cU$.  Hence $\tau(\cC,\bp_1,\ldots,\bp_d)=\tau(\cC-\mu\cJ_d,\bp_1,\ldots,\bp_d) +\mu$.  Without loss of generality we can assume that $\cC\ge 0$ and $\cC$ has at least one zero entry.  Thus $\|\cC\|_\infty=\omega$.  Without loss of generality we can assume that $\omega>0$.  In \eqref{upestTOTval} choose
\begin{eqnarray*}
\lambda=\frac{2d\log n}{\delta}, \quad \varepsilon=\min(\frac{1}{4}, \frac{\delta}{16 d\omega}).
\end{eqnarray*}
Then $\langle C,\cB\rangle\le  \tau(\cC,P)+\delta$.  
Let $\cA=\exp(-\lambda \cC)$.  Then the maximum entry of $\cC$ is $1$ and the minimum entry is $\eta=\exp(-\lambda \omega)$.  Hence $\|\cA\|_1<n^d$.  Thus 
\begin{eqnarray*}
\log\frac{\|\cA\|_1}{\eta}\le d\log n+\frac{2d\omega\log n}{\delta}=O(\frac{d\omega\log n}{\delta}).
\end{eqnarray*}
Recall that to find $\cB$ we need to perform $k^{stop}$ iterations of SA. 
One step of SA need $O(dn^d)$ operations.  Use Theorem \ref{stopest} and our choice of $\varepsilon$ to deduce the corollary.  
\end{proof}
\subsection{Sinkhorn algorithm for nonnegative tensors}\label{SAnt}
Let $f\in \rC^1(\R^m)$.  
Denote by $\nabla f(\x)\in\R^m$ the gradient of $f$.  Assume that $\mathbf{U}\subset \R^m$ is a nontrivial subspace of $\R^m$ of dimension $l$.  Denote by $\pi_{\mathbf{U}}:\R^m \to \mathbf{U}$ the orthogonal projection.
Then $\nabla_{\mathbf{U}}f(\x)\defeq \pi_{\mathbf{U}}(\nabla f(\x))$ is the gradient of $f$ in the direction of $\mathbf{U}$.

 Let $\mathbf{V}=\mathbf{B}(P,\0)$, $\mathbf{B}$ and $\mathbf{C}$ be defined by \eqref{defsubsC}. 
 For a nonzero vector $\bp\in\R^n$ denote by $\rL(\bp)=\{\y\in \R^n, \langle \y,\bp\rangle =0\}$.  Thus $\mathbf{V}=\rL(\bp_1)\times\cdots\times\rL(\bp_d)$.  Denote by $\iota _j(\x)=(\0,\ldots,\0,\x,\0,\ldots,\0)$ the isometry  $\iota_j:\R^n\to (\R^n)^d$.
Thus $\mathbf{V}=\oplus_{j=1}^d \iota_j(\rL(\bp_j)$.
Let $\pi_j: (\R^n)^d\to \iota_j(\R^n)$ be the orthogonal projection $(\x_1,\ldots,\x_d)\to(\0,\ldots,\0,\x_j,\0,\ldots,\0)$.   Define $\pi_j(\mathbf{C})=\widetilde{\mathbf{W}}_j$.
Clearly $\widetilde{\mathbf{W}}_j=\iota_j(\mathbf{W_j})$ for a corresponding subspace $\mathbf{W}_j\subset \rL(\bp_j)$.  It is straightforward to show that $\tilde g_{\cA}$ is strictly convex on $\iota_j(\mathbf{W_j})$.  Hence $\iota_j(\mathbf{W_j})\cap \mathbf{B}=\{\0\}$, and $\mathbf{W}_j=\rL(\bp_j)$. 
Let $\pi_{\mathbf{C}}:(\R^n)^d\to \mathbf{C}$ be the orthogonal projection.
Set $\mathbf{V_j}\defeq \pi_{\mathbf{C}}(\iota_j(\rL(\bp_j))$.  It is straightforward to show that $\dim\mathbf{V}_j=n-1$ and $\mathbf{C}=\mathbf{V}_1+\cdots+\mathbf{V}_d$.

We define Sinkhorn algorithm for $\cA$, scalable to $\cU\in\rU(P)$, as follows. 
We retain the steps  \eqref{defAkSA} and \eqref{defSA}.   We replace the steps  \eqref{defellk} and \eqref{stoptalg}
\begin{eqnarray}\label{defellkn}
&&\ell(k-1)=\arg\max\{\|\pi_{\mathbf{V}_j}(\s_j(\cA_{k-1}))\|_1,j\in[d]\},\\
\label{stoptalgn}
&&\max\{\|\pi_{\mathbf{V}_j}(\s_j(\cA_{k-1}))\|_1,j\in[d]\}<\varepsilon.
\end{eqnarray}
Note that if $\mathbf{C}=\mathbf{V}$ then \eqref{defellk} is equal to \eqref{defellkn}, and \eqref{stoptalg} is equal to \eqref{stoptalgn}.
The main result of this subsection is:
\begin{theorem}\label{SAnnten} Let $\bp_1,\ldots,\bp_d\in \Pi^n_o$ and $\varepsilon\in (0,1/2)$.
Assume that $\cA\in \otimes\R^n_+$,  $\s_j(\cA)>0$ for $j\in[n]$, $\cA$ have some zero entries and $\cA$ is scalable to $\cU\in\rU(P)$.  Let $\cA_0=\frac{1}{\|\cA\|_1}\cA$ and $\cA_k$ for $k\ge 1$ be given by Sinkhorn algorithm 
\eqref{defAkSA},  \eqref{defellkn},\eqref{defSA} and the stopping criterion \eqref{stoptalgn}.  Then
\begin{enumerate}[\upshape (a)]
\item Sinkhorn algorithm converges geometrically to a unique $\cU\in\rU(P)$.  
\item The sequence $\cK(\bp_j, \s_j(\cA_k)), k\in\N$ converges geometrically to $0$ for each $j\in[d]$.
\item The sequence $X^{(k)}$ converge geometrically to a minimum point $X^\star$ of $g_\cA$.

\item For $k\ge 1$ and $l=\ell(k)$ the equality \eqref{difxkxk} holds.

\item The following inequality holds: $k^{stop}\le \frac{2(\sqrt{n}+1)^2}{\varepsilon^2}\log \frac{\|\cA\|_1}{\eta}$, where $\eta$ is minimum of positive entries of $\cA$.
\end{enumerate}
\end{theorem}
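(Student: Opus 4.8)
The plan is to follow the blueprint of Theorems~\ref{linconvSA} and~\ref{stopest}, with the subspace $\mathbf{C}$ of~\eqref{defsubsC} taking over the role that $\mathbf{V}=\mathbf{B}(P,\0)$ plays in the positive case. Since $\cA$ is scalable to some $\cU\in\rU(P)$, Lemma~\ref{rminlem} gives that $g_\cA$ attains its minimum on $\mathbf{C}$, and Lemma~\ref{eigHesest} gives that $\hat g_\cA=g_\cA|\mathbf{C}$ is strictly convex, together with the Hessian eigenvalue bounds that produce the geometric rates. Because $\mathbf{C}=\mathbf{V}_1+\cdots+\mathbf{V}_d$ with $\dim\mathbf{V}_j=n-1$, the partial minimization (PM) algorithm of Theorem~\ref{convalgo}, run on $\hat g_\cA$ with the $d$ blocks $\mathbf{V}_1,\ldots,\mathbf{V}_d$, converges geometrically to the unique minimizer $\tilde X^\star\in\mathbf{C}$.

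First I would identify Sinkhorn's algorithm (SA) with this PM run. Using $\nabla_{\x_j}\tilde g_\cA(Y)=\s_j(\cA_Y)$ for the scaled tensor $\cA_Y=[\exp(\sum_{l\in[d]}y_{i_l,l})a_{i_1,\ldots,i_d}]$, applying $\pi_{\mathbf{V}_j}$ turns the greedy block-selection rule of PM and its stopping test into exactly~\eqref{defellkn} and~\eqref{stoptalgn}. Arguing as in the proof of Theorem~\ref{linconvSA}, I would show by induction on $k$ that $\cA_k=r_k\widetilde{\cA}_k$, where $\widetilde{\cA}_k$ is the tensor attached to the PM iterate $\tilde X^{(k)}$: the mode $\ell(k)$ chosen is the same for both algorithms because scaling a tensor by a positive constant scales all the projected gradients uniformly, and partial minimization of $\hat g_\cA$ over $\tilde X^{(k)}+\mathbf{V}_{\ell(k)}$ is, after rescaling and an additive shift by a multiple of $\1$ in the mode-$\ell(k)$ coordinates, the diagonal scaling $\cD(\cdot,\bp_{\ell(k)},\ell(k))$ --- the analogue of~\eqref{kstepPMA}--\eqref{tildeCkCkform}. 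As $\cA_k$ is a probability tensor for $k\ge 1$, $\cA_k=\|\widetilde{\cA}_k\|_1^{-1}\widetilde{\cA}_k$, so geometric convergence $\tilde X^{(k)}\to\tilde X^\star$ forces $\widetilde{\cA}_k\to\widetilde{\cA}_\star$ and hence $\cA_k\to\cA_\star\defeq\|\widetilde{\cA}_\star\|_1^{-1}\widetilde{\cA}_\star$ geometrically; a Lagrange-multiplier argument, as in the implication (c)$\Rightarrow$(a) of Lemma~\ref{rminlem}, identifies $\cA_\star$ with the unique scaling of $\cA$ lying in $\rU(P)$. This proves~(a).

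Parts (b)--(d) then transfer essentially verbatim from Theorem~\ref{linconvSA}. For (b): $\cA_k\to\cU$ geometrically gives $s_{i,j}(\cA_k)\to p_{i,j}>0$ geometrically, hence $\log(s_{i,j}(\cA_k)/p_{i,j})\to 0$ and $\cK(\bp_j\|\s_j(\cA_k))\to 0$ geometrically for each $j$. For (c): by~\eqref{defSA} and~\eqref{xilcCform} the increment $X^{(k+1)}-X^{(k)}$ is supported in mode $\ell(k)$, where it equals $\x_{\ell(k)}(\cA_k)$, which tends to $\0$ geometrically since $\s_{\ell(k)}(\cA_k)\to\bp_{\ell(k)}$; then $X^\star=\sum_{k\ge 0}(X^{(k+1)}-X^{(k)})$ converges geometrically and is a critical point of $g_\cA$. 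For (d): $\tilde g_\cA(X^{(k)})=\|\cA_k\|_1=1$ for $k\ge 1$ gives the first equality of~\eqref{difxkxk}, the definition of SA gives the second, and~\eqref{xilcCform} the third.

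Finally (e) mirrors Theorem~\ref{stopest}: replacing $\cA$ by $\cA_0$, which leaves $\cA_k$ unchanged for $k\ge 1$, we may assume $\|\cA\|_1=1$, so~\eqref{difxkxk} holds already at $k=0$; summing it over $k$ and passing to the limit gives $g_\cA(X^{(0)})-g_\cA(X^\star)=\sum_{j=1}^d\langle\bp_j,\x_j^\star\rangle\le\log(1/\eta)$, since $\tilde g_\cA(X^\star)=1$ forces $\sum_{l}x^{\star}_{i_l,l}\le\log(1/\eta)$ on the support of $\cA$, which we average against $\otimes_{j}\bp_j$. For every $k<k^{stop}$ the stopping test fails, so $\|\pi_{\mathbf{V}_{\ell(k)}}(\s_{\ell(k)}(\cA_k))\|_1\ge\varepsilon$, and a variant of Lemma~\ref{pqdistest} adapted to the projection $\pi_{\mathbf{V}_j}$ yields $\cK(\bp_{\ell(k)}\|\s_{\ell(k)}(\cA_k))\ge\varepsilon^2/(2(\sqrt{n}+1)^2)$; by~\eqref{difxkxk} the total decrease of $g_\cA$ over the first $k^{stop}$ steps is at least $k^{stop}\varepsilon^2/(2(\sqrt{n}+1)^2)$, and comparison with $\log(1/\eta)=\log(\|\cA\|_1/\eta)$ (after undoing the normalization) gives the stated bound. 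I expect the main obstacle to be precisely this PM-versus-SA identification and the variant of Lemma~\ref{pqdistest}: when $\cA$ has zero entries, $\mathbf{V}_j$ is only the image $\pi_{\mathbf{C}}(\iota_j(\rL(\bp_j)))$ rather than $\iota_j(\rL(\bp_j))$ itself, so one must check that the zero-pattern constraint defining $\mathbf{B}$ neither breaks the agreement of the two block-selection rules nor weakens the lower bound on $\cK(\bp_j\|\s_j(\cA_k))$ --- in the positive case $\mathbf{C}=\mathbf{V}$, these projections are the orthogonal projections onto $\bp_j^\perp$ and the estimates reduce to those already used in Theorems~\ref{linconvSA}--\ref{stopest}.
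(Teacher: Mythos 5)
Your outline coincides with the paper's: reduce to the PM algorithm of Theorem \ref{convalgo} applied to $\tilde g_\cA|\mathbf{C}$ with the blocks $\mathbf{V}_1,\ldots,\mathbf{V}_d$, identify the SA iterates with the PM iterates up to normalization by induction on $k$, and transfer parts (b)--(e) from Theorems \ref{linconvSA}--\ref{stopest}. However, what you defer as ``the main obstacle'' is exactly the one new estimate that the paper actually supplies, so as written the argument has a gap in two places. First, Theorem \ref{convalgo} is applicable only under hypothesis \eqref{inasgradf}, $\|\nabla f(\x)\|^2\le\sum_j\|\nabla_j f(\x)\|^2$. In the positive case the blocks $\iota_j(\rL(\bp_j))$ are mutually orthogonal and this holds with equality, but your blocks $\mathbf{V}_j=\pi_{\mathbf{C}}(\iota_j(\rL(\bp_j)))$ need not be mutually orthogonal once $\dim\mathbf{B}>0$, so \eqref{inasgradf} must be proved, not assumed. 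Second, for part (e) you must pass from the failure of the new stopping test \eqref{stoptalgn}, which measures $\|\pi_{\mathbf{V}_{\ell(k)}}(\s_{\ell(k)}(\cA_k))\|_1$, to the lower bound $\cK(\bp_{\ell(k)}||\s_{\ell(k)}(\cA_k))\ge\varepsilon^2/(2(\sqrt{n}+1)^2)$; your ``variant of Lemma \ref{pqdistest}'' is precisely this missing comparison between the $\mathbf{V}_j$-projected and the $\rL(\bp_j)$-projected quantities.

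The paper closes both gaps with the single inequality \eqref{ineqnabxj}, $\|\tilde\nabla_{\x_j}\tilde g_{\cA}(X)\|\le\|\nabla_j\tilde g_{\cA}(X)\|$ for $X\in\mathbf{V}$, relating the gradient in the direction $\iota_j(\rL(\bp_j))$ (the quantity controlled by $\cK$ through Lemma \ref{pqdistest}) to the gradient in the direction $\mathbf{V}_j$ (the quantity appearing in \eqref{defellkn}--\eqref{stoptalgn}). The proof is short but not automatic: writing the unit vector $\bw(j)$ in the direction of $\tilde\nabla_{\x_j}\tilde g_{\cA}(X)$ as $\bw(j)=\bu(j)+\bv(j)$ with $\bu(j)\in\mathbf{B}$ and $\bv(j)=\pi_{\mathbf{C}}(\bw(j))\in\mathbf{V}_j$, and using that $\tilde g_\cA$ is constant along $\mathbf{B}$ so that $\langle\tilde\nabla\tilde g_{\cA}(X),\bu(j)\rangle=0$, one gets $\|\tilde\nabla_{\x_j}\tilde g_{\cA}(X)\|=\langle\nabla_j\tilde g_{\cA}(X),\bv(j)\rangle\le\|\nabla_j\tilde g_{\cA}(X)\|\,\|\bv(j)\|\le\|\nabla_j\tilde g_{\cA}(X)\|$. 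Summing the squares over $j$ and using $\tilde\nabla\tilde g_{\cA}=\nabla_{\mathbf{C}}\tilde g_{\cA}$ yields \eqref{inasgradf}; the same inequality, restated for the slice sums, is what the paper invokes for part (e). You should also make explicit in the PM--SA identification that minimizing over $\hat X^{(k-1)}+\mathbf{V}_l$ and over $\hat X^{(k-1)}+\iota_l(\rL(\bp_l))$ produce the same tensor $\widehat{\cA}_k$, because $\cA(X+Y)=\cA(X)$ for $Y\in\mathbf{B}$; this is the step that keeps the two block-selection rules in agreement along the whole trajectory.
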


The proof of the this theorem is similar to the proofs of Theorems \ref{linconvSA} - \ref{stopest}, and we outline the modifications one should do.   
Suppose first that $\mathbf{B}=\{\0\}$.  Then $\tilde g_{\cA}$ is strictly convex on $\mathbf{C}=\mathbf{V}$.   It is straightforward to show that the proofs of Theorems \ref{linconvSA} -  \ref{stopest} apply in this case.

We now discuss the case where $\dim \mathbf{B}>0$.   
Without loss of generality we may assume that $\cA=\cA_0$.  For $X=(\x_1,\ldots,\x_n)\in(\R^n)^d$ let $\cA(X)=[\exp(\sum_{j=1}^d x_{i_j,j})a_{i_1,\ldots,i_d}]$.  
Denote by $\nabla_j \tilde g_{\cA}(X)$,  $\tilde \nabla_{\x_j}\tilde g_{\cA}(X)$,  $\nabla_{\mathbf{B}} \tilde g_{\cA}(X)$ and 
$\nabla_{\mathbf{C}} \tilde g_{\cA}(X)$ the gradient of  $\tilde g_{\cA}$ at $X$ in the direction of the subspace $\mathbf{V}_j$, $\iota_j(\rL(\bp_j))$, $\mathbf{B}$ and $\mathbf{C}$ respectively.  
Recall that $\tilde \nabla \tilde g_{\cA}(X)$ is the gradient of $\tilde g_{\cA}$ at $X$ in the direction of $\mathbf{V}$.  Hence
\begin{eqnarray*}
\tilde \nabla \tilde g_{\cA}(X)=\sum_{j=1}^d \tilde \nabla_{\x_j} \tilde g_{A}(X), \quad
\|\tilde \nabla \tilde g_{\cA}(X))\|^2=\sum_{j=1}^d \|\tilde \nabla_{\x_j} \tilde g_{A}(X)\|^2.
\end{eqnarray*}
From the definition of $\mathbf{B}$ it follows that for $Y=(\y_1,\ldots,\y_n)\in \mathbf{B}$ we have the equality $\exp(\sum_{j=}^n y_{i_j})a_{i_1,\ldots,i_d}=a_{i_1,\ldots,i_d}$.
Hence for $X\in\mathbf{C}$ we have the equality $\cA(X+Y)=\cA(X)$.  In particular $\tilde g_\cA(X+Y)=\tilde g_{\cA}(X)$.  Hence
\begin{eqnarray*}
&&\nabla_{\mathbf{B}} \tilde g_{\cA}(X)=\0,  \tilde \nabla \tilde g_{\cA}(X)= \nabla_{\mathbf{C}} g_{\cA}(X), \quad X\in \R^{n\times d},\\
&&\tilde\nabla \tilde g_{\cA}(X)=\tilde\nabla \tilde g_{\cA}(X+Y)=
\tilde\nabla g_{\cA}(\pi_{\mathbf{C}}(X) \textrm{ for }X\in \mathbf{V}, Y\in\mathbf{B}.
\end{eqnarray*}
We claim that 
\begin{eqnarray}\label{ineqnabxj}\quad\quad
\|\tilde\nabla_{\x_j} \tilde g_{\cA}(X)\|\le \|\nabla_j\tilde g_{\cA}(X)\| \textrm{ for } X\in\mathbf{V}.
\end{eqnarray}
Clearly, it is enough to assume that $\|\tilde\nabla_{\x_j} \tilde g_{\cA}(X)\|>0$.  Let 
\begin{eqnarray*}
\bw(j)=\frac{1}{\|\tilde\nabla_{\x_j}\tilde g_{\cA}(X)\|}\tilde\nabla_{\x_j}\tilde g_{\cA}(X) \in \iota_j(\rL(\bp_j)).
\end{eqnarray*}
Hence $\bw(j)=\iota_j(\bw_j)$ for some $\bw_j\in\rL(\bp_j)$.
Then $\|\tilde \nabla_{\x_j}\tilde g_{\cA}(X)\|=\langle \tilde \nabla \tilde g_{\cA}(X), \bw(j)\rangle$.   Let
\begin{eqnarray*}
&&\bv(j)=\pi_{\mathbf{C}}(\iota(\bw_j))\in \mathbf{V}_j\subset\mathbf{C}, \quad \bu(j)=(I-\pi_{\mathbf{C}})(\iota_j(\bw_j))\in \mathbf{B}, \\
&&\bw(j)=\bu(j)+\bv(j), \quad \|\bu(j)\|^2 +\|\bv(j)\|^2=\|\bw(j)\|^2=1.
\end{eqnarray*} 
As $\langle\tilde\nabla \tilde g_{\cA}(X), \bu(j)\rangle=0$ we deduce that 
\begin{eqnarray*}
&&\|\tilde\nabla_{\x_j} \tilde g_{\cA}(X)\|=\langle \tilde\nabla \tilde g_{\cA}(X)\bw(j)\rangle =\langle \tilde\nabla \tilde g_{\cA}(X)\bv(j)\rangle
=\langle\nabla \pi_{\mathbf{V}_j}\tilde g_{\cA}(X), \bv(j)\rangle=\\
&&\langle \nabla_j \tilde g_{\cA}(X), \bv(j)\rangle\le
\| \nabla_j \tilde g_{\cA}(X)\|\| \bv(j)\|\le \| \nabla_j \tilde g_{\cA}(X)\|.
\end{eqnarray*}
Hence \eqref{ineqnabxj} holds.  Let $f=\tilde g_{\cA}|\mathbf{C}$.  Then $f$ is strictly convex, attains minimum and the inequality \eqref{inasgradf} holds.  Theorem \ref{convalgo}  yields that the PM algorithm converges geometrically to the unique minimum point $\hat X^\star=(\hat \x_1^\star,\ldots,\hat\x_d^\star)\in\mathbf{C}$.

We now relate the PM algorithm to SA as in the proof of Theorem \ref{linconvSA}.
We start PM algorithm with $\hat X^{(0)}=(\hat\x^{(0)}_1,\ldots,\hat\x^{(0)}_d)=0$.  
Let $\hat X^{(k)}=(\hat\x^{(k)}_1,\ldots,\hat\x^{(k)}_d)$ be the $k$-th step in PM algorithm.
Denote $\widehat {\cA}_k=\cA(\hat X^{(k)})$.
Observe that $\|\widehat {\cA}_k\|_1=\tilde g_{\cA}(\hat X^{(k)})$.
We claim that 
\begin{equation}\label{kstepMPMA}
\begin{split}
\hat X^{(k)}&= \pi_{\mathbf{C}}(\iota_l(\y^{(k)}_l))+\hat X^{(k-1)}, \quad l=\ell(k-1)\\
\y^{(k)}_l&=\x_l(\widehat{\cA}_{k-1} )+
(-\cK\bigl(\bp_l,\|\widehat{\cA}_{k-1}\|_1^{-1}\s_l(\widehat{\cA}_{k-1})\bigr) +\log\|\widehat{\cA}_{k-1}\|_1)\1.
\end{split}
\end{equation}
Here $\x_l(\widehat{\cA}_{k-1} )$ is given by \eqref{xilcCform}.  From the proof of Theorem \ref{linconvSA} it follows that the minimum of the restriction of $\tilde g_{\cA}$ to  the affine space $\hat X^{(k-1)}+\iota_l(\rL(\bp_l))$ is achieved at $\iota(\y_l^{(k)})+\hat X^{(k-1)}$.   Observe that $\pi_{\mathbf{C}}(\hat X^{(k-1)}+\iota_l(\rL(\bp_l)))=\hat X^{(k-1)}+\mathbf{V}_j$.
As $\tilde g_\cA(X+Y)=\tilde g_{\cA}(X)$ for $Y\in\mathbf{B}$ it follows that $\tilde g_{\cA}(X)=\tilde g_{\cA}(\pi_{\mathbf{C}}(X))$ for $X\in \mathbf{V}$.  Therefore $\hat X^{(k)}=\pi_{\mathbf{C}}(\iota_l(\y^{(k)}_l)+X^{(k-1)})$ is the minimum point of $\tilde g_{\cA}$ restricted to $\hat X^{(k-1)} +\mathbf{V}_j$.

As the sequence $\hat X^{(k)}$ converges geometrically to $\hat X^\star$ we deduce that $\widehat{\cA}_k $ converges geometrically to $\widehat{\cA}_\star=\cA(X^{\star})$.
As in the proof of Theorem  \ref{linconvSA} we claim that $\cA_k$ given by SA is equal to $\|\widehat{\cA}_k\|^{-1}_1 \widehat{\cA}_k$.  We prove this claim by induction on $k\ge 0$.  For $k=0$  this claim trivially holds.  Assume that this claim holds for $k=m$ and assume that $k=m+1$.  Let $\y_l^{(m+1)}$ be defined as in \eqref{kstepMPMA}.
Then 
\begin{eqnarray*}
\widehat{\cA}_{m+1}=\widehat{\cA}_{m}(\pi_{\mathbf{C}}(\iota_l(\y_l^{(m+1)})))=\widehat{\cA}_{m}(\iota_l(\y_l^{(m+1)})).
\end{eqnarray*}
The $m+1$ step of SA is given by \eqref{defAkSA} and  \eqref{defellk}.  Hence $\cA_{m+1}=\cA_m(\iota_l(\x_l(\cA_m)))$.  The induction assumption $\cA_m=\|\widehat{\cA}_m\|^{-1}_1 \widehat{\cA}_m$ yields that $\cA_{m+1}=t_{m+1} \widehat{\cA}_{m+1}$.  As $\cA_{m+1}$ is a probability tensor we deduce that $t_{m+1}=\|\widehat{\cA}_{m+1}\|^{-1}_1$.  Let $\cA_\star\defeq\|\widehat{\cA}_\star\|^{-1}_1 \widehat{\cA}_\star$.  Thus $\cA_k, k\in\N$ converges geometrically to $\cA_\star$.  We claim that $\cA^\star\in\rU(P)$.

Clearly $\nabla_{\mathbf{C}}\tilde g_{\cA}(\hat X^\star)=\0$.
As $\nabla_{\mathbf{B}}\tilde g_{\cA}(\hat X^\star)=\0$
we deduce that $\tilde\nabla\tilde g_{\cA}(\hat X^\star)=\0$.
Hence $\nabla \tilde g_{\cA}(\hat X^\star)=t^\star P$ for some $t^\star>0$.  Thus $\s_j(\widehat{\cA}_\star)=t^\star\bp_j$ for $j=\in[d]$.  Therefore $\cA\in\rU(P)$.

The proof of parts (b), (c)  and (d) this theorem follows from the arguments of the proofs of the corresponding parts of Theorem \ref{linconvSA}.  To deduce part (e) from the proof of Theorem \ref{stopest} we need to recall the inequality \eqref{ineqnabxj} which is equivalent to
\begin{eqnarray*}
\|\s_j(\cA(\x))-\frac{\langle \s_j(\cA(\x)),\bp_j\rangle}{\|\bp_j\|^2} \bp_j\|\le \|\pi_{\mathbf{V}_j}(s_j(\cA(\x))\|, \quad \x\in\mathbf{V}.
\end{eqnarray*}
 \section*{Acknowledgment}
The author is partially supported by the Simons Collaboration Grant for Mathematicians.  
\bibliographystyle{plain}

 \appendix

\section{Partial minimization of  convex functions}\label{appendix:pm}
Minimization of convex functions is a classical extensively studied subject \cite{Nes83, NN85, Nes15}.  In this Appendix we discuss a very specific minimization algorithm named by us a partial minimization algorithm.  This algorithm enables us to show that the convergence of Sinkhorn algorithm is linear.  We do not claim that this algorithm is good to find a minimum of a smooth convex function.

The following result is well known:
\begin{lemma}\label{existminlem}
Let $f\in\rC^2(\R^m)$ be strictly convex.  Then the following conditions are equivalent:
\begin{enumerate}[\upshape (a)]
\item The function $f$ has a unique minimum $\x^\star\in\R^m$.
\item The following condition hold:
\begin{eqnarray}\label{maxinfcond}
\lim_{\|\x\|\to \infty} f(\x)=\infty.
\end{eqnarray}
\end{enumerate}
\end{lemma}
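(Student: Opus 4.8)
The plan is to prove the two implications separately, each resting on a standard compactness argument.

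\smallskip
For \textup{(b)}$\Rightarrow$\textup{(a)}, I would fix an arbitrary point $\x_0\in\R^m$ and look at the sublevel set $L=\{\x\in\R^m : f(\x)\le f(\x_0)\}$. Continuity of $f$ makes $L$ closed, and the coercivity hypothesis \eqref{maxinfcond} makes $L$ bounded, hence $L$ is compact and nonempty. A continuous function attains its infimum on a compact set, so $f$ attains a global minimum at some $\x^\star\in L$. Uniqueness is then immediate from strict convexity: if $\x_1\ne\x_2$ were both global minimizers, then $f\bigl(\tfrac12(\x_1+\x_2)\bigr)<\tfrac12 f(\x_1)+\tfrac12 f(\x_2)$, which is strictly below the minimum value — a contradiction.

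\smallskip
For \textup{(a)}$\Rightarrow$\textup{(b)}, the point is to upgrade ``$\x^\star$ is the unique minimizer'' to a uniform linear growth bound, and the tools are the three-chords (slope monotonicity) inequality for convex functions together with compactness of the unit sphere. After a translation I may assume $\x^\star=\0$. For a unit vector $\bu$ set $c(\bu)\defeq f(\bu)-f(\0)$. Then $c(\bu)>0$: if $f(\bu)=f(\0)$, convexity would force $f$ to be constant on the segment $[\0,\bu]$, contradicting uniqueness of the minimizer. The map $\bu\mapsto c(\bu)$ is continuous on the unit sphere, which is compact, so $c_0\defeq\min_{\|\bu\|=1}c(\bu)>0$. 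Now for $\x$ with $r\defeq\|\x\|>1$, write $\x=r\bu$ with $\|\bu\|=1$ and apply the three-chords inequality to the convex function $t\mapsto f(t\bu)$ at $0<1<r$:
\[
\frac{f(\bu)-f(\0)}{1}\ \le\ \frac{f(r\bu)-f(\bu)}{r-1},
\]
which gives $f(\x)\ge f(\bu)+(r-1)c(\bu)\ge f(\0)+r\,c_0$. Letting $\|\x\|\to\infty$ yields \eqref{maxinfcond}.

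\smallskip
The only real obstacle is in this second implication: one must not stop at the ray-by-ray estimate. Strict convexity (or just uniqueness of the minimizer) along each line through $\x^\star$ gives growth to infinity along that line, but \emph{not} coercivity, because the growth rate could deteriorate as the direction varies. Compactness of the unit sphere, applied to the continuous function $c(\cdot)$, is precisely what supplies the uniform lower bound $c_0>0$ and hence the uniform linear growth $f(\x)\ge f(\x^\star)+c_0\|\x-\x^\star\|$ for $\|\x-\x^\star\|$ large. I would also note in passing that the $\rC^2$ hypothesis is not used — continuity and convexity suffice — but I would keep it as stated since that is the form needed later.
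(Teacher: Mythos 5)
Your proof is correct and follows essentially the same route as the paper's: the direction (b)$\Rightarrow$(a) is the standard compactness/Weierstrass argument (the paper minimizes over a closed ball and uses $\nabla f(\x^\star)=\0$ at the interior minimizer, you minimize over a sublevel set and use strict convexity at the midpoint), and the direction (a)$\Rightarrow$(b) in both cases extracts a uniform positive growth constant by compactness of the unit sphere centered at $\x^\star$ and then propagates linear growth outward by slope monotonicity. The only cosmetic difference is that you use the difference quotient $c(\bu)$ and the three-chords inequality where the paper uses the derivative $g_{\y}'(1)=\nabla f(\y)^\top(\y-\x^\star)$, which is why, as you note, your argument needs only continuity and convexity rather than $\rC^2$.
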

\begin{proof}  (a)$\Rightarrow$(b).  Let $\rS^{m-1}$ be the $m-1$ dimensional sphere $\|\y-\x^\star\|=1$.    
Fix $\y\in \rS^{m-1}$.  Consider the strict convex function in one variable: $g_{\y}(t)=f(\x^\star+t(\y-\x^\star))$.  Then $g_{\y}'(0)=0$ and $g_{\y}'(1)=\nabla f(\y)^\top (\y-\x^\star)>0$.  Let $\nu=\min\{g_{\y}'(1), \y\in\rS^{m-1}\}$.  Clearly, $\nu>0$.  As $g_{\y}'(t)$ increases for $t>0$ it follows that $g_{\y}'(t)\ge g'_{\y}(1)$ for $t\ge 1$.  In particular, 
\begin{eqnarray*}
g_{\y}(t)\ge g_{\y}(1)+g_{\x}'(1)(t-1)\ge f(\x^\star) +\nu(t-1) \textrm{ for } t\ge 1.
\end{eqnarray*}
Hence $f(\x)\ge f(\x^\star) + \nu(\|\x\|-1)$ if $\|\x-\x^\star\|\ge 1$.  This inequality yields \eqref{maxinfcond}.

\noindent (b)$\Rightarrow$(a)  Fix $\x_0\in\R^m$.  Then there exists $r>0$ such that $\min\{f(\x), \|\x-\x_0\|=r\}>f(\x_0)$.  Let $\min\{f(\x), \|\x-\x_0\|\le r\}=f(\x^\star)$.  Clearly, $\|\x^\star -\x_0\|<r$.  Therefore $\nabla f(\x^\star)=\0$.  As $f(\x)$ is convex we deduce that $f(\x)\ge f(\x_0)$ for each $\x\in\R^n$.  As $f(\x)$ is strictly convex $\x^\star$ is the unique point of minimum of $f$.
\end{proof}
Note that the function $f(x)=e^x, x\in\R$ is strictly convex on $\R$ but $f(x)$ does not have a minimum on $\R$.

In what follows we assume that $f\in\rC^2(\R^m)$ is strictly convex and $\x^\star$ is the unique minimum point of $f$.  Then for each $\x\in\R^m\setminus{\x^\star}$ the  sublevel set
\begin{eqnarray*}
V(t)=\{\y\in\R^n, f(\y)\le t\}, \quad t=f(\x)
\end{eqnarray*}
is a compact strictly convex set, with a $\rC^2$ boundary $\partial V(t)$, and an interior containing $\x^\star$.  Let $t^\star=f(\x^\star)$.  Then $V(t^\star)=\{\x^\star\}$.
Thus $\R^m\setminus \{\x^\star\}$ is parametrized by $\partial V(t), t>t^\star$.

Fix $t_0=f(\x_0)>t^\star$.  Then $f$ is uniformly strictly convex in $V(t_0)$: The eigenvalues of $He(f)(\x), \x\in V(t_0)$ are in a fixed interval $[\alpha(t_0),\beta(t_0)]$ for some $0<\alpha(t_0) \le \beta(t_0)$. 
Thus for each $\x,\y\in V(t_0)$ we have the inequalities:
\begin{eqnarray}\label{strconvond1}
f(\x)+\nabla f(\x)^\top(\y-\x) +\frac{\alpha(t_0)}{2} \|\y-\x\|^2\le f(\y)\le\\ f(\x)+\nabla f(\x)^\top(\y-\x) +\frac{\beta(t_0)}{2} \|\y-\x\|^2.
\label{strconvond2}
\end{eqnarray}
In particular, for $ \x\in V(t_0)$ we have
\begin{eqnarray}\label{lowupbndf}
f(\x^\star)+ \frac{\alpha(t_0)}{2} \|\x-\x^\star\|^2\le f(\x)\le f(\x^\star) +\frac{\beta(t_0)}{2} \|\x-\x^\star\|^2.
\end{eqnarray}
Denote by $\rB(\x, R^2)$ the closed ball $\{\by\in\R^m,\|\bx-\by\|^2\le R^2\}$.  
Let $\kappa(t_0)=\frac{\beta(t_0)}{\alpha(t_0)}$ and define
\begin{eqnarray}\label{defxplusa}
\x^+=\x-\frac{1}{\beta(t_0)}\nabla f(\x), \quad \x^{++}=\x-\frac{1}{\alpha(t_0)}\nabla f(\x).
\end{eqnarray}
In what follows we need the following lemma:
\begin{lemma}\label{xflem}  Assume that $\x\in V(t_0)$.  Let 
\begin{eqnarray}\label{xflem1}
\x^{a}=\x-\frac{2}{\beta(t_0)}\nabla f(\x).
\end{eqnarray}
Then 
\begin{enumerate} [\upshape (a)]
\item 
$f(\x^a)\le f(\x)$.  
\item  $[\x,\x^a]\subset V(t_0)$.
\item  $f(\x)-f(\x^\star)\ge f(\x)-f(\x^+)\ge \frac{\|\nabla f(\x)\|^2}{2\beta(t_0)}$.
\end{enumerate}
\end{lemma}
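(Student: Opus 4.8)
The plan is to reduce the three assertions to a statement about the one--variable function $\psi(s)=f(\x-s\nabla f(\x))$, $s\ge 0$, which is convex (a convex function composed with an affine map) and has $\psi'(0)=-\|\nabla f(\x)\|^2$. If $\nabla f(\x)=\0$ then $\x=\x^\star$ by strict convexity, $\x^a=\x^+=\x$, and all three claims are immediate (note $\x^\star\in V(t_0)$ since $t^\star<t_0$); so I will assume from now on that $\nabla f(\x)\ne\0$.

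First I would identify the longest initial segment of the ray $s\mapsto\x-s\nabla f(\x)$ that stays inside $V(t_0)$. Since $\psi$ is convex, the sublevel set $\{s:\psi(s)\le t_0\}$ is an interval; it contains $0$ because $\x\in V(t_0)$, and it is bounded above because $\psi(s)\to\infty$ as $s\to\infty$ (the ray leaves every ball, so $f\to\infty$ along it by Lemma \ref{existminlem}). Hence this set meets $[0,\infty)$ in an interval $[0,T]$ with $0<T<\infty$ and $\psi(T)=t_0$, and the points $\x-s\nabla f(\x)$ for $s\in[0,T]$ all lie in $V(t_0)$.

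The crux, and the main obstacle, is that the quadratic upper bound \eqref{strconvond2} is only available for pairs of points already known to lie in $V(t_0)$, so before applying it to $\x^a$ (or to $\x^+$) I must show the whole segment $[\x,\x^a]$ lies in $V(t_0)$, i.e.\ that $T\ge 2/\beta(t_0)$. The bootstrap goes as follows: for $s\in[0,\min(T,2/\beta(t_0))]$ both $\x$ and $\x-s\nabla f(\x)$ are in $V(t_0)$, so \eqref{strconvond2} gives
\begin{eqnarray*}
\psi(s)\le f(\x)+\|\nabla f(\x)\|^2\,s\Bigl(\tfrac{\beta(t_0)}{2}\,s-1\Bigr)\le f(\x),
\end{eqnarray*}
the last inequality because $s\le 2/\beta(t_0)$. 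If we had $T<2/\beta(t_0)$, then at $s=T$ the correction term is strictly negative (as $\nabla f(\x)\ne\0$ and $0<T<2/\beta(t_0)$), forcing $t_0=\psi(T)<f(\x)\le t_0$, a contradiction. Hence $T\ge 2/\beta(t_0)$, which is exactly part (b); and substituting $s=2/\beta(t_0)$ into the displayed bound, where the correction term vanishes, gives $f(\x^a)=\psi(2/\beta(t_0))\le f(\x)$, which is part (a).

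For part (c), since $1/\beta(t_0)<2/\beta(t_0)$ the point $\x^+$ lies on the segment $[\x,\x^a]\subset V(t_0)$ by part (b), so \eqref{strconvond2} may be applied to the pair $(\x,\x^+)$; substituting $\x^+-\x=-\tfrac{1}{\beta(t_0)}\nabla f(\x)$ and simplifying yields $f(\x)-f(\x^+)\ge\|\nabla f(\x)\|^2/(2\beta(t_0))$. Finally $f(\x^+)\ge f(\x^\star)$ because $\x^\star$ is the global minimizer of $f$, so $f(\x)-f(\x^\star)\ge f(\x)-f(\x^+)\ge\|\nabla f(\x)\|^2/(2\beta(t_0))$, which is the full chain of inequalities in (c).
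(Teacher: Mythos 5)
Your proof is correct and follows essentially the same route as the paper: restrict $f$ to the negative-gradient ray, use a contradiction argument to certify that the relevant points of the ray lie in $V(t_0)$ before invoking the quadratic upper bound \eqref{strconvond2}, and then read off (a)--(c). The only difference is cosmetic — you establish (b) first via the exit time $T$ and deduce (a), whereas the paper proves (a) by locating an intermediate point $\y$ with $f(\y)=f(\x)$ and then gets (b) from convexity of the sublevel set.
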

\begin{proof}  (a) If $\nabla f(\x)=\0$, i.e., $\x=\x^\star$ the (a) trivially holds. Suppose that  $\nabla f(\x)\ne\0$ and assume to the contrary that $f(\x^a)>f(\x)$.  Let $h(t)=f(\x-t\nabla f(\x))$.  Then $h'(0)=-\|\nabla f(\x)\|^2$.  Recall that $h(t)$ is a strict convex function.  Hence there exists $t_1\in (0, \frac{2}{\beta(t_0)})$ such that $h'(t_1)=0$ and $h'(t)>0$ for $t>t_1$.  Thus there exists $t_2\in (t_1, \frac{2}{\beta(t_0)})$ such that $f(\y)=f(\x)$ for $\y=\x-t_2\nabla f(\x)$.  Note that $\y\in V(t_0)$.  This contradicts the inequality \eqref{strconvond2}.

\noindent (b)  As $f(\x^a)\le f(\x)\le t_0$ the convexity of $f$ yields that the interval $[\x,\x^a]$ is in $V(t_0)$.

\noindent(c) Clearly $\x^+=\frac{1}{2}(\x +\x^a)\in [\x,\x^a]$.  Hence 
\begin{eqnarray}  \label{upbetineq}&& f(\x^+)\le \\
&&f(\x)+\nabla f(\x)^\top (\x^+-\x)+\frac{\beta(t_0)}{2}\|\x^+-\x\|^2=f(\x)-\frac{\|\nabla f(\x)\|^2}{2\beta(t_0)}.\notag
\end{eqnarray} 
Therefore (c) holds.
\end{proof}
We now bring the following simple lemma which is basically in \cite{BLS15}:
\begin{lemma}\label{lem1}  Assume that $f\in\rC^2(\R^m)$ is strictly convex and $\x^\star$ is the unique minimum point of $f$.  Fix $\x\in V(t_0)$ and assume that $\x^\star\in \rB(\x, R_0^2)$.   Then we can choose $R_0=R(\x)$ and the following conditions hold:
\begin{eqnarray}\label{eq1}
 R(\x)^2= \frac{2}{\alpha(t_0)}(f(\x)-f(\x^\star))\le\frac{\|\nabla f(\x)\|^2}{\alpha^2(t_0)},\\
\label{eq2}
\x^\star\in \rB(\x^{++}, \frac{\|\nabla f(\x)\|^2}{\alpha^2(t_0)}-\frac{2}{\alpha(t_0)}(f(\x)-f(\x^\star))\subseteq\\
\notag
 \rB(\x^{++}, \frac{\|\nabla f(\x)\|^2}{\alpha^2(t_0)}(1-\frac{1}{\kappa(t_0)})-\frac{2}{\alpha(t_0)}(f(\x^+)-f(\x^\star)),\\
 \label{eq3}
 \frac{\|\nabla f(\x)\|}{\beta(t_0)}\le \|\x-\x^\star\|.
\end{eqnarray}
\end{lemma}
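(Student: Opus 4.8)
The plan is to reduce all three assertions to the two-point strong-convexity/smoothness estimates \eqref{strconvond1}--\eqref{strconvond2} and their corollary \eqref{lowupbndf} evaluated at the pair $(\x,\y)=(\x,\x^\star)$, together with the descent estimate in Lemma \ref{xflem}(c). The one point that legitimizes this is that $\x^\star\in V(t_0)$: indeed $f(\x^\star)=t^\star\le f(\x)\le t_0$, so (assuming $\x\ne\x^\star$, the case $\x=\x^\star$ being trivial) the whole segment $[\x,\x^\star]$ lies in the convex set $V(t_0)$, and the uniform Hessian bounds on $V(t_0)$ may be applied with $\y=\x^\star$. Note also that the hypothesis ``$\x^\star\in\rB(\x,R_0^2)$ for some $R_0$'' is automatically satisfiable; the content of \eqref{eq1} is the specific value $R(\x)^2=\tfrac{2}{\alpha(t_0)}(f(\x)-f(\x^\star))$, which is exactly the left inequality of \eqref{lowupbndf} after rearrangement. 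For the remaining inequality in \eqref{eq1} I would take \eqref{strconvond1} with $\y=\x^\star$ and bound its right-hand side below by its unconstrained minimum over $\y\in\R^m$; that minimum is attained at $\x^{++}$ and equals $f(\x)-\tfrac{\|\nabla f(\x)\|^2}{2\alpha(t_0)}$, so $f(\x)-f(\x^\star)\le \tfrac{\|\nabla f(\x)\|^2}{2\alpha(t_0)}$, i.e.\ $R(\x)^2\le \tfrac{\|\nabla f(\x)\|^2}{\alpha^2(t_0)}$.

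For \eqref{eq2} I would first verify $\x^\star\in\rB(\x^{++},\rho_1^2)$ with $\rho_1^2=\tfrac{\|\nabla f(\x)\|^2}{\alpha^2(t_0)}-\tfrac{2}{\alpha(t_0)}(f(\x)-f(\x^\star))$. Writing $\x^{++}-\x^\star=(\x-\x^\star)-\tfrac{1}{\alpha(t_0)}\nabla f(\x)$ and expanding the square, the claim $\|\x^{++}-\x^\star\|^2\le\rho_1^2$ is, after the term $\tfrac{\|\nabla f(\x)\|^2}{\alpha^2(t_0)}$ cancels, equivalent to $f(\x^\star)\ge f(\x)+\nabla f(\x)^\top(\x^\star-\x)+\tfrac{\alpha(t_0)}{2}\|\x^\star-\x\|^2$, which is precisely \eqref{strconvond1} at $\y=\x^\star$. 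Since the two balls appearing in \eqref{eq2} are concentric at $\x^{++}$, the stated inclusion amounts to the scalar inequality $\rho_1^2\le\rho_2^2$, where $\rho_2^2=\tfrac{\|\nabla f(\x)\|^2}{\alpha^2(t_0)}\bigl(1-\tfrac{1}{\kappa(t_0)}\bigr)-\tfrac{2}{\alpha(t_0)}(f(\x^+)-f(\x^\star))$; using $\alpha(t_0)\kappa(t_0)=\beta(t_0)$ this collapses, after cancellation, to $\tfrac{\|\nabla f(\x)\|^2}{2\beta(t_0)}\le f(\x)-f(\x^+)$, which is exactly Lemma \ref{xflem}(c) (equivalently the computation \eqref{upbetineq}).

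For \eqref{eq3} I would use that $He(f)$ is symmetric positive semidefinite with operator norm at most $\beta(t_0)$ throughout the convex set $V(t_0)$: then
\[
\nabla f(\x)=\nabla f(\x)-\nabla f(\x^\star)=\int_0^1 He(f)\bigl(\x^\star+t(\x-\x^\star)\bigr)(\x-\x^\star)\,dt,
\]
and since the integration path $[\x^\star,\x]$ stays in $V(t_0)$, taking norms gives $\|\nabla f(\x)\|\le\beta(t_0)\|\x-\x^\star\|$, i.e.\ \eqref{eq3}. I do not expect a genuine obstacle: the proof is entirely a bookkeeping exercise built from the uniform estimates already available on $V(t_0)$ and from Lemma \ref{xflem}(c). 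The only things requiring a little care are recording that $\x^\star\in V(t_0)$ (so that the two-point inequalities may be invoked there) and tracking the algebra that shows the tighter ball in \eqref{eq2} is controlled precisely by the ``sufficient decrease'' bound $f(\x)-f(\x^+)\ge\|\nabla f(\x)\|^2/(2\beta(t_0))$.
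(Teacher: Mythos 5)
Your proof is correct, and for \eqref{eq1} and \eqref{eq2} it is essentially the paper's argument: the first part of \eqref{eq1} is the left inequality of \eqref{lowupbndf}; membership of $\x^\star$ in the first ball of \eqref{eq2} comes from expanding $\|\x^{++}-\x^\star\|^2$ and invoking \eqref{strconvond1} at $\y=\x^\star$; and the inclusion of concentric balls reduces to the scalar inequality $f(\x)-f(\x^+)\ge\|\nabla f(\x)\|^2/(2\beta(t_0))$ from Lemma \ref{xflem}(c). Your derivation of the second inequality in \eqref{eq1} (minimizing the quadratic lower bound over $\y$) is algebraically the same computation the paper performs by noting that the radius of the first ball in \eqref{eq2} must be nonnegative.

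The one place you genuinely diverge is \eqref{eq3}. The paper chains function-value estimates: $\tfrac{\|\nabla f(\x)\|^2}{2\beta(t_0)}\le f(\x)-f(\x^+)\le f(\x)-f(\x^\star)\le \tfrac{\beta(t_0)}{2}\|\x-\x^\star\|^2$, the outer bounds being \eqref{upbetineq} and the right inequality of \eqref{lowupbndf}. You instead write $\nabla f(\x)=\nabla f(\x)-\nabla f(\x^\star)$ as an integral of the Hessian along $[\x^\star,\x]\subset V(t_0)$ and bound the operator norm by $\beta(t_0)$. Both are valid and give the same constant; yours is the standard Lipschitz-gradient estimate and is arguably more direct, while the paper's version reuses the descent bound it has already established and so needs no additional integral identity. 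Either way the lemma is proved.
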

\begin{proof}  As $\x\in V(t_0)$ the left hand side of \eqref{lowupbndf} yields that $\x^\star\in\rB(\x,R(\x)^2)$, where $\R(\x)^2$ is given by \eqref{eq1}.  Clearly
\begin{eqnarray*}
\|\x^\star -\x^{++}\|^2=\|(\x^\star -\x+\frac{1}{\alpha(t_0)}\nabla f(\x)\|^2=\\
\|(\x^\star -\x\|^2 +
\frac{2}{\alpha(t_0)}\nabla f(\x)^\top (\x^\star-\x) +\frac{\|\nabla f(\x)\|^2}{\alpha^2(t_0)}.
\end{eqnarray*}
As $\x^\star,\x\in V(t_0)$ \eqref{strconvond1} yields:
\begin{eqnarray}\label{convcondxs}
f(\x^\star)\ge f(\x)+\nabla f(\x)\trans(\x^\star-\x)+\frac{\alpha(t_0)}{2}\|\x^\star - \x\|^2.
\end{eqnarray}
Thus
\begin{eqnarray*}
\|\x^\star-\x^{++}\|^2\le 
\frac{\|\nabla f(\x)\|^2}{\alpha^2(t_0)} -\frac{2}{\alpha(t_0)}(f(\x)-f(\x^\star)).
\end{eqnarray*}
This proves the first part of  \eqref{eq2}.  Hence the inequality in \eqref{eq1} holds.
Use part (c) of Lemma \ref{xflem}
to replace $f(\x)$ in the first part of \eqref{eq2} by a smaller quantity $f(\x^+)+\frac{\|\nabla f(\x)\|^2}{2\beta(t_0)}$ to obtain the second part of \eqref{eq2}.  

Combine \eqref{upbetineq} with \eqref{lowupbndf} to deduce 
\begin{eqnarray*}
\frac{\|\nabla f(\x)\|^2}{2\beta(t_0)}\le f(\x)-f(\x^+)\le f(\x)-f(\x^\star)\le \frac{\beta(t_0)}{2}\|\x-\x^\star\|^2. 
\end{eqnarray*}
This show the inequality \eqref{eq3}.  
\end{proof}
We now state briefly our algorithm for the norm $\|\cdot\|_s, s\in [1,2]$.
(We are mainly interested in the cases $s=1,2$ but for simplicity of exposition we let $s\in[1,2]$.) 
Let $\mathbf{V}_1,\ldots,\mathbf{V}_d$ be nontrivial proper subspaces of $\R^m$ such that $\mathbf{V}_1+\cdots+\mathbf{V}_d=\R^m$.  Denote by $\pi_j:\R^m\to \mathbf{V}_j$ the orthogonal projection.  Define $\nabla_j f(\x)\defeq \pi_j(\nabla f(\x))$.  We assume the following assumption that holds in our case:
\begin{eqnarray}\label{inasgradf}
\|\nabla f(\x)\|^2\le \sum_{j=1}^d \|\nabla_j f(\x)\|^2, \quad \x\in\R^m.
\end{eqnarray}

Fix $\bu\in\R^m$ and consider $f_{j,\bu}:\V_j\to\R$, where $f_{j,\bu}(\y)=f(\y+\bu)$.   Lemma \ref{existminlem} yields that $f_{j,\bu}$ achieves its minimum at the unique point $\bv_j(\bu)$.
Then the  PM algorithm is:

\noindent
$\quad\quad$ $\;\;$\textbf{PM Algorithm} 
\begin{itemize}
\item[] Choose $\x_0\in\R^m$. 
\item[] for $k:=0,1,2, \ldots$
\item[] $\quad$$j\in\argmax\{\|\nabla_l f(\x_k)\|_s, l\in[d]\}$
\item[] $\quad$$\x_{k+1}=\x_k+\bv_j(\x_k))$.
\item[]end
\end{itemize} 

We now consider a special case of this algorithm when the assumption \eqref{inasgradf} holds.
Divide the vector $\x=(x_1,\ldots,x_m)$ to $d\ge 2$ groups: $\x=(\x_1,\ldots,\x_d)$, where  $\x_i\in \R^{m_i}$ for $i\in[d]$ and $\sum_{i=1}^d m_i=m$.  (Thus $d\in[m]\setminus\{1\}$.)  View $\x$ as $(\x^j,\x_j)$ where $\x^j\in\R^{m-m_j}$ is obtained from $\x$ by deleting the vector coordinate $\x_j$.  Then $\nabla_j f(\x)\in \R^{m_j}$ the vector of derivatives of $f(\x)$ with respect to the coordinates in $\x_j$. 
Clearly  $\|\nabla f(\x)\|_2^2=\sum_{j=1}^d \|\nabla_j f(\x)\|_2^2$.  Thus \eqref{inasgradf} holds.
Minimize $f(\x)$ with respect to the variable $\x_j$ while keeping all other variable fixed:
\begin{eqnarray}\label{partmin}
\min\{f(\x), \x_j\in\R^{m_j}, \x=(\x^j,\x_j)\}=f(\x^j, \x_j(\x^j)).
\end{eqnarray}
Then $\bv_j(\x)=(\x^j,\x_j(\x^j))$.

We now show that in our algorithm the sequences $\x_k,f(\x_k), k\in\N$ converge linearly to $\x^\star,f(\x^\star)$ respectively:
\begin{theorem}\label{convalgo}  
Assume that $f\in\rC^{2}(\R^m)$ is a strict convex function which has a unique minimum point $\x^\star$.  Suppose $\mathbf{V}_1,\ldots,\mathbf{V}_d$ be nontrivial proper subspaces of $\R^m$ such that $\mathbf{V}_1+\cdots+\mathbf{V}_d=\R^m$. 
Assume that the inequality \eqref{inasgradf} holds.  Denote $\ell=\max\{\dim\mathbf{V}_j, j\in[d]\}$.
Let $\x_0\in\R^m$ and  $\x_k,k\in\N$ be given by PM algorithm.   Set $t_k=f(\x_k)$ for $k\in\Z_+$.  Assume that the eigenvalues of $He(f)(\x), \x\in V(t_k)$ are in the minimal interval $[\alpha(t_k),\beta(t_k)]$, where $0<\alpha(t_k) \le \beta(t_k)$. Denote $\kappa(t_k)=\frac{\beta(t_k)}{\alpha(t_k)}$.  
\begin{enumerate}[\upshape (a)]
\item
If $\x_{k-1}\ne \x^\star$ for some $k\in\N$ then $t_{k-1}>t_k$.
\item The sequences $\{t_k\},\{\beta(t_k)\},\{-\alpha(t_k)\},\{\kappa(t_k)\}, k\in\Z_+$ are nonincreasing sequences which converge to $t^\star,\beta(t^\star),-\alpha(t^\star),\kappa(t^\star)$ respectively.
\item
For each $k\in\N$ and $s\in[1,2]$ the following inequalities hold:
\end{enumerate}

\begin{eqnarray}\label{convalgo1}
 &&f(\x_k)-f(\x^\star)\le \\ \notag
  &&(f(\x_0)-f(\x^\star))(1-\frac{1}{d\ell^{(2-s)/s} \kappa(t_0)})\prod_{i=1}^{k-1}(1-\frac{1}{(d-1)\ell^{(2-s)/s} \kappa(t_i)})\le \\  \label{convalgo3}
  &&\frac{\|\nabla f(\x_0)\|^2}{2\alpha(t_0)}(1-\frac{1}{d\ell^{(2-s)/s} \kappa(t_0)})\prod_{i=1}^{k-1}(1-\frac{1}{(d-1)\ell^{2-s}\kappa(t_i)})\le \\ \notag
&&\frac{\|\nabla f(\x_0)\|^2}{2\alpha(t_0)}(1-\frac{1}{d\ell^{2-s}\kappa(t_0)})(1-\frac{1}{(d-1)\ell^{2-s}\kappa(t_0)})^{k-1},\\
\label{convalgo2}
&&\|\x_k-\x^\star\|^2\le \\\notag
&&\frac{\|\nabla f(\x_0)\|^2}{\alpha(t_k)\alpha(t_0)}(1-\frac{1}{d\ell^{2-s}\kappa(t_0)})\prod_{i=1}^{k-1}(1-\frac{1}{(d-1)\ell^{2-s}\kappa(t_i)}).
\end{eqnarray}
\end{theorem}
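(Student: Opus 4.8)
The plan is to establish (a) and the monotonicity assertions of (b) directly, then prove the one-step contraction that gives \eqref{convalgo1}, deduce \eqref{convalgo3} and \eqref{convalgo2} from \eqref{convalgo1} together with Lemmas \ref{xflem} and \ref{lem1}, and finally feed \eqref{convalgo1} back to obtain $t_k\to t^\star$ and hence the convergence assertions of (b). For (a): if $\x_{k-1}\ne\x^\star$ then $\nabla f(\x_{k-1})\ne\0$, so by \eqref{inasgradf} some $\nabla_l f(\x_{k-1})$ is nonzero, hence the index $j$ picked by the algorithm (the $\|\cdot\|_s$-argmax) has $\nabla_j f(\x_{k-1})\ne\0$; the restriction $g(\by)=f(\x_{k-1}+\by)$ to $\mathbf{V}_j$ is strictly convex, attains its minimum by Lemma \ref{existminlem}, and has nonzero gradient at $\0$, so $t_k=\min_{\mathbf{V}_j}g<g(\0)=t_{k-1}$. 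In every case $t_{k+1}\le t_k$, hence $V(t_{k+1})\subseteq V(t_k)$ (each sublevel set compact by Lemma \ref{existminlem}), so the minimal eigenvalue of $He(f)$ over $V(t_k)$ is nondecreasing in $k$ while the maximal one is nonincreasing; this shows $\{t_k\}$, $\{\beta(t_k)\}$, $\{-\alpha(t_k)\}$ and $\{\kappa(t_k)\}$ are nonincreasing, which is the first half of (b).

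The core is a single-step decrease estimate. Fix $k$, let $j$ be the selected index and $\x_{k+1}=\x_k+\bv_j(\x_k)$. The Hessian of $g(\by)=f(\x_k+\by)$ on $\mathbf{V}_j$, at points with $\x_k+\by\in V(t_k)$, is an orthogonal compression of $He(f)$ and so has eigenvalues in $[\alpha(t_k),\beta(t_k)]$ on $\{g\le t_k\}$; applying the one-step bound of Lemma \ref{xflem}(c) to $g$ at the point $\0$ and using that $\x_{k+1}$ is the global minimizer of $g$ gives $f(\x_k)-f(\x_{k+1})\ge \|\nabla_j f(\x_k)\|_2^2/(2\beta(t_k))$. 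To bound $\|\nabla_j f(\x_k)\|_2$ from below I use $\|\bv\|_2\le\|\bv\|_s\le(\dim\mathbf{V})^{1/s-1/2}\|\bv\|_2$ for $s\in[1,2]$: since $j$ maximizes $\|\nabla_l f(\x_k)\|_s$, every $\|\nabla_l f(\x_k)\|_2^2\le\ell^{(2-s)/s}\|\nabla_j f(\x_k)\|_2^2$. For $k=0$, summing over $l\in[d]$ and invoking \eqref{inasgradf} gives $\|\nabla f(\x_0)\|_2^2\le d\,\ell^{(2-s)/s}\|\nabla_j f(\x_0)\|_2^2$; for $k\ge1$, the index $j'$ chosen at the previous step satisfies $\nabla_{j'}f(\x_k)=\0$ (first-order optimality of $\x_k$ on $\x_{k-1}+\mathbf{V}_{j'}$), so only $d-1$ terms survive and $\|\nabla f(\x_k)\|_2^2\le(d-1)\ell^{(2-s)/s}\|\nabla_j f(\x_k)\|_2^2$. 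Combining with the Polyak--{\L}ojasiewicz bound $\|\nabla f(\x_k)\|_2^2\ge 2\alpha(t_k)(f(\x_k)-f(\x^\star))$ (the inequality in \eqref{eq1}, applied on $V(t_k)$) yields $f(\x_{k+1})-f(\x^\star)\le(1-\tfrac{1}{c_k\ell^{(2-s)/s}\kappa(t_k)})(f(\x_k)-f(\x^\star))$ with $c_0=d$ and $c_k=d-1$ for $k\ge1$; iterating from $k=0$ is precisely \eqref{convalgo1}.

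From here \eqref{convalgo3} follows by bounding the prefactor $f(\x_0)-f(\x^\star)\le\|\nabla f(\x_0)\|^2/(2\alpha(t_0))$ (again \eqref{eq1}), weakening $\ell^{(2-s)/s}$ to $\ell^{2-s}$ (valid since $s\ge1$), and using $\kappa(t_i)\le\kappa(t_0)$ for the final inequality; \eqref{convalgo2} then follows from \eqref{convalgo3} together with $\|\x_k-\x^\star\|^2\le\tfrac{2}{\alpha(t_k)}(f(\x_k)-f(\x^\star))$, the left inequality of \eqref{lowupbndf} on $V(t_k)$. Finally, in \eqref{convalgo1} every factor is at most the fixed constant $1-\tfrac{1}{(d-1)\ell^{2-s}\kappa(t_0)}<1$, so $f(\x_k)-f(\x^\star)\to0$ geometrically, whence $t_k\downarrow t^\star$; by the left inequality of \eqref{lowupbndf} the diameter of $V(t_k)$ tends to $0$, so $V(t_k)$ shrinks to $\{\x^\star\}$, and continuity of $He(f)$ and of its spectrum gives $\alpha(t_k)\to\alpha(t^\star)$, $\beta(t_k)\to\beta(t^\star)$, $\kappa(t_k)\to\kappa(t^\star)$, completing (b).

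The conceptual content is modest; the main obstacle I expect is the bookkeeping of constants — getting the per-step contraction factor with the exact exponent $(2-s)/s$ on $\ell$, treating the $k=0$ step (factor $d$) and the $k\ge1$ steps (factor $d-1$, which relies on the vanishing of the previously-minimized gradient block) uniformly, and keeping every occurrence of $\alpha(\cdot),\beta(\cdot),\kappa(\cdot)$ attached to the correct sublevel set so that the monotonicity and the one-step bounds remain consistent throughout the iteration.
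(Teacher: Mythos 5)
Your proposal is correct and follows essentially the same route as the paper: Lemma \ref{xflem}(c) applied to the restriction $g$ on the selected block for the per-step decrease, the norm equivalence $\|\cdot\|_2\le\|\cdot\|_s\le\ell^{(2-s)/(2s)}\|\cdot\|_2$ combined with \eqref{inasgradf} to relate the chosen block gradient to the full gradient (with the $d$ versus $d-1$ split coming from the vanishing of the previously minimized block), the inequality of \eqref{eq1} as the Polyak--{\L}ojasiewicz lower bound, and nested sublevel sets for the monotonicity and limits in (b). No gaps; your statement of the monotonicity directions for $\alpha,\beta,\kappa$ is in fact the consistent one.
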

\begin{proof}  
Recall the following norm inequalities for $s\in [1,2]$ and $\y\in \R^l$:
\begin{eqnarray*}
\|\y\|\le \|\y\|_s, \quad \|\y\|_s\le l^{(2-s)/(2s)}\|\y\|.
\end{eqnarray*}
(The second inequality follows from the H\"older inequality.)
Combine the above first inequality  with
the inequality \eqref{inasgradf} with 
to deduce
\begin{eqnarray*}
\max\{\|\nabla_l f(\x_k)\|_s, l\in[d]\}\ge\max\{\|\nabla_l f(\x_k)\|, l\in[d]\}\ge \frac{\|\nabla f(\x)\|}{\sqrt{d}}.
\end{eqnarray*}
(a) Clearly if $\x_{k-1}=\x^\star$ then $\x_p=\x^\star$ for $p\ge k$.
Assume that $\x_{k-1}\ne \x^\star$.  Then $\|\nabla f(\x_{k-1})\|>0$.  
Let $j_{k-1}\in\argmax \{\|\nabla_l f(\x_{k-1})\|_s, l\in[d]\}$.  Then $\|\nabla_{j_{k-1}} f(\x_{k-1})\|>0$.  Hence $t_{k-1}>t_k$.

\noindent
(c)  First we show the inequality \eqref{convalgo1} for $k=1$.  Assume that $j_0\in\argmax\{ \|\nabla_l f(\x_0)\|_s, l\in[d]\}$.  Hence 
\begin{eqnarray*}
\ell^{(2-s)/(2s)}\|\nabla _{j_0} f(\x_0)\|\ge
\|\nabla _{j_0} f(\x_0)\|_s\ge  \frac{\|\nabla f(\x_0)\|}{\sqrt{d}}. 
\end{eqnarray*}
Let $g(\y)=f(\y+\x_0)=f_{j_0,\x_0}(\y)$ for $\y\in\mathbf{V}_j$.  Thus $g$ is a strictly convex function on $\mathbf{V}_j$, whose Hessian is a submatrix of the Hessian of $f$.
Hence the eigenvalues of the Hessian of $g$ are also in the interval $[\alpha(t_0),\beta(t_0)]$.  Recall that $\argmin g=\bv_{j_0}(\x_0)$.   We now estimate from below $g(\0)-g(\bv_{j_0}(\x_0))$. The lower bound (c) of Lemma  \eqref{xflem} yields:
\begin{eqnarray*}
&&f(\x_0)-f(\x_1)=g(\0)-g(\bv_{j_0}(\x_0))\ge \\ &&\frac{\|\nabla g(\0)\|^2}{2\beta(t_0)}= \frac{\|\nabla f_{j_0}(\x_0)\|^2}{2\beta(t_0)}\ge \frac{\|\nabla f(\x_0)\|^2}{2\beta (t_0)d\ell^{(2-s)/s}}.
\end{eqnarray*}
The inequality \eqref{eq1} yields $f(\x_0)-f(\x^\star)\le \frac{\|\nabla f(x_0)\|^2}{2\alpha(t_0)}$. Assuming that $f(\x_0) >f(\x^\star)$ we obtain
\begin{eqnarray*}
&&\frac{f(\x_1)-f(\x^\star)}{f(\x_0)-f(\x^\star)}=1 - \frac{f(\x_0)-f(\x_1)}{f(\x_0)-f(\x^\star)}\le\\ 
&&1 -  \big(\frac{ \|\nabla f(\x_0)\|^2}{2\beta(t_0) d\ell^{(2-s)/s}}\big)/\big(\frac{\|\nabla f(\x_0)\|^2} {2\alpha(t_0)}\big)=1-\frac{1}{d\ell^{(2-s)/s}\kappa(t_0)}.
\end{eqnarray*}
This proves the first inequality in \eqref{convalgo1} for $k=1$.  

Assume now that $k=2$.  The definition of $\x_1$ yields that $\nabla_{j_0} f(\x_1)=0$.
Hence $\max\{\|\nabla_l f(\x_1)\|_s, l\in[d]\}\ge \frac{\|\nabla f(\x_1)\|}{\sqrt{d-1}}$.
Use the same arguments as above to show that $f(\x_2)-f(\x^\star)\le (f(\x_1)-f(\x^\star))(1-\frac{1}{(d-1)\ell^{(2-s)/s} \kappa(t_1)})$. Hence \eqref{convalgo1} holds for $k=2$.  Similarly, the inequality  \eqref{convalgo1} holds for each $k\ge 2$.

Use the inequality \eqref{eq1} to deduce the inequality in \eqref{convalgo3}.
As $\kappa(t_k)\le \kappa(t_0)$ for each $k\in\N$ we deduce the inequality below \eqref{convalgo3}.
According to Lemma \ref{lem1} $\x^\star\in \rB(\x_k,R^2(\x_k))$.   Use \eqref{convalgo3} to deduce \eqref{convalgo2}.

\noindent
(b) As $\{t_k\},k\in\Z_+$ is a nonincreasing sequence we deduce that $V(t_k)\subseteq V(t_{k-1})$ for $k\in\N$.  Hence the sequence $\{\alpha(t_k)\},k\in\N$ is a nonincreasing, and the sequences $\{\beta(t_k)\},k\in\N$ and $\{\kappa(t_k)\},k\in\N$
are nondecreasing.  The equality $\lim_{k\to\infty} t_k=t^\star$ follows from \eqref{convalgo1}.   The inequality \eqref{convalgo2} yields
\begin{eqnarray*}
\lim_{k\to\infty} \x_k=\x^\star,\lim_{k\to\infty}\alpha(t_k)=\alpha(t^\star),\lim_{k\to\infty}\beta(t_k)=\beta(t^\star),\lim_{k\to\infty}\kappa(t_k)=\kappa(t^\star).
\end{eqnarray*}
\end{proof}
\section{Estimats of certain norms}
\begin{lemma}\label{pqdistest}
Let $\bp,\bq\in\Pi^n$, and $s(\bq,\bp)=\langle \bq,\bp\rangle/\langle\bp,\bp\rangle$.  Then 
the following statement holds:
\begin{enumerate}[\upshape (a)]
\item
\begin{eqnarray*}
&&0\le s(\bq,\bp) \le \frac{(n-1)p_{\max}}{(n-1)p_{\max}^2+(1-p_{\max})^2}, \quad p_{\max}=\max(p_1,\ldots,p_n)\ge 1/n,\\
&&\max\{s(\bq,\bp), \bq,\bp\in\Pi^n\}=\frac{\sqrt{n}+1}{2},\\
&&s(\bq,\frac{1}{n}\1)=1,\\
&&|1-s(\bq,\bp)|=\frac{|\langle \bq-\bp,\bp\rangle|}{\langle\bp,\bp\rangle}\le \frac{\|\bq-\bp\|}{\|\bp\|}\le \sqrt{n}\|\bq-\bp\|_1\le \sqrt{2n\cK(\bp||\bq)}.
\end{eqnarray*}
\item
\begin{eqnarray*}
 &&\|\bq -\frac{\langle \bq,\bp\rangle}{\|\bp\|^2}\bp\|_1\ge |1- \frac{\langle \bq,\bp\rangle}{\|\bp\|^2}|=|1-s(\bq,\bp)|,\\
&&2\|\bq -\frac{\langle \bq,\bp\rangle}{\|\bp\|^2}\bp\|_1\ge \|\bq-\bp\|_1,\\
&& \|\bq-\bp\|_1+|1- \frac{\langle \bq,\bp\rangle}{\|\bp\|^2}|\ge \|\bq -\frac{\langle \bq,\bp\rangle}{\|\bp\|^2}\bp\|_1.
\end{eqnarray*}
\item
\begin{eqnarray*}
\|\bq -\frac{\langle \bq,\bp\rangle}{\|\bp\|^2}\bp\|_1\le (\sqrt{n}+1)\sqrt{2\cK(\bp||\bq)}.
\end{eqnarray*}
\end{enumerate}
\end{lemma}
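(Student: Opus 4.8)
The plan is to prove (a) first, then deduce (b) by elementary triangle‑inequality bookkeeping, and finally combine (b) with (a) and Pinsker's inequality to obtain (c). Write $c=c(\bq,\bp)\defeq\langle\bq,\bp\rangle/\|\bp\|^2=s(\bq,\bp)$ for brevity; I may assume $\cK(\bp||\bq)<\infty$, since otherwise all statements involving it are vacuous.

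For (a): nonnegativity of $s$ is clear from $\bp,\bq\ge\0$. For the displayed upper bound I would use $\langle\bq,\bp\rangle=\sum_i q_ip_i\le p_{\max}\sum_i q_i=p_{\max}$, so $s\le p_{\max}/\|\bp\|^2$, and then lower‑bound $\|\bp\|^2=\sum_i p_i^2$ for a fixed value of $p_{\max}$: among probability vectors whose largest entry equals $p_{\max}$, convexity of $t\mapsto t^2$ shows the sum of squares is minimized by spreading the remaining mass $1-p_{\max}$ uniformly over the other $n-1$ coordinates (this assignment is feasible precisely because $p_{\max}\ge 1/n$), which gives $\|\bp\|^2\ge p_{\max}^2+(1-p_{\max})^2/(n-1)$; rearranging yields the claimed bound. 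To get $\max_{\bq,\bp}s=(\sqrt n+1)/2$ I would maximize the resulting one‑variable bound $\phi(t)=(n-1)t/(nt^2-2t+1)$ over $t=p_{\max}\in[1/n,1]$: a short computation gives $\phi'(t)\propto 1-nt^2$, so the unique interior critical point is $t=1/\sqrt n$, with $\phi(1/\sqrt n)=(n-1)/\bigl(2(\sqrt n-1)\bigr)=(\sqrt n+1)/2$, while $\phi(1/n)=\phi(1)=1$; equality is attained by taking $\bp$ with one coordinate $1/\sqrt n$ and the remaining mass spread uniformly, and $\bq$ the indicator of that coordinate. The identity $s(\bq,\tfrac1n\1)=1$ is the one‑line computation $\langle\bq,\tfrac1n\1\rangle=\tfrac1n=\|\tfrac1n\1\|^2$. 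For the final chain I would write $1-s=\langle\bp-\bq,\bp\rangle/\|\bp\|^2$, apply Cauchy--Schwarz to get $|1-s|\le\|\bq-\bp\|/\|\bp\|$, use $\|\bp\|\ge1/\sqrt n$ (Cauchy--Schwarz on $1=\sum_i p_i$) together with $\|\cdot\|\le\|\cdot\|_1$ to get $\le\sqrt n\,\|\bq-\bp\|_1$, and finish with Pinsker's inequality $\|\bq-\bp\|_1\le\sqrt{2\cK(\bp||\bq)}$ (see \cite{CT91}).

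For (b): with $c=s(\bq,\bp)$, the first inequality is $\|\bq-c\bp\|_1\ge\bigl|\sum_i(q_i-cp_i)\bigr|=|1-c|$ because $\|\bp\|_1=\|\bq\|_1=1$; the third is the triangle inequality $\|\bq-c\bp\|_1\le\|\bq-\bp\|_1+\|(1-c)\bp\|_1=\|\bq-\bp\|_1+|1-c|$; and the second follows by feeding the first inequality into $\|\bq-\bp\|_1\le\|\bq-c\bp\|_1+|1-c|\,\|\bp\|_1$. For (c), combine the third inequality of (b), $\|\bq-c\bp\|_1\le\|\bq-\bp\|_1+|1-c|$, with Pinsker on the first term and the final chain of (a) on the second term, namely $|1-c|=|1-s|\le\sqrt{2n\cK(\bp||\bq)}=\sqrt n\,\sqrt{2\cK(\bp||\bq)}$, and add.

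The only step that is more than routine is the extremal computation $\max s=(\sqrt n+1)/2$ in part (a), and even there the ``difficulty'' is purely bookkeeping: reducing to the single‑variable function $\phi$, checking that its critical point lies in $[1/n,1]$ and dominates the endpoint value $1$, and exhibiting the extremal pair $(\bq,\bp)$. Everything else reduces to Cauchy--Schwarz, the triangle inequality for $\|\cdot\|_1$, and the quoted Pinsker inequality.
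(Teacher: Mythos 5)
Your proposal is correct and follows essentially the same route as the paper: the same extremal reduction to the one-variable function of $p_{\max}$ for part (a), the same Cauchy--Schwarz/Pinsker chain, the same $\ell_1$ triangle-inequality bookkeeping for (b), and the same combination for (c). The only (harmless) differences are cosmetic — you bound $\|\bq-c\bp\|_1$ from below by the absolute coordinate sum rather than by the reverse triangle inequality on norms, and you are slightly more careful than the paper in checking that the critical point $t=1/\sqrt n$ lies in $[1/n,1]$ and beats the endpoints.
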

\begin{proof} (a)  It is straightforward to show
\begin{eqnarray*}
&&\max\{\langle \bq,\bp\rangle, \bq\in\Pi^n\}=p_{\max},\\
&&\arg\min\{\langle \bp, \bp\rangle,  \bp\in\Pi^n,p_{\max}=t,\}=(t, \frac{1-t}{n-1},\ldots,\frac{1-t}{n-1}),\\
&&\min\{\langle \bp, \bp\rangle,  \bp\in\Pi^n,p_{\max}=t,\}=t^2+\frac{(1-t)^2}{n-1}.
\end{eqnarray*}
This shows the first inequality of (a).  Next observe that
\begin{eqnarray*}
&&\arg\max\{ \frac{(n-1)t}{(n-1)t^2+(1-t)^2}, t\in\R\}=\frac{1}{\sqrt{n}},\\
&&\max\{ \frac{(n-1)t}{(n-1)t^2+(1-t)^2}, t\in\R\}=\frac{\sqrt{n}+1}{2}.
\end{eqnarray*}
This shows the second equality of  (a).  The third equality of (a) is straightforward.

We now justify the last statement of (a).  The first equality is straightforward.  The second inequality follows from the Cauchy-Schwarz inequality.  Recall that $\|\x\|\le \|\x\|_1$.  Next observe that $\|\bp\|\ge 1/\sqrt{n}$.  This shows the second inequality.
The last inequality follows from the Pinsker inequality.

\noindent (b)
As $\langle \bq,\bp\rangle\ge 0$ we deduce the first inequality of the lemma:
\begin{eqnarray*}
\|\bq -\frac{\langle \bq,\bp\rangle}{\|\bp\|^2}\bp\|_1\ge \big|\|\bq|\|_1- \|\frac{\langle \bq,\bp\rangle}{\|\bp\|^2}\bp\|_1\big|=|1- \frac{\langle \bq,\bp\rangle}{\|\bp\|^2}|.
\end{eqnarray*}
Observe next
\begin{eqnarray*}
&&\|\bq -\frac{\langle \bq,\bp\rangle}{\|\bp\|^2}\bp\|_1=\|\bq-\bp +(1-\frac{\langle \bq,\bp\rangle}{\|\bp\|^2})\bp\|_1\ge \\
&&\|\bq-\bp\|_1-\|(1-\frac{\langle \bq,\bp\rangle}{\|\bp\|^2})\bp\|_1=\|\bq-\bp\|_1 -|1- \frac{\langle \bq,\bp\rangle}{\|\bp\|^2}|.
\end{eqnarray*}
Thus implies the second inequality of the lemma. Use the triangle inequality for the above expression of $\|\bq -\frac{\langle \bq,\bp\rangle}{\|\bp\|^2}\bp\|_1$ to deduce the third inequality of the lemma.

\noindent (c) Combine the last inequalities of (b) and (a).
\end{proof}

We now show that the  inequality of (c) is sharp asymptotically up to at most a multiplicative factor of $1/\sqrt{\log a_n}$. where $a_n, n\in\N$ is a  slowly increasing to infinity, with $a_1\ge 2$:
\begin{eqnarray*}
\bp=(\frac{1}{\sqrt{n}},\frac{1-1/\sqrt{n}}{n-1},\ldots,\frac{1-1/\sqrt{n}}{n-1}),\bq=(1-\frac{1}{a_n}, \frac{1}{(n-1)a_n},\ldots,\frac{1}{(n-1)a_n}).
\end{eqnarray*}
Then
\begin{eqnarray*}
&\|\bq -\frac{\langle \bq,\bp\rangle}{\|\bp\|^2}\bp\|_1=\|\bq-\bp +(1-\frac{\langle \bq,\bp\rangle}{\|\bp\|^2})\bp\|_1\ge\\
&|1- \frac{\langle \bq,\bp\rangle}{\|\bp\|^2}| -\|\bq-\bp\|_1\ge \frac{\langle \bq,\bp\rangle}{\|\bp\|^2} -3\ge\frac{n}{4}\frac{1-1/a_n}{\sqrt{n}}-3=\frac{\sqrt{n}(1-1/a_n)}{4} -3,\\
&\cK(\bp||\bq)=-\frac{1}{\sqrt{n}}\log\bigl(\sqrt{n}(1-1/a_n)\bigr) +\bigl(1-\frac{1}{\sqrt{n}}\bigr)\log \bigl(a_n(1-1/\sqrt{n})\bigr).
\end{eqnarray*}

\end{document}